\newtheorem{theorem}{Theorem}
\newtheorem{lemma}[theorem]{Lemma}
\newtheorem{remark}[theorem]{Remark}
\theoremstyle{definition}
\let\oldproofname=\proofname
\renewcommand{\proofname}{\rm\bf{\oldproofname}}
\def\cX{\mathcal X}
\def\RR{\mathbb R}
\def\bE{\mathbf E}
\def\bx{\mathbf x}
\def\bc{\mathbf c}
\def\bt{\mathbf t}
\def\bK{\mathbf K}
\def\L2p{{L^2_{\rho_{\!_\cX}}}}
\def\H{\mathcal H}
\def\calX{\mathcal X}
\def\calH{\mathcal H}
\def\bw{{\mathbf w}}
\def\dsum{\displaystyle\sum}
\def\bv{\mathbf v}
\begin{document}

\title{Bias Correction for Regularized Regression and
its Application in Learning with Streaming Data}

\author{
Qiang Wu \\
Department of Mathematical Sciences\\
Middle Tennessee State University\\
Murfreesboro, TN 37132 \\
\texttt{qwu@mtsu.edu} \\
}

\date{}

\maketitle

\begin{abstract}
We propose an approach to reduce the bias of
ridge regression and regularization kernel network.
When applied to a single data set
the new algorithms have comparable learning performance
with the original ones.
When applied to incremental learning with block wise streaming data
 the new algorithms are more efficient due to bias reduction.
 Both theoretical characterizations and
 simulation studies are used to verify the effectiveness of these new algorithms.
\end{abstract}

\section{Introduction}
\label{sec:intro}

As modern technologies allow to collect data much easily,
the size of data sets is growing fast in both dimensionality and
number of instances. This makes big data become ubiquitous in many fields
and draw great attentions of researchers in  recent years.
In the statistics and machine learning context, many traditional data processing
tools and techniques become inviable due to the big size of the data.
New models and computational techniques are required. This had driven
the resurgence of the research in online learning and the use of parallel computing.

Online learning deals with streaming data.
Online algorithms update the knowledge
incrementally as new data come in.
The streaming data could be instance wise or block wise.
The instance wise streaming data could be processed
as block wise data. This may be preferred
in some particular application domains.
For instance, in the dynamic pricing problems
(see e.g. \cite{wang2014close, agrawal2014dynamic})
the price is usually not updated each time
when an instance of sales information becomes available,
because customers may not like the price changing too constantly.
When processing block wise streaming data, a base algorithm is applied to each
incoming data block and a coupling method is then used to update the knowledge
by combining the knowledge from the past blocks and the incoming block;
see e.g. \cite{chavent2014sliced, he2011incremental}.
In statistical learning theory where the knowledge is usually represented by
a target function, the simplest way to couple the information is to use the
average of the functions learnt from different blocks.
In learning with big data, the divide and conquer algorithm \cite{zhang2013divide} divides the whole data set into smaller subsets,
applies a base learning algorithm on each subset,
and takes the average of the learnt functions from  all subsets
as the target function for prediction purpose.
It is computationally efficient because the second stage
could be implemented via parallel computing.
Although the divide and conquer algorithm is different from the aforementioned online learning with block wise
 streaming data, they clearly share the same spirit --
 a base algorithm for a single data block is required and the average of the outputs from this algorithm over
 multiple data blocks is used.  A natural problem arising from these two frameworks is the choice of the
 base learning algorithm for a single data set. As an algorithm is efficient and optimal for a single data set,
 it is not necessarily efficient and optimal for learning with block wise data.

In this paper we focus on the regression learning problem where
a set of observations are collected for  $p$ predictors and a scalar response
variable. Assume they are  linked by $$ y_i = f^*(\bx_i)+\epsilon_i,\qquad i=1,2,\ldots, n,$$
where $\bx_i \in\RR^p$, $y_i\in\RR,$ and $\epsilon_i$  is the zero-mean noise.
The target is to recover the unknown true model $f^*$ as accurate as possible
to understand the impact of predictors and predict the response on unobserved data.
The ordinary least square (OLS) is the most traditional and well developed method. It assumes
a linear model and estimates the coefficients by minimizing the squared error between the responses
and the predictions.  The OLS estimator requires the inverse of the covariance matrix of the explanatory variables
and could be numerically unstable if the covariance matrix is singular or has very large condition number.
A variety of regularization approaches have been introduced to overcome the numerical instability and/or
for some other purposes (e.g. sparsity).
Typical regularized methods include  ridge regression \cite{hoerl1970ridge1,hoerl1970ridge2}, LASSO \cite{Tib},
 elastic net \cite{ZH} and many others.
The nonlinear extension of ridge regression could be implemented by regularization kernel network \cite{EPP}.
The data are first mapped to a feature space. Then  a linear model is built in the feature space which,
when projected back to the original space, becomes a nonlinear model.

Although different regularization techniques have different properties, they share some common features.
The estimators obtained from regularized regression are usually biased. The regularization is helpful to improve the
computational stability and reduce the variance. By trading off the bias and variance, regularization schemes
may lead to smaller prediction errors than unbiased estimators. Therefore, regularization theory has become
an important topic in the statistical learning context.

Regularization algorithms, such as the ridge regression, regularization kernel network, and support vector machines,
have been successful in a variety of regression and classification applications. However,
 they may be suboptimal  when they serve as base algorithms in learning
 with block wise streaming data or in the divide and conquer algorithm.
 When there are many data blocks, the regularization algorithm may provide good estimator for each data block.
 By coupling the estimators together,  the variance usually shrinks when more and more data blocks are taken into account.
 But  the bias may not shrink and prevent the algorithm to achieve optimal performance.
 To overcome this difficulty, adjustments are required to remove or reduce the bias of the algorithm.

 In the paper, we will propose an approach to
 correct the bias of ridge regression and regularization kernel network.
 The resulted two new algorithms and their properties
 will be described in Section \ref{sec:barr} and Section \ref{sec:barkn}.
 Their theoretical properties are proved in 
 Section \ref{sec:pf1} and Section \ref{sec:pf2}, respectively.
 In Section \ref{sec:block} we discuss
 why the new algorithms are effective
 in learning with block wise data.
 In Section \ref{sec:sim} simulations are used to illustrate their effectiveness
 from an empirical aspect. We close by Section \ref{sec:conclusion} with conclusions and discussions.

\subsection{Related Work}

The idea of bias correction has long history in statistics.  
For instance, bias correction to maximum likelihood estimation dates at least back to 1950s \cite{quenouille1956notes}
and a variety method were proposed later on; see e.g. \cite{mclachlan1980note,schaefer1983bias,firth1993bias,cribari2002nearly}.
Bias reduction to kernel density estimators was studied in 
\cite{breiman1977variable, abramson1982bandwidth, jones1995simple, chung2011likelihood}.
Bias correction to nonparametric estimation was studied in \cite{hall1990bias, park1997simple, yao2012bias}.

The existence of bias in ridge regression and its impact on statistical inference 
has been noticed since its invention \cite{hoerl1970ridge1, obenchain1977classical}.
In high dimensional linear models where the dimension greatly exceeds the sample size,
bias correction method was introduced in \cite{buhlmann2013statistical}  to correct the projection bias,
the difference of the true regression coefficient and its projection in the subspace spanned by the sample,
which appears because the sample cannot span the whole Euclidian space $\RR^p$ as $n\ll p.$
In \cite{zhang2014confidence, buhlmann2013statistical, javanmard2014confidence}, 
projection bias correction was introduced to LASSO in high dimensional linear models.
The purpose of projection bias is to construct accurate $p$ values to facilitate 
accurate statistical inference such as hypothesis testings and confidence intervals.
It seems the bias caused by regularization has minimal impact for this purpose.

As for regularization kernel network, its predictive consistency has been extensively studied in the literature;
see e.g. \cite{bousquet2002stability, zhang2003leave, devito2005model, wu2006learning, bauer2007regularization, 
caponnetto2007optimal, smale2007learning,  sun2009note, steinwart2009optimal} 
and many references therein. Its application was also extensively explored and shown successful in 
many problem domains.  But to my best knowledge, 
the idea of bias correction to improve this algorithm is novel.
 Note that bias reduction for regularized regression does not improve the learning performance on a single data set, 
 as illustrated in Section \ref{sec:sim}.
It is worthy of exploration because of its effectiveness in learning with streaming data or distributed regression.

 \section{Bias correction for ridge regression}
 \label{sec:barr}

 In linear regression, the response variable is assumed to linearly depend on the explanatory variables, i.e.
 $$ y_i = \bw^\top \bx_i + b + \epsilon_i$$
 with some $\bw\in\RR^p$ and $b\in\RR.$
 Ridge regression  minimizes the penalized mean squared prediction error  on the observed data
 $$\frac 1n \sum_{i=1}^n (y_i-\bw^\top \bx_i -b)^2 + \lambda \|\bw\|^2,$$
where $\lambda>0$ is the regularization parameter used to trade off the fitting error and model complexity.
Let $\bar\bx$ denote the sample mean of $\bx_i$'s and $\bar y$ be the sample mean of $y_i$'s. Denote by
 $\tilde X  = [\bx_1-\bar\bx, \ \bx_2-\bar\bx, \ \ldots, \bx_n-\bar\bx]$ the centered data matrix for the explanatory variables
and $\tilde Y = (y_1-\bar y, y_2-\bar y, \ldots, y_n-\bar y)^\top$ the vector of centered response values.
Then the sample covariance matrix of $\bx$ is $\hat\Sigma =\frac 1 n \tilde X \tilde X^T$
and the solution to the ridge regression is given by
$$\hat \bw = \frac 1 n  \left(\lambda I + \hat\Sigma\right)^{-1} \tilde X^\top \tilde Y$$  and $\hat b = \bar y - \hat\bw \bar \bx.$
Here and in the sequel, $I$ denotes
the identity matrix (or the identity operator).

 The solution $\hat\bw$ is a biased estimator for $\bw.$
 Define $$b(n, \lambda) = \|\bE[\hat\bw] - \bw\|$$ and $$v(n,\lambda) = \bE\left[\|\hat\bw-\bE[\hat\bw]\|^2\right].$$
 Then $b(n, \lambda)$ is the Euclidian norm of the bias and $v(n,\lambda)$ is the variance. The mean squared error is
 given by $$\hbox{mse}(n,\lambda) =\bE\left[\|\hat\bw -\bw\|^2\right] = b^2(n, \lambda) + v(n, \lambda).$$

 Denote by $\Sigma$ the covariance matrix of the explanatory variables $\bx$. Let
 $\sigma_i$ be the eigenvalues of $\Sigma$  and $\bv_i$ the corresponding  eigenvectors.
 Then $$\Sigma = \sum_{i=1}^ p \sigma_i \bv_i\bv_i^\top.$$
 The vectors  $\bv_i$ are the principal components.
 The following theorem characterizes the bias and variance of ridge regression.

 \begin{theorem} \label{thm:ridge}
 If $\bx$ is bounded and $\bE[\epsilon^2]<\infty$, then
 \begin{enumerate}
 \item[{\rm (i)}]  $\bE[\hat\bw]$ converges $ (\lambda I + \Sigma)^{-1}\Sigma \bw$ as $n\to\infty.$
 \item[{\rm (ii)}] If $\bw = \sum_{i=1}^p c_i \bv_i$, then
 $$\lim\limits_{n\to \infty} b(n, \lambda) 
 =\sqrt{\sum_{i=1}^p \frac {\lambda^2 c_i^2}{(\lambda + \sigma_i)^2}}.$$
 \item[{\rm (iii)}] $v(n, \lambda) = O(\frac 1 {n \lambda^2}).$
 \end{enumerate}
 \end{theorem}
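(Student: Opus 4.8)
The plan is to first rewrite the estimator in a form that cleanly separates the design from the noise, and then dispatch the three claims in turn. Substituting the model $\tilde Y = \tilde X^\top \bw + \tilde\epsilon$, where $\tilde\epsilon_i = \epsilon_i - \bar\epsilon$ (the intercept $b$ drops out under centering), into the closed form for $\hat\bw$ and using $\frac1n \tilde X \tilde X^\top = \hat\Sigma$, I obtain the decomposition
$$\hat\bw = (\lambda I + \hat\Sigma)^{-1}\hat\Sigma\, \bw + \tfrac1n (\lambda I + \hat\Sigma)^{-1}\tilde X \tilde\epsilon =: A + B,$$
where $A$ depends only on the design $\{\bx_i\}$ and $B$ carries the noise. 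Conditioning on the design and using $\bE[\tilde\epsilon\mid \bx]=0$ shows $\bE[B\mid\bx]=0$, so $\bE[\hat\bw]=\bE[A]=\bE[(\lambda I + \hat\Sigma)^{-1}\hat\Sigma]\,\bw$.

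For (i), since $\bx$ is bounded the law of large numbers gives $\hat\Sigma\to\Sigma$ almost surely. The matrix map $g(M):=(\lambda I + M)^{-1}M = I - \lambda(\lambda I + M)^{-1}$ is continuous on positive semidefinite $M$ because $\lambda>0$ keeps $\lambda I + M$ invertible, and its operator norm is bounded by $1$ uniformly over $M\succeq 0$ (its eigenvalues are $\sigma/(\lambda+\sigma)\in[0,1)$). Thus the continuous mapping theorem together with dominated convergence yields $\bE[g(\hat\Sigma)]\to g(\Sigma)=(\lambda I+\Sigma)^{-1}\Sigma$, which is (i). For (ii) I pass the norm through the limit: the limiting bias vector is $g(\Sigma)\bw-\bw=-\lambda(\lambda I+\Sigma)^{-1}\bw$, and expanding $\bw=\sum_i c_i\bv_i$ in the eigenbasis of $\Sigma$ gives $-\lambda\sum_i c_i(\lambda+\sigma_i)^{-1}\bv_i$; taking the Euclidean norm produces the claimed $\sqrt{\sum_i \lambda^2 c_i^2/(\lambda+\sigma_i)^2}$.

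For (iii) I use the split $\hat\bw-\bE[\hat\bw]=(A-\bE[A])+B$. The cross term vanishes in expectation after conditioning on the design (again $\bE[B\mid\bx]=0$), so $v(n,\lambda)=\bE\|A-\bE[A]\|^2+\bE\|B\|^2$. The noise term is handled by conditioning: with $P$ the centering projection and $\sigma^2=\bE[\epsilon^2]$, one gets $\bE[\|B\|^2\mid\bx]=\frac{\sigma^2}{n^2}\mathrm{tr}\big((\lambda I+\hat\Sigma)^{-2}\tilde X P\tilde X^\top\big)$; bounding $\tilde X P\tilde X^\top\preceq \tilde X\tilde X^\top = n\hat\Sigma$ and then $\mathrm{tr}((\lambda I+\hat\Sigma)^{-2}\hat\Sigma)\le \lambda^{-2}\mathrm{tr}(\hat\Sigma)$, together with boundedness of $\bx$ (so $\mathrm{tr}(\hat\Sigma)=O(1)$), gives $\bE\|B\|^2=O(1/(n\lambda^2))$. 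The design term is the delicate one: using that $\bE[A]$ minimizes $c\mapsto\bE\|A-c\|^2$, I bound $\bE\|A-\bE[A]\|^2\le \bE\|g(\hat\Sigma)\bw-g(\Sigma)\bw\|^2$, and then exploit that $g$ is Lipschitz with constant $1/\lambda$ — which follows from $Dg(M)[H]=\lambda(\lambda I+M)^{-1}H(\lambda I+M)^{-1}$ and $\|(\lambda I+M)^{-1}\|\le 1/\lambda$ — to reach $\bE\|A-\bE[A]\|^2\le \lambda^{-2}\|\bw\|^2\,\bE\|\hat\Sigma-\Sigma\|^2$. A standard second-moment estimate for the bounded i.i.d. summands in $\hat\Sigma$ supplies $\bE\|\hat\Sigma-\Sigma\|^2=O(1/n)$, so this term is also $O(1/(n\lambda^2))$, and adding the two pieces finishes (iii).

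The main obstacle is the design-variance term in (iii): a naive argument only controls the fluctuations of $\hat\Sigma$, so the real work is in locating the $\lambda^{-2}$ factor through the Lipschitz constant of $g$ and the variance-minimizing property of the mean, and in supplying the $O(1/n)$ concentration of $\hat\Sigma$ from the boundedness of $\bx$. The remaining parts become routine once the $A+B$ decomposition is in place.
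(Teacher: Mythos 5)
Your proposal is correct and follows essentially the same route as the paper: the same decomposition of $\hat\bw$ into the design term $(\lambda I+\hat\Sigma)^{-1}\hat\Sigma\,\bw$ plus a conditionally mean-zero noise term, the variance-minimizing property of the mean for the design fluctuation, a $1/\lambda$-type Lipschitz bound on $M\mapsto(\lambda I+M)^{-1}M$, and the $O(1/n)$ second-moment concentration of $\hat\Sigma$. The only differences are cosmetic: you prove (i) by the law of large numbers with dominated convergence where the paper gives a quantitative $O(1/\sqrt n)$ rate, and you bound the noise term by a trace computation where the paper expands $\bE\|\tfrac1n\tilde XR\|^2$ directly.
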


 Without loss of generality, we assume the eigenvalues are in decreasing order, i.e., $\sigma_i\ge \sigma_{i+1}$.
 Then $\frac{\lambda}{\lambda+\sigma_i}$ is increasing.  Theorem \ref{thm:ridge} tells that, for a fixed $\lambda$,
 the bias of ridge regression will be small  if the true model $\bw$ heavily depends on the first several principle components.
 Conversely, if $\bw$ heavily depends on the last several principle components, the bias will be large.

 According to Theorem \ref{thm:ridge} (i),  the asymptotic bias
 of ridge estimator is $-\lambda (\lambda I + \Sigma)^{-1} \bw$.
 If we can subtract it from the ridge regression estimator,
 we are able to obtain an asymptotically unbiased estimator
 $\hat\bw + \lambda (\lambda I + \Sigma)^{-1} \bw$.
 However, this is not computationally feasible because both $\Sigma$ and $\bw$ are unknown.
 Instead, we propose to replace $\Sigma$ by its sample version $\hat\Sigma$ and
 $\bw$ by the ridge estimator $\hat\bw$. The resulted new estimator,
 which we call bias corrected ridge regression estimator, becomes
\begin{equation}\label{eq:wa}
\hat\bw^\sharp = \hat\bw + \lambda(\lambda I + \hat\Sigma)^{-1} \hat\bw.
\end{equation}

Since the bias correction uses an estimated quantity,
this new estimator is still biased. But the bias is reduced.
Let $$b^\sharp(n, \lambda) = \|\bE[\hat\bw^\sharp] - \bw\|$$ and $$v^\sharp(n,\lambda) = \bE\left[\|\hat\bw^\sharp-\bE[\hat\bw^\sharp]\|^2\right].$$
We have the following conclusion.

 \begin{theorem} \label{thm:barr}
  If $\bx$ is bounded and $\bE[\epsilon^2]<\infty$, then
 \begin{enumerate}
 \item[{\rm (i)}] $\bE[\hat\bw^\sharp]$ converges to
 $(\lambda I + \Sigma)^{-2} (2\lambda I +\Sigma)\Sigma\bw$ as $n\to\infty.$
 \item[{\rm (ii)}] If $\bw = \sum_{i=1}^p c_i \bv_i$, then
 $$\lim\limits_{n\to \infty} b^\sharp(n, \lambda) 
 =\sqrt{\sum_{i=1}^p \frac {\lambda^4 c_i^2}{(\lambda + \sigma_i)^4}}.$$
 \item[{\rm (iii)}] $v^b(n, \lambda) = O(\frac 1 {n \lambda^2 }).$
 \end{enumerate}
 \end{theorem}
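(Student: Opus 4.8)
The plan is to put $\hat\bw^\sharp$ into a single rational function of $\hat\Sigma$ acting on a sample cross-covariance, so that (i) and (ii) reduce to one limit plus spectral bookkeeping, and to treat the variance (iii) by a resolvent perturbation estimate that keeps the dependence on $\lambda$ explicit. Writing $\hat g=\frac1n\sum_{i=1}^n(\bx_i-\bar\bx)(y_i-\bar y)$ for the sample cross-covariance, so that $\hat\bw=(\lambda I+\hat\Sigma)^{-1}\hat g$, the definition \eqref{eq:wa} gives
$$\hat\bw^\sharp=\bigl[I+\lambda(\lambda I+\hat\Sigma)^{-1}\bigr]\hat\bw=(\lambda I+\hat\Sigma)^{-2}(2\lambda I+\hat\Sigma)\,\hat g.$$
I would record two algebraic facts that drive everything: the eigenvalue function $t\mapsto (2\lambda+t)/(\lambda+t)^2$ is decreasing on $[0,\infty)$ with maximum $2/\lambda$ at $t=0$, so $\|(\lambda I+\hat\Sigma)^{-2}(2\lambda I+\hat\Sigma)\|_{\mathrm{op}}\le 2/\lambda$ uniformly in the data; and $(2\lambda I+\hat\Sigma)\hat\Sigma=(\lambda I+\hat\Sigma)^2-\lambda^2 I$, whence $(\lambda I+\hat\Sigma)^{-2}(2\lambda I+\hat\Sigma)\hat\Sigma=I-\lambda^2(\lambda I+\hat\Sigma)^{-2}$.

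For (i), the boundedness of $\bx$ and the strong law of large numbers give $\hat\Sigma\to\Sigma$ a.s., while $\hat g\to\mathrm{Cov}(\bx,y)=\Sigma\bw$ a.s. (using $y=\bw^\top\bx+b+\epsilon$ and $\bE[\epsilon]=0$). Since the matrix factor is continuous in $\hat\Sigma$ with operator norm at most $2/\lambda$, it follows that $\hat\bw^\sharp\to(\lambda I+\Sigma)^{-2}(2\lambda I+\Sigma)\Sigma\bw$ a.s. To pass to expectations I would note that $\bE\|\hat\bw^\sharp\|^2\le\frac{4}{\lambda^2}\bE\|\hat g\|^2=O(1)$ uniformly in $n$ (finite since $\bE[\epsilon^2]<\infty$), which yields uniform integrability and hence convergence of $\bE[\hat\bw^\sharp]$ to the stated limit. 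Part (ii) is then immediate from the second algebraic identity: with $\bw=\sum_ic_i\bv_i$,
$$\lim_{n\to\infty}\bigl(\bE[\hat\bw^\sharp]-\bw\bigr)=-\lambda^2(\lambda I+\Sigma)^{-2}\bw=-\sum_{i=1}^p\frac{\lambda^2c_i}{(\lambda+\sigma_i)^2}\bv_i,$$
and taking Euclidean norms gives $\sqrt{\sum_i\lambda^4c_i^2/(\lambda+\sigma_i)^4}$.

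For the variance (iii), which I expect to be the main obstacle, I would use $v^\sharp(n,\lambda)=\bE\|\hat\bw^\sharp-\bE[\hat\bw^\sharp]\|^2\le\bE\|\hat\bw^\sharp-c\|^2$ for the deterministic choice $c=(\lambda I+\Sigma)^{-2}(2\lambda I+\Sigma)\Sigma\bw$. Writing $\hat g=\hat\Sigma\bw+\hat\xi$ with $\hat\xi=\frac1n\sum_i(\bx_i-\bar\bx)(\epsilon_i-\bar\epsilon)$, and abbreviating $A(\hat\Sigma)=(\lambda I+\hat\Sigma)^{-2}(2\lambda I+\hat\Sigma)$ and $B(\hat\Sigma)=I-\lambda^2(\lambda I+\hat\Sigma)^{-2}$, the estimator splits as
$$\hat\bw^\sharp-c=\bigl[B(\hat\Sigma)-B(\Sigma)\bigr]\bw+A(\hat\Sigma)\hat\xi.$$
The noise term is controlled pathwise: $\bE\|A(\hat\Sigma)\hat\xi\|^2\le\frac{4}{\lambda^2}\bE\|\hat\xi\|^2=O\!\left(\frac1{n\lambda^2}\right)$, using the independence and zero mean of the $\epsilon_i$ together with $\bE[\epsilon^2]<\infty$. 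For the perturbation term I would invoke the resolvent identity with $R=(\lambda I+\Sigma)^{-1}$ and $\hat R=(\lambda I+\hat\Sigma)^{-1}$,
$$\hat R^2-R^2=\hat R^2(\Sigma-\hat\Sigma)R+\hat R(\Sigma-\hat\Sigma)R^2,$$
and the bounds $\|\hat R\|_{\mathrm{op}},\|R\|_{\mathrm{op}}\le1/\lambda$ to obtain $\|B(\hat\Sigma)-B(\Sigma)\|_{\mathrm{op}}=\lambda^2\|\hat R^2-R^2\|_{\mathrm{op}}\le\frac2\lambda\|\hat\Sigma-\Sigma\|_{\mathrm{op}}$; hence $\bE\|[B(\hat\Sigma)-B(\Sigma)]\bw\|^2\le\frac{4\|\bw\|^2}{\lambda^2}\bE\|\hat\Sigma-\Sigma\|_{\mathrm{op}}^2=O\!\left(\frac1{n\lambda^2}\right)$. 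The two crude facts remaining are $\bE\|\hat\xi\|^2=O(1/n)$ and $\bE\|\hat\Sigma-\Sigma\|^2=O(1/n)$; both are standard averages of $n$ independent bounded (respectively square-integrable) contributions, with the cross terms killed by $\bE[\epsilon]=0$, and the extra centering corrections from $\bar\bx,\bar y,\bar\epsilon$ are of strictly smaller order. The delicate point throughout is keeping the powers of $\lambda$ honest through the resolvent expansion so that the final rate is exactly $O(1/(n\lambda^2))$ rather than a worse power.
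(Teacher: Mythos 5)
Your proposal is correct and follows essentially the same route as the paper: the same closed form $\hat\bw^\sharp=(\lambda I+\hat\Sigma)^{-2}(2\lambda I+\hat\Sigma)\hat g$, the same key identity $(2\lambda I+\hat\Sigma)\hat\Sigma=(\lambda I+\hat\Sigma)^2-\lambda^2 I$ reducing the bias to $\lambda^2\bigl[(\lambda I+\hat\Sigma)^{-2}-(\lambda I+\Sigma)^{-2}\bigr]\bw$, the same $2/\lambda$ operator bound, and the same variance decomposition into a $\hat\Sigma$-perturbation term and a noise term, each $O(1/(n\lambda^2))$ via the $O(1/n)$ concentration of $\hat\Sigma$ and of $\frac1n\sum_i(\bx_i-\bar\bx)\epsilon_i$. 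The only (immaterial) deviation is in part (i), where you pass to the limit of expectations via almost sure convergence plus uniform integrability, whereas the paper bounds $\|\bE[\hat\bw^\sharp]-c\|$ directly by $\frac{4\|\bw\|}{\lambda}\bE\|\hat\Sigma-\Sigma\|=O(n^{-1/2})$; both are valid.
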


Since $\frac \lambda{\lambda+\sigma_i}<1$,
the asymptotic bias $\hat \bw ^\sharp$ is smaller.
The bias reduction could be significant
if the true model depends only on the first several principle components.
We also remark that, although $v(n, \lambda)$ and $v^\sharp(n, \lambda)$
are of the same order,
$v^\sharp(n, \lambda)$ is found slightly larger in simulations.
The overall performance, as measured by the mean squared error,
 of these two estimators is  comparable when used in learning with a single data set.

\section{Proofs of Theorem \ref{thm:ridge} and Theorem \ref{thm:barr}}
\label{sec:pf1}

In this section we proveTheorem \ref{thm:ridge} and Theorem \ref{thm:barr}. 
To this end, we first introduce several useful lemmas.
In our analysis, we will deal with vector or operator valued random variables. 
We need the following inequalities for Hilbert space valued random variables. 

\begin{lemma} \label{lem:varbd}
Let $\xi$ be a random variable with values in a Hilbert space $\calH$. Then 
for any $h\in\calH$ we have 
$$\bE\left[\left\|\xi-\bE[\xi]\right\|^2\right] \le \bE\left[\|\xi-h\|^2\right].$$
\end{lemma}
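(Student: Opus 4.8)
The plan is to reduce the claim to the Hilbert-space version of the bias--variance decomposition, namely that the centered random vector $\xi-\bE[\xi]$ is orthogonal, in expectation, to any fixed vector. This is exactly the statement that the mean is the least-squares minimizer, and the inequality will then follow because the only extra term in the decomposition is a nonnegative squared norm.

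First I would introduce the splitting $\xi-h=(\xi-\bE[\xi])+(\bE[\xi]-h)$, valid for any fixed $h\in\calH$, and expand the squared norm using the inner product on $\calH$:
$$\|\xi-h\|^2=\|\xi-\bE[\xi]\|^2+2\langle \xi-\bE[\xi],\,\bE[\xi]-h\rangle+\|\bE[\xi]-h\|^2.$$
Next I would take expectations of both sides. Since $\bE[\xi]-h$ is a deterministic vector, linearity of expectation and of the inner product give $\bE\langle \xi-\bE[\xi],\,\bE[\xi]-h\rangle=\langle \bE[\xi-\bE[\xi]],\,\bE[\xi]-h\rangle=0$, because $\bE[\xi-\bE[\xi]]=0$. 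Hence the cross term drops out and we obtain
$$\bE\left[\|\xi-h\|^2\right]=\bE\left[\|\xi-\bE[\xi]\|^2\right]+\|\bE[\xi]-h\|^2.$$
Finally, discarding the nonnegative term $\|\bE[\xi]-h\|^2\ge 0$ yields the desired inequality, with equality precisely when $h=\bE[\xi]$.

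The only point requiring care is the measure-theoretic bookkeeping in infinite dimensions: one must assume $\xi$ is Bochner integrable (so that $\bE[\xi]$ exists as an element of $\calH$) and square integrable (so that both sides are finite), and one must justify interchanging the expectation with the fixed inner product. The latter is immediate from the continuity and linearity of the map $v\mapsto\langle v,\,\bE[\xi]-h\rangle$ together with the defining property of the Bochner integral, namely $\bE\langle \xi,u\rangle=\langle \bE[\xi],u\rangle$ for every fixed $u\in\calH$. I expect this verification to be the main, though entirely routine, obstacle; once it is in place, the algebraic identity above closes the argument.
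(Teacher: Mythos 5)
Your proof is correct and is essentially the same as the paper's: both expand the squared norm via the decomposition $\xi-h=(\xi-\bE[\xi])+(\bE[\xi]-h)$, observe the cross term vanishes in expectation, and drop the nonnegative term $\|\bE[\xi]-h\|^2$. The remarks on Bochner integrability are a reasonable extra precaution but the argument is the same.
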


\begin{proof}
The proof is quite direct: 
\begin{align*}
\bE\left[\| \xi-\bE[\xi]\|^2\right] & = \bE\left[\left\|(\xi-h)-(\bE[\xi]-h)\right\|^2\right] \\
&= \bE\left[\|\xi-h\|^2\right] -2 \bE\left[\langle \xi-h,\ \bE[\xi]-h\rangle\right] + \|\bE[\xi]-h\|^2 \\
& = \bE\left[\|\xi-h\|^2\right] -\|\bE[\xi]-h\|^2  \\
& \le \bE \left[\|\xi-h\|^2\right].
\end{align*}
\end{proof}

\begin{lemma}\label{lem:var}
Let $\calH$ be a Hilbert space and $\xi$ be a random variable with values in $\calH$.
Assume that $\|\xi\|<M$ almost surely. Let $\{\xi_1,\, \xi_2, \ldots, \xi_n\}$ be
a sample of $n$ independent observations for $\xi.$
Then $$\bE\left[\left\|\frac 1n \sum_{i=1}^n \xi_i -\bE[\xi]\right\|^2\right] \le \frac {M^2}{n}
\qquad\hbox{and} \qquad 
 \bE\left[\left\|\frac 1n \sum_{i=1}^n \xi_i -\bE[\xi]\right\|\right]   \le \frac {M}{\sqrt n}.$$
\end{lemma}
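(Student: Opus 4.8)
The plan is to establish the second-moment bound by a direct variance expansion and then deduce the first-moment bound from it via Jensen's inequality. Write $\mu = \bE[\xi]$, so that the centered variables $\xi_i - \mu$ are independent with mean zero. First I would pull the factor $1/n$ out of the norm and expand the squared norm through the inner product:
$$\bE\left[\left\|\frac 1n \sum_{i=1}^n \xi_i - \mu\right\|^2\right] = \frac{1}{n^2}\sum_{i=1}^n\sum_{j=1}^n \bE\left[\langle \xi_i - \mu,\ \xi_j - \mu\rangle\right].$$

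The key step is to argue that the off-diagonal terms vanish. For $i \neq j$, independence of $\xi_i$ and $\xi_j$ together with the bilinearity of the inner product gives $\bE[\langle \xi_i - \mu,\ \xi_j - \mu\rangle] = \langle \bE[\xi_i - \mu],\ \bE[\xi_j - \mu]\rangle = 0$, since each centered variable has mean zero. Only the $n$ diagonal terms survive, each equal to $\bE[\|\xi - \mu\|^2]$, so the double sum collapses to $\frac{1}{n}\bE[\|\xi - \mu\|^2]$. To finish the first claim I would bound this single-variable variance by invoking Lemma \ref{lem:varbd} with $h = 0$, which yields $\bE[\|\xi - \mu\|^2] \le \bE[\|\xi\|^2] \le M^2$, where the last inequality uses the almost-sure bound $\|\xi\| < M$.

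For the second inequality I would apply Jensen's inequality to the concave map $t \mapsto \sqrt t$ (equivalently Cauchy--Schwarz), which gives $\bE[\|S\|] \le \sqrt{\bE[\|S\|^2]}$ for $S = \frac 1n \sum_{i=1}^n \xi_i - \mu$. Combining this with the bound just established produces $\bE[\|S\|] \le M/\sqrt n$.

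The only point requiring genuine care is the orthogonality-in-expectation of the cross terms for Hilbert-space-valued variables; this is the infinite-dimensional analogue of the uncorrelatedness of independent scalar random variables, and it follows from the bilinearity of the inner product together with independence (for instance by conditioning on $\xi_j$ and using that $\bE[\xi_i - \mu]$ is a fixed vector). Everything else is a routine computation, so I do not anticipate a substantive obstacle.
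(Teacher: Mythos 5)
Your proposal is correct and follows essentially the same route as the paper: expand the squared norm via the inner product, use independence to eliminate the cross terms, bound the resulting single-variable variance by $\bE[\|\xi\|^2]\le M^2$, and obtain the first-moment bound from the second-moment bound by Cauchy--Schwarz (Jensen). The only cosmetic difference is that you center the variables before expanding while the paper expands first and collects the $-\|\bE[\xi]\|^2$ term afterwards; the two computations are algebraically identical.
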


\begin{proof} 
Since $\bE[\xi_i]=\bE[\xi]$ for all $i$ and $\xi_i$ are mutually independent, we have 
\begin{align*}
 \bE\left[\left\|\frac 1n \sum_{i=1}^n \xi_i -\bE[\xi]\right\|^2\right] & = \bE\left[\left \|\frac 1 n \sum_{i=1}^n \xi_i\right\|^2\right] 
 -2\bE\left[\left\langle \frac 1 n \sum_{i=1}^n \xi_i,\, \bE[\xi]\right\rangle\right] + \|\bE[\xi]\|^2 \\
 & = \frac 1{n^2} \sum_{i=1}^n\sum_{j=1}^n \bE\left\langle \xi_i, \, \xi_j\right\rangle - \|\bE[\xi]\|^2 \\
 & = \frac 1 {n} \bE\left[\|\xi\|^2\right] - \frac {1} {n}\| \bE[\xi]\|^2 \\
 & \le \frac 1n \bE\left[\|\xi\|^2\right] \\
 & \le \frac {M^2}{n}.
\end{align*}
This proves the first inequality. The second one follows from the first one and Schwartz inequality.
\end{proof}

In the sequel, we assume $\bx$ is uniformly bounded by $M\ge 0.$ 

\begin{lemma}\label{lem:Sigma}
We have
$$ \bE\left[\|\hat\Sigma-\Sigma\|^2\right] \le \frac{10M^4}{n}  \qquad \hbox{and} \qquad 
\bE\left[\|\hat\Sigma-\Sigma\|\right] \le M^2\sqrt{\frac{10}{n}}$$
\end{lemma}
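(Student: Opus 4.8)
The plan is to reduce everything to Lemma \ref{lem:var} by writing $\hat\Sigma-\Sigma$ as a combination of two averages of i.i.d.\ bounded Hilbert-space-valued random variables. Throughout I would regard $p\times p$ matrices as elements of the Hilbert space of Hilbert--Schmidt operators with inner product $\langle A,B\rangle=\mathrm{tr}(A^\top B)$, so that $\|\cdot\|$ is the Hilbert--Schmidt (Frobenius) norm and Lemma \ref{lem:var} applies verbatim. The first step is to remove the sample centering: setting $\mu=\bE[\bx]$ and using $\hat\Sigma=\frac1n\sum_{i=1}^n\bx_i\bx_i^\top-\bar\bx\bar\bx^\top$ together with $\Sigma=\bE[\bx\bx^\top]-\mu\mu^\top$, the difference splits as
$$\hat\Sigma-\Sigma=\Big(\frac1n\sum_{i=1}^n\bx_i\bx_i^\top-\bE[\bx\bx^\top]\Big)-\big(\bar\bx\bar\bx^\top-\mu\mu^\top\big).$$

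The first bracket is an empirical average of the i.i.d.\ rank-one operators $\xi_i=\bx_i\bx_i^\top$. A one-line computation gives $\|\bx\bx^\top\|^2=\mathrm{tr}(\bx\bx^\top\bx\bx^\top)=\|\bx\|^4$, hence $\|\bx\bx^\top\|=\|\bx\|^2\le M^2$. Applying Lemma \ref{lem:var} with the bound $M^2$ then yields $\bE\big[\|\frac1n\sum_{i=1}^n\bx_i\bx_i^\top-\bE[\bx\bx^\top]\|^2\big]\le M^4/n$.

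For the second bracket I would use the telescoping identity $\bar\bx\bar\bx^\top-\mu\mu^\top=\bar\bx(\bar\bx-\mu)^\top+(\bar\bx-\mu)\mu^\top$ together with the rank-one norm identity $\|\mathbf{a}\mathbf{b}^\top\|=\|\mathbf{a}\|\,\|\mathbf{b}\|$. Since $\|\bar\bx\|\le M$ and $\|\mu\|\le M$, the triangle inequality gives $\|\bar\bx\bar\bx^\top-\mu\mu^\top\|\le 2M\|\bar\bx-\mu\|$. Applying Lemma \ref{lem:var} to the vectors $\bx_i$ (with bound $M$) gives $\bE[\|\bar\bx-\mu\|^2]\le M^2/n$, and therefore $\bE[\|\bar\bx\bar\bx^\top-\mu\mu^\top\|^2]\le 4M^4/n$.

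Combining the two estimates via $(a+b)^2\le 2a^2+2b^2$ gives $\bE[\|\hat\Sigma-\Sigma\|^2]\le 2(M^4/n+4M^4/n)=10M^4/n$, which is exactly the claimed constant; the second inequality then follows immediately from Schwarz (Jensen), $\bE[\|\hat\Sigma-\Sigma\|]\le(\bE[\|\hat\Sigma-\Sigma\|^2])^{1/2}\le M^2\sqrt{10/n}$. The only delicate point is the sample-mean centering: a naive attempt to apply Lemma \ref{lem:var} directly to the summands $(\bx_i-\bar\bx)(\bx_i-\bar\bx)^\top$ would fail because these are not independent (they all share $\bar\bx$), so isolating the correction term $\bar\bx\bar\bx^\top-\mu\mu^\top$ and controlling it separately through $\bE[\|\bar\bx-\mu\|^2]$ is the crux of the argument.
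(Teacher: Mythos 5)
Your proof is correct and follows essentially the same route as the paper: the same decomposition of $\hat\Sigma-\Sigma$ into the uncentered second-moment average plus the correction term $\bar\bx\bar\bx^\top-\mu\mu^\top$, the same two applications of Lemma \ref{lem:var} (to $\bx\bx^\top$ in Frobenius norm and to $\bx$), the same bound $\|\bar\bx\bar\bx^\top-\mu\mu^\top\|\le 2M\|\bar\bx-\mu\|$, and the same final combination yielding the constant $10$. The only cosmetic difference is that the paper passes from the operator norm to the Frobenius norm via $\|\cdot\|\le\|\cdot\|_F$, whereas you work in the Hilbert--Schmidt norm throughout.
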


\begin{proof} 
Let $\mu=\bE[\bx]$ be the mean of $\bx.$ Note that 
$\Sigma =\bE\left[(\bx-\mu)(\bx-\mu)^\top\right]= \bE[\bx\bx^\top]-\mu\mu^\top$ and similarly 
$$\hat\Sigma= \frac 1{n}\sum_{i=1}^n(\bx_i-\bar \bx)( \bx_i-\bar \bx)^\top
= \frac 1{n} \sum_{i=1}^n \bx_i \bx_i^\top -\bar\bx\bar \bx^\top.$$ 
Thus, 
\begin{align} \left\|\hat\Sigma-\Sigma\right\|^2
& \le  2 \left\|\dfrac 1{n} \dsum_{i=1}^n\bx_i\bx_i^\top -\bE\left[\bx\bx^\top\right]\right\|^2  
+ 2 \left\|\bar \bx \bar \bx^\top - \mu\mu^\top\right\|^2 \nonumber\\
& \le  2 \left\|\dfrac 1{n} \dsum_{i=1}^n\bx_i\bx_i^\top -\bE\left[\bx\bx^\top\right]\right\|_F^2 + 8M^2 \|\bar \bx-\mu\|^2, \label{eq:1}
\end{align}
where  $\|\cdot\|_F$ represent the Frobenius norm and we have used the fact that $\|\cdot\|\le \|\cdot\|_F$ for all matrices. 

Recall that matrices of $d\times d$ form a Hilbert space with  the Frobenius norm $\|\cdot\|_F.$ 
Applying Lemma \ref{lem:var} to $\xi=\bx\bx^\top$ which satisfies 
$\|\xi\|_F=\|\bx\|^2\le M^2$, we obtain 
\begin{equation} \bE\left[\left\|\dfrac 1{n} \dsum_{i=1}^n\bx_i\bx_i^\top -\bE\left[\bx\bx^\top\right]\right\|_F^2\right] \le \frac{M^4}{n}.
\label{eq:2} \end{equation}
Next we apply Lemma \ref{lem:var} to $\xi=\bx$  and obtain 
\begin{equation}\label{eq:3} \bE\left[\|\bar\bx-\mu\|^2\right]\le \frac {M^2}{n}.\end{equation}
Then the first inequality follows by taking expectation on both sides of \eqref{eq:1} and applying \eqref{eq:2} and \eqref{eq:3}.
The second one follows from the first one and Schwartz inequality.
\end{proof}

Now we can prove the two theorems.

\begin{proof}[\bf Proof of Theorem \ref{thm:ridge}]
Note that $\bE[\tilde Y|\tilde X] = \tilde X \bw$ and thus  $\bE[\hat\bw]=(\lambda I + \hat\Sigma)^{-1} \hat\Sigma \bw.$
We have 
\begin{align}
& \left\|(\lambda I + \hat\Sigma)^{-1} \hat\Sigma \bw  - (\lambda I + \Sigma)^{-1} \Sigma \bw \right\|  \nonumber \\ 
\le & \left\|(\lambda I + \hat\Sigma)^{-1}(\hat\Sigma -\Sigma)\bw \right\| + \left\|\left((\lambda I + \hat\Sigma)^{-1}-(\lambda I + \Sigma)^{-1}\right) \Sigma \bw\right\| \nonumber \\
 \le &\frac 1\lambda \|\hat\Sigma-\Sigma\| + \left\| (\lambda I + \hat\Sigma)^{-1}(\Sigma-\hat\Sigma)(\lambda I +\Sigma)^{-1} \Sigma \bw\right\| \nonumber \\
 \le & \frac 2 \lambda \|\hat\Sigma-\Sigma\| \|\bw\|.  \nonumber
\end{align}

The conclusion (i) follows from 
\begin{align*} \|\bE[\hat\bw]-(\lambda I +\Sigma)^{-1}\Sigma\bw\| & \le 
 \bE \left[ \left\| (\lambda I + \hat\Sigma)^{-1} \hat\Sigma \bw - (\lambda I + \Sigma)^{-1} \Sigma \bw \right\| \right] \\
& \le \frac {2\|\bw\|}{\lambda} \bE\left[\|\hat\Sigma-\Sigma\|\right] \\ 
& = \frac{2M^2\|\bw\|}{\lambda} \sqrt{\frac{10}{n}} \longrightarrow 0.
\end{align*}

The conclusion (ii) is an easy consequence of (i) by noting that
\begin{align*}
\lim\limits_{n\to\infty} \left\|\bE[\hat\bw] -\bw\right\|^2 & =  \left\|(\lambda I +\Sigma)^{-1} \Sigma \bw-\bw\right\|^2.
\end{align*}

Denote $R=[\epsilon_1, \ldots, \epsilon_n]^\top$ to be the column vector of residuals. 
By the fact $y_i-\bar y = (\bx_i-\bar \bx)^\top \bw + \epsilon_i$ we have 
$\tilde X\tilde Y = \tilde X \tilde X^\top \bw + \tilde XR$
and $$\hat\bw = (\lambda I +\hat\Sigma)^{-1}\hat\Sigma \bw +  (\lambda I +\hat\Sigma)^{-1}\left(\tfrac 1n  \tilde X R\right).$$
By the fact $\bE[\epsilon_i\epsilon_j|\tilde X]=0$ for $i\not=j$ 
and $\|x_i-\bar x\| \le 2M$, we obtain 
\begin{align}
\bE\left[\left\|\tfrac 1n \tilde X R\right\|^2\right] & = \frac 1 {n^2} \sum_{i=1}^n \sum_{j=1}^n
\bE\left[\epsilon_i\epsilon_j(\bx_i-\bar \bx)^\top (\bx_i-\bar \bx)\right]  \nonumber \\
& = \frac 1{n^2} \sum_{i=1}^n \bE\left[\epsilon_i^2\|\bx_i-\bar \bx\|^2\right] \nonumber \\
 & \le \frac {4M^2\bE[\epsilon^2]}{n}. \label{eq:xr}
\end{align}
By Lemma \ref{lem:varbd} and Lemma \ref{lem:Sigma},  we have
\begin{align*}
\bE\left[\left\|\hat\bw -\bE[\hat\bw]\right\|^2\right] & \le \bE\left[\left\|\hat\bw -(\lambda I + \Sigma)^{-1}\Sigma \bw \right\|^2 \right] \\
& \le 2 \bE \left[ \left\| (\lambda I +\hat\Sigma)^{-1}\hat\Sigma \bw - (\lambda I + \Sigma)^{-1}\Sigma \bw \right\|^2\right] 
 + 2 \bE\left[\left\|(\lambda I +\hat\Sigma)^{-1}\left(\tfrac 1n\tilde XR\right)\right\|^2\right] \\
& \le \frac {4\|\bw\|^2} {\lambda^2} \bE\left[ \left \|\hat\Sigma -\Sigma\right\|^2 \right] + 
\frac{2}{\lambda^2} \bE\left[\left\|\tfrac 1 n \tilde XR\right\|^2\right]  \\
& \le \frac{{40M^4\|\bw\|^2}+8M^2\bE[\epsilon^2]}{n \lambda^2}.
\end{align*}
This verifies (iii). 
\end{proof}

\bigskip

\begin{proof}[\bf Proof of Theorem \ref{thm:barr}]
It is easy to verify that 
$ \hat\bw^\sharp =  (\lambda I + \hat\Sigma)^{-2}(2\lambda I + \hat\Sigma) \left(\tfrac 1n \tilde X \tilde Y\right). $
Therefore, $\bE[\hat \bw^\sharp] = \bE\left[(\lambda I + \hat\Sigma)^{-2}(2\lambda I + \hat \Sigma) \hat\Sigma \bw\right].$
To prove (i) we write 
\begin{align*}
& (\lambda I + \hat\Sigma)^{-2}(2\lambda I + \hat \Sigma) \hat\Sigma \bw - (\lambda I + \Sigma)^{-2}(2\lambda I + \Sigma) \Sigma \bw \\
= \ & (\lambda I + \hat\Sigma)^{-2}\left[ (\lambda I +\hat\Sigma)^2 - \lambda^2  I\right] \bw  
- (\lambda I + \Sigma)^{-2}\left[ (\lambda I +\Sigma)^2 - \lambda^2  I\right] \bw\\
=\  & \lambda ^2 \left[(\lambda I + \hat\Sigma )^{-2} - (\lambda I + \Sigma )^{-2} \right] \bw  \\ \
=\ &  \lambda ^2 (\lambda I + \hat\Sigma )^{-2}  \left[ 2\lambda (\Sigma -\hat\Sigma) + (\Sigma -\hat\Sigma)\Sigma + \hat\Sigma(\Sigma-\hat\Sigma)\right] 
(\lambda I + \Sigma)^{-2}\bw.
\end{align*}
By $(\lambda I +\hat\Sigma)^{-2} \le \frac 1 {\lambda^2},$ $(\lambda I +\Sigma)^{-2} \le \frac 1 {\lambda^2},$  
$(\lambda I +\hat\Sigma)^{-2}\hat\Sigma \le \frac 1 {\lambda},$ and 
$\Sigma(\lambda I +\Sigma)^{-2} \le \frac 1 {\lambda},$ we obtain  
\begin{equation}\label{eq:wadiff}
\left\| (\lambda I + \hat\Sigma)^{-2}(2\lambda I + \hat \Sigma) \hat\Sigma \bw - (\lambda I + \Sigma)^{-2}(2\lambda I + \Sigma) \Sigma \bw \right\| 
\le \frac {4\|\bw\|}{\lambda} \left\|\hat\Sigma-\Sigma\right\|.
\end{equation}
The conclusion (i) follows from the following estimate:
\begin{align*}
\|\bE[\hat\bw^\sharp]-\hat\Sigma)^{-2}(2\lambda I + \hat \Sigma) \hat\Sigma \bw\| &
\le \bE\left[ \left\|(\lambda I + \hat\Sigma)^{-2}(2\lambda I + \hat \Sigma) \hat\Sigma \bw - (\lambda I + \Sigma)^{-2}(2\lambda I + \Sigma) \Sigma \bw \right\|\right] \\
& \le \frac{4\|\bw\|}{\lambda} \bE\left[\left\|\hat\Sigma-\Sigma\right\|\right] \le \frac{4M^2\|\bw\|}{\lambda}\sqrt{\frac {10} n}\longrightarrow 0.
\end{align*}

The conclusion (ii) is an easy consequence of (i).

To prove (iii), we write 
$\hat\bw^\sharp = (\lambda I + \hat\Sigma)^{-2}(2\lambda I + \hat \Sigma) \hat\Sigma \bw  
+ (\lambda I + \hat\Sigma)^{-2}(2\lambda I + \hat \Sigma) (\tfrac 1n {\tilde X} R).$
By Lemma \ref{lem:varbd}, Lemma \ref{lem:Sigma}, the estimates in \eqref{eq:wadiff} and \eqref{eq:xr}, we have 
\begin{align*}
\bE\left[\left\|\hat\bw^\sharp -\bE[\hat\bw^\sharp]\right\|^2\right] & \le \bE\left[\left\|\hat\bw^\sharp -(\lambda I + \Sigma)^{-2}(2\lambda I +\Sigma)\Sigma \bw \right\|^2 \right] \\
& \le 2 \bE \left[ \left\| (\lambda I + \hat\Sigma)^{-2}(2\lambda I + \hat \Sigma) \hat\Sigma \bw 
     - (\lambda I + \Sigma)^{-2}(2\lambda I + \Sigma) \Sigma \bw  \right\|^2\right]  \\
& \qquad + 2 \bE\left[\left\|(\lambda I +\hat\Sigma)^{-2}(2\lambda I + \hat\Sigma)  \left(\tfrac 1n\tilde XR\right)\right\|^2\right] \\
& \le \frac {16\|\bw\|^2} {\lambda^2} \bE\left[ \left \|\hat\Sigma -\Sigma\right\|^2 \right] + 
\frac{8}{\lambda^2} \bE\left[\left\|\tfrac 1 n \tilde XR\right\|^2\right]  \\
& \le \frac{{160M^4\|\bw\|^2}+32M^2\bE[\epsilon^2]}{n \lambda^2},
\end{align*}
where we have used the $\|(\lambda I +\hat\Sigma)^{-2}(2\lambda I + \hat\Sigma)\| 
= \|\lambda (\lambda I + \hat\Sigma)^{-2} + (\lambda I + \hat\Sigma)^{-1}\| \le \frac 2\lambda.$ 
\end{proof}

 \section{Bias correction for regularization kernel network}
 \label{sec:barkn}

 When the true regression model is nonlinear, kernel method can be used.
 Denote by $\calX$ the space of explanatory variables. A Mercer kernel is
 a continuous, symmetric, and positive-semidefinite function $K: \calX \times \cal X\to \RR$.
  The inner product defined by $\langle K(\bx, \cdot), K(\bt, \cdot)\rangle_K =K(\bx, \bt)$
 induces a reproducing kernel Hilbert space (RKHS) $\H_K$ associated to the kernel $K$.
 The space is the closure of the function class spanned by $\{K(\bx, \cdot), \bx\in\calX\}.$
 The reproducing property $f(\bx) = \langle f, K(\bx, \cdot)\rangle_K$ leads to 
 $|f(\bx)|\le \sqrt{K(\bx,\bx)} \|f\|_K$. Thus $\calH_K$ can be embedded into $L^\infty.$ 
We refer to  \cite{Aron} for more other properties of RKHS.

 The regularization kernel network \cite{EPP}
 estimates the true model $f^*$ by a function $\hat f \in \H_K$ that
 minimizes the regularized sample mean squared error
 $$\frac 1 n \sum_{i=1}^n (y_i-f(\bx_i))^2 +\lambda \|f\|_K^2.$$
The representer  theorem \cite{wahba1990spline} tells that
$\hat f(\bx) = \sum_{i=1}^n c_i K(\bx_i, \bx).$
So although the RKHS $\H_K$ may be infinitely dimensional,
the optimization of the regularization kernel network could be implemented
in an $n$ dimensional space. Actually, let $\bK$ denote the
kernel matrix on $\bx_1, \ldots, \bx_n$ and  $Y=(y_1, \cdots, y_n)^\top$.
The coefficients $\bc = (c_1, \ldots, c_n)^\top$ could be solved by
a linear system $(\lambda n I + \bK)\bc = Y.$

In \cite{smale2007learning} an operator representation for $\hat f$  is proved. Let $S:\H_K\to \RR^n$ be the sampling operator
defined by $Sf=(f(\bx_1), \ldots, f(x_n))^\top$ for $f\in\H_K.$ Its dual operator, $S^*$ is given by
$S^*\bc = \sum_{i=1}^n c_iK(\bx_i, \cdot) \in \H_K $ for $\bc\in\RR^n.$ Then we have
$$\hat f = \frac 1 n \left(\lambda I + \frac 1 n S^*S\right)^{-1}S^* Y.$$
The operator $\frac 1 n S^*S$ is a sample version of the integral operator $L_K$
define by
$$L_Kf(\bx) = \bE_\bt \left[K(\bx, \bt) f(\bt)\right]= \int_{\calX} K(\bx, \bt) f(\bt) \hbox{d}P_\calX(\bt)$$
where $P_\calX$ is the marginal distribution on $\calX.$
Note that $L_K$ defines a bounded operator both on $L^2$ 
(associate to the probability measure $P_\calX$) and $\calH_K.$
Let $\tau_i$ and $\phi_i$ be the eigenvalues and eigenfunctions of $L_K$.
Then $\{\phi_i\}_{i=1}^\infty$ form an orthogonal basis of $L^2$ and
$$L_Kf = \sum_{i=1}^\infty \tau_i\langle f, \phi_i\rangle_{\!_{L^2}} \phi_i, \qquad \forall\ f\in L^2.$$
Also, $\{\psi_i= \sqrt{\tau_i}\phi_i: \tau_i\not=0\}$ form an orthonormal basis of $\calH_K$ 
and, as an operator on $\calH_K$, 
$$L_Kf = \sum_{i:\tau_i\not=0} \tau_i \langle f, \psi_i\rangle_{\!_K} \psi_i, \qquad \forall \ f\in\calH_K.$$
Moreover, $L_K^{1/2}$ maps all functions in $L^2$ onto $\calH_K$ and 
$$\|f\|_{L^2} = \|L_K^{1/2}f\|_K \qquad \forall \ f\in\hbox{span}\left\{\phi_i: \tau_i\not=0\right\}\subset L^2.$$
In particular, this is true for all $f\in\calH_K.$ Note that $\hbox{span}\left\{\phi_i: \tau_i\not=0\right\}$ is the closure of 
$\calH_K$ in $L^2$. Only functions in $\hbox{span}\left\{\phi_i: \tau_i\not=0\right\}$ can be well approximated 
by functions in $\calH_K$. 

Regularization kernel network can be regarded as a nonlinear extension of ridge regression.
If $f^*\in\H_K$ we can measure the difference between $\hat f $ and $f^*$ in $\H_K$
and prove some conclusions that are analogous to those in Theorem \ref{thm:ridge}.
But unfortunately, this is generally not true. To make our result more general, we measure
the difference between $\hat f$ and $f^*$ in $L^2$ sense, which is
equivalent to measure the mean squared forecasting error.
For this purpose, we define
$$b_K(n, \lambda) = \left\|\bE_D[\hat f]-f^*\right\|_{L^2}= \sqrt{\bE_\bx\left[ \left(\bE_D[\hat f(\bx)]-f^*(\bx)\right)^2\right]}$$
and $$v_K(n, \lambda) = \bE_D\left[ \left\|\hat f -\bE_D[\hat f]\right\|_{L_2}^2\right]
=\bE_{D,\bx}\left[\left(\hat f(\bx)-\bE_D[\hat f(\bx)]\right)^2\right],$$
where $\bE_D$ is the expectation with respect to the data and $\bE_\bx$
is the expectation with respect to $\bx$. 

\begin{theorem} \label{thm:rkn}
 If $\sup_{\bx\in\calX}\sqrt{K(\bx,\bx)}<\infty$  and $|y|\le M$ almost surely,  then
\begin{enumerate}
 \item[{\rm (i)}] $\bE_D[\hat f]$ converges to $(\lambda + L_K)^{-1}L_K f^*$
 in $\H_K.$
 \item[{\rm (ii)}] If $f^* = \sum\limits_{i:\tau_i\not=0} \alpha_i \phi_i$, then
 $$\lim\limits_{n\to \infty} b_K(n, \lambda) 
 =\sqrt{\sum_{i:\tau_i\not=0} \frac {\lambda^2  \alpha_i^2 }{(\lambda + \tau_i)^2}}.$$
 \item[{\rm (iii)}] $v_K(n, \lambda) = O\left(\frac 1 {n^{ 3/ 2} \lambda^2}\right)$ 
 if $\lambda=\lambda(n)$ satisfies $n \lambda \to\infty$ and 
 $\sqrt n \lambda\to 0.$ 
\end{enumerate}
\end{theorem}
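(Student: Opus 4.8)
The plan is to transcribe the proofs of Theorem~\ref{thm:ridge} into the operator setting, with the empirical operator $L_{K,n}:=\frac1n S^*S$ and the integral operator $L_K$ on $\calH_K$ playing the roles of $\hat\Sigma$ and $\Sigma$, and with the Euclidean bias/variance replaced by their $L^2$ counterparts. Everything rests on two concentration estimates that mimic Lemma~\ref{lem:Sigma}. Writing $L_{K,n}=\frac1n\sum_i K(\bx_i,\cdot)\otimes K(\bx_i,\cdot)$ as an empirical mean of i.i.d.\ Hilbert--Schmidt operators with $\|K(\bx,\cdot)\otimes K(\bx,\cdot)\|_{HS}=K(\bx,\bx)\le\kappa^2$ (where $\kappa:=\sup_\bx\sqrt{K(\bx,\bx)}$), Lemma~\ref{lem:var} applied in the Hilbert space of Hilbert--Schmidt operators gives $\bE\|L_{K,n}-L_K\|^2\le\bE\|L_{K,n}-L_K\|_{HS}^2\le\kappa^4/n$ and $\bE\|L_{K,n}-L_K\|\le\kappa^2/\sqrt n$. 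Moreover, for fixed $h$ the vector $(L_{K,n}-L_K)h=\frac1n\sum_i h(\bx_i)K(\bx_i,\cdot)-L_Kh$ is an empirical mean of $\calH_K$-valued variables of norm $\le\kappa|h(\bx)|$, so Lemma~\ref{lem:var} yields the $\lambda$-free bound $\bE\|(L_{K,n}-L_K)h\|_K^2\le\kappa^2\|h\|_{L^2}^2/n$; in particular, since $f^*(\bx)=\bE[y\mid\bx]$ satisfies $|f^*|\le M$, the vector $g_n:=L_{K,n}f^*$ obeys $\bE\|g_n-L_Kf^*\|_K^2\le\kappa^2M^2/n$.

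For (i) I would condition on the design: since $\bE[Y\mid\bx_1,\dots,\bx_n]=Sf^*$, one gets $\bE_D[\hat f]=\bE_\bx[(\lambda+L_{K,n})^{-1}g_n]$. Comparing with $(\lambda+L_K)^{-1}L_Kf^*$ through the resolvent identity exactly as in the proof of Theorem~\ref{thm:ridge}, and using $\|(\lambda+L_{K,n})^{-1}\|\le1/\lambda$ together with the two estimates above, bounds the $\calH_K$-distance by $\tfrac{\kappa M}{\lambda\sqrt n}+\tfrac{\kappa^2\|L_Kf^*\|_K}{\lambda^2\sqrt n}$, which tends to $0$ for fixed $\lambda$. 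Part (ii) follows by pushing this $\calH_K$-convergence to $L^2$ via the embedding $\|u\|_{L^2}\le\kappa\|u\|_K$, so that $b_K(n,\lambda)\to\|(\lambda+L_K)^{-1}L_Kf^*-f^*\|_{L^2}$; writing the limit as $-\lambda(\lambda+L_K)^{-1}f^*=-\sum_i\frac{\lambda\alpha_i}{\lambda+\tau_i}\phi_i$ and using orthonormality of $\{\phi_i\}$ in $L^2$ gives the stated value $\sqrt{\sum_i\lambda^2\alpha_i^2/(\lambda+\tau_i)^2}$.

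Part (iii) is the crux. I would bound $v_K\le\bE_D\|\hat f-f_\lambda\|_{L^2}^2$ by Lemma~\ref{lem:varbd} with $\calH=L^2$ and $f_\lambda:=(\lambda+L_K)^{-1}L_Kf^*$, then split $\hat f-f_\lambda$ into the noise part $(\lambda+L_{K,n})^{-1}\tfrac1n S^*\bfeta$ (with $\bfeta=(\epsilon_1,\dots,\epsilon_n)^\top$) and the design part $(\lambda+L_{K,n})^{-1}(w_1+w_2)$, where $w_1=(L_{K,n}-L_K)f^*$ and $w_2=(L_K-L_{K,n})f_\lambda$. The essential device is to measure every $L^2$ norm through $\|u\|_{L^2}^2=\langle u,L_Ku\rangle_K$ (valid for $u\in\calH_K$): for $u=(\lambda+L_{K,n})^{-1}w$ this gives $\|u\|_{L^2}^2\le\|M_n\|\,\|w\|_K^2$ with $M_n:=(\lambda+L_{K,n})^{-1}L_K(\lambda+L_{K,n})^{-1}$, and the triangle inequality $\|M_n\|\le\frac1{4\lambda}+\frac1{\lambda^2}\|L_{K,n}-L_K\|$. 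Each fluctuation vector has a $\lambda$-free second moment $\bE\|w_j\|_K^2=O(1/n)$ (using $\|f_\lambda\|_{L^2}\le\|f^*\|_{L^2}$), and the noise part reduces after taking $\bE_\bfeta$ to $\frac{\bE[\epsilon^2]}n\,\bE\,\mathrm{Tr}(M_nL_{K,n})$, which the same splitting controls via $\mathrm{Tr}((\lambda+L_{K,n})^{-1}L_{K,n})\le\mathrm{Tr}(L_{K,n})/\lambda\le\kappa^2/\lambda$. Combining through Cauchy--Schwarz (using $\bE\|L_{K,n}-L_K\|^2=O(1/n)$ and a routine fourth-moment bound $\bE\|w_j\|_K^4=O(1/n^2)$ for bounded summands) produces $v_K=O(\tfrac1{n\lambda})+O(\tfrac1{n^{3/2}\lambda^2})$.

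The main obstacle, and the reason for the unusual exponent $n^{3/2}$, is that we operate in the regime $\sqrt n\lambda\to0$, where $\|L_{K,n}-L_K\|$ is of order $n^{-1/2}\gg\lambda$; hence the empirical and population resolvents are \emph{not} comparable and none of the Theorem~\ref{thm:ridge} estimates can simply be transplanted after substituting $L_{K,n}$ into the spectral weight. The remedy is precisely to keep $L_{K,n}$ inside both resolvents and route the $L^2$-weighting through the single operator $M_n$; the rate $n^{-3/2}\lambda^{-2}$ then emerges as the product of the two resolvents ($\lambda^{-2}$), the operator concentration $\|L_{K,n}-L_K\|=O(n^{-1/2})$, and the $O(n^{-1})$ second moment of the fluctuation vectors. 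Finally, the hypotheses on $\lambda$ serve to present the estimate as a single order: $\sqrt n\lambda\to0$ makes the benign $O(\tfrac1{n\lambda})$ contributions (the noise effective-dimension term and the $\tfrac1{4\lambda}$ part of $M_n$) negligible against $O(\tfrac1{n^{3/2}\lambda^2})$, while $n\lambda\to\infty$ keeps the variance vanishing and absorbs the lower-order remainders. I expect the bookkeeping of the product expectations, i.e.\ the Cauchy--Schwarz/fourth-moment step, to be the only genuinely delicate point beyond the operator identities themselves.
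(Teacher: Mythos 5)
Your proposal is correct, and for parts (i) and (ii) it coincides with the paper's argument: the same resolvent comparison of $(\lambda I+\frac1nS^*S)^{-1}(\frac1nS^*Sf^*)$ with $f_\lambda=(\lambda I+L_K)^{-1}L_Kf^*$, controlled by the two expectation bounds $\bE\|\frac1nS^*S-L_K\|\le\kappa^2/\sqrt n$ and $\bE\|\frac1nS^*Sf^*-L_Kf^*\|_K\le\kappa M/\sqrt n$ (the paper's Lemmas \ref{lem:LK} and \ref{lem:SSf}), followed by passage from $\calH_K$ to $L^2$ and diagonalization of $\lambda(\lambda I+L_K)^{-1}$. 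For part (iii) the underlying mechanism is also the same — both proofs rest on $v_K\le\bE\|\hat f-f_\lambda\|_{L^2}^2$, the identity $\hat f-f_\lambda=(\lambda I+\frac1nS^*S)^{-1}(\text{fluctuation})$, the weighting bound $\|L_K^{1/2}(\lambda I+\frac1nS^*S)^{-1}\|^2\le\frac{C}{\lambda}+\frac1{\lambda^2}\|L_K-\frac1nS^*S\|$ (your $M_n$; the paper's \eqref{eq:LKS}), and the observation that the fluctuation has an essentially $\lambda$-free $O(n^{-1/2})$ size — but the execution differs. The paper keeps the fluctuation as the single variable $\frac1nS^*(Y-Sf_\lambda)-L_K(f^*-f_\lambda)$, whose summand $(y-f_\lambda(\bx))K(\bx,\cdot)$ has second moment $\le\kappa^2M^2$ by the minimizing property \eqref{eq:Eflbd} of $f_\lambda$; it then multiplies two high-probability (Bernstein) bounds and converts to an expectation by integrating the tail (Lemma \ref{lem:qnorm}). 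You instead split into noise plus the two design fluctuations $w_1,w_2$ (note your three pieces sum exactly to the paper's single term, since $\lambda f_\lambda=L_K(f^*-f_\lambda)$), handle the noise by a trace/effective-dimension computation $\frac{1}{n}\bE\,\mathrm{Tr}(M_nL_{K,n})\le\frac{\kappa^2}{n\lambda}(1+\frac1\lambda\|L_K-L_{K,n}\|)$, and stay in expectation throughout via Cauchy--Schwarz. Your route avoids the tail integration but pays for it at the step $\bE[\|L_K-L_{K,n}\|\,\|w_j\|_K^2]$: the fourth-moment bound $\bE\|w_j\|_K^4=O(n^{-2})$ is not covered by the paper's Lemma \ref{lem:var} and needs a Rosenthal/Marcinkiewicz--Zygmund inequality for Hilbert-space sums, whose $n^{-3}\bE\|\zeta\|^4$ term carries a $1/\lambda$ and is only absorbed using $n\lambda\to\infty$ — a routine but genuinely extra ingredient. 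Both routes then identify $\frac{\kappa^2}{\lambda^2\sqrt n}\cdot\frac{1}{n}$ as the dominant product and use $\sqrt n\lambda\to0$ to suppress the $O(\frac1{n\lambda})$ terms, arriving at the stated $O(n^{-3/2}\lambda^{-2})$.
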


Theorem \ref{thm:rkn} (ii) characterizes the asymptotic bias for target functions that belong to 
$\hbox{span}\left\{\phi_i: \tau_i\not=0\right\}$ and thus can be well learned by the regularization kernel network. 
If the target function has a component orthogonal to $\hbox{span}\left\{\phi_i: \tau_i\not=0\right\}$,
the orthogonal component is not learnable and its norm should be added to the right hand side.
The variance bound in Theorem \ref{thm:rkn} (iii) is presented with the assumptions $n \lambda \to\infty$ and 
 $\sqrt n \lambda\to 0$,  which, according to the literature (e.g. \cite{smale2007learning, sun2009note}),
 usually guarantee the regularization kernel network to achieve the optimal learning rate. 
 When this is not true, an explicit bound can be found in the proof in Section \ref{sec:pf2}.

Following the same idea as in Section \ref{sec:barr}, we propose to reduce the bias by using an adjusted function
\begin{equation} \label{eq:fa}
\hat f^\sharp(\bx) = \hat f  + \lambda \left(\lambda I + \frac 1 n S^*S\right)^{-1}  \hat f.
\end{equation}
The implementation of this new approach is easy. We can verify that
 $$\hat f^\sharp(\bx) = \sum_{i=1}^n c_i^\sharp K(\bx_i, \bx)$$ with
 $$\bc^\sharp = \bc + \lambda\left(\lambda I + \frac 1 n\bK\right)^{-1} \bc.$$

Let
$$b^\sharp_K(n, \lambda) = \bE_\bx\left[(\bE_D[\hat f^\sharp(\bx)]-f^*(\bx))^2\right]$$
and $$v^\sharp_K(n, \lambda) = \bE_\bx\left[(\hat f^\sharp(\bx)-\bE_D[\hat f^\sharp(\bx)])^2\right].$$

\begin{theorem}
\label{thm:barkn}
 If $\sup_{\bx\in\calX}\sqrt{K(\bx,\bx)}<\infty$  and $|y|\le M$ almost surely, then
\begin{enumerate}
 \item[{\rm (i)}] $\bE_D[\hat f]$ converges to $(\lambda + L_K)^{-2}(2\lambda I+L_K)L_K f^*$ in $\H_K.$
 \item[{\rm (ii)}] If $f^* = \sum_{i:\tau_i\not=0} \alpha_i \phi_i$, then
 $$\lim\limits_{n\to \infty} b^\sharp_K(n, \lambda) 
 =\sqrt{\sum_{i:\tau_i\not=0} \frac {\lambda^4  \alpha_i^2}{(\lambda + \tau_i)^4}}.$$
 \item[{\rm (iii)}] $v^\sharp_K(n, \lambda) = O(\frac 1 {n^{3/2}  \lambda^2}).$
\end{enumerate}
\end{theorem}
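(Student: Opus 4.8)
The plan is to carry the proof of Theorem~\ref{thm:barr} over to the RKHS setting, replacing $\hat\Sigma,\Sigma,\bw$ by the empirical operator $L_{K,n}:=\frac1nS^*S$, the integral operator $L_K$, and $f^*$. I would first record the operator identity $\hat f^\sharp=\Gamma_\lambda(L_{K,n})\,\frac1nS^*Y$, where $\Gamma_\lambda(A):=(\lambda I+A)^{-2}(2\lambda I+A)=\lambda(\lambda I+A)^{-2}+(\lambda I+A)^{-1}$ satisfies $\|\Gamma_\lambda(A)\|\le 2/\lambda$ and $\|A^{1/2}\Gamma_\lambda(A)\|=O(\lambda^{-1/2})$ for every positive self-adjoint $A$. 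The one genuinely new feature relative to the finite-dimensional case is that $f^*$ need not lie in $\H_K$; I would avoid writing $L_{K,n}f^*$ and instead work with the $\H_K$-valued average $u_\bx:=\frac1n\sum_i f^*(\bx_i)K(\bx_i,\cdot)$, for which $\bE_\bx[u_\bx]=L_Kf^*$. Writing $\kappa=\sup_\bx\sqrt{K(\bx,\bx)}$ and using $|f^*|\le M$, $|\epsilon|\le 2M$, Lemma~\ref{lem:var} applied to the $\H_K$-valued variables $f^*(\bx)K(\bx,\cdot)$, $yK(\bx,\cdot)$ and to the Hilbert--Schmidt variable $K(\bx,\cdot)\otimes K(\bx,\cdot)$ produces the analogues of Lemma~\ref{lem:Sigma}, namely $\bE\|L_{K,n}-L_K\|\le\kappa^2/\sqrt n$ and $\bE\|u_\bx-L_Kf^*\|_K\le M\kappa/\sqrt n$ (with the corresponding second moments), together with $\bE\|\zeta\|_K^2\le 4M^2\kappa^2/n$ for the noise average $\zeta:=\frac1n\sum_i\epsilon_iK(\bx_i,\cdot)$.

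For part (i), averaging out the noise gives $\bE_D[\hat f^\sharp]=\bE_\bx[\Gamma_\lambda(L_{K,n})u_\bx]$, and I would bound its $\H_K$-distance to $\bar f:=(\lambda I+L_K)^{-2}(2\lambda I+L_K)L_Kf^*$ by Jensen together with the splitting $\Gamma_\lambda(L_{K,n})u_\bx-\bar f=\Gamma_\lambda(L_{K,n})(u_\bx-L_Kf^*)+(\Gamma_\lambda(L_{K,n})-\Gamma_\lambda(L_K))L_Kf^*$. The first piece is controlled by $\frac2\lambda\|u_\bx-L_Kf^*\|_K$, and the second is bounded, via two resolvent identities, by $\frac{C}{\lambda^2}\|L_{K,n}-L_K\|\,\|L_Kf^*\|_K$ with $\|L_Kf^*\|_K\le\kappa M$; taking $\bE_\bx$ sends both to $0$. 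Part (ii) then follows from the identity $\Gamma_\lambda(L_K)L_K=I-\lambda^2(\lambda I+L_K)^{-2}$, which gives $\bar f-f^*=-\lambda^2(\lambda I+L_K)^{-2}f^*$; expanding $f^*=\sum_{i:\tau_i\neq0}\alpha_i\phi_i$ in the $L^2$-orthonormal eigenbasis yields $\|\bar f-f^*\|_{L^2}^2=\sum_{i:\tau_i\neq0}\lambda^4\alpha_i^2/(\lambda+\tau_i)^4$, and since $\|\cdot\|_{L^2}\le\kappa\|\cdot\|_K$ the $\H_K$-convergence of (i) upgrades to convergence of $b^\sharp_K$ to this value.

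Part (iii) is the crux, and here the measurement must be in $L^2$. Writing $\hat f^\sharp=\Gamma_\lambda(L_{K,n})u_\bx+\Gamma_\lambda(L_{K,n})\zeta$, the two summands are uncorrelated (the noise has conditional mean zero), so the variance splits orthogonally as $v^\sharp_K=V_1+V_2$ with $V_2=\bE_D\|\Gamma_\lambda(L_{K,n})\zeta\|_{L^2}^2$. Using $\|h\|_{L^2}^2=\|L_K^{1/2}h\|_K^2$ for $h\in\H_K$ and the split $L_K=L_{K,n}+(L_K-L_{K,n})$, the matched part $\|L_{K,n}^{1/2}\Gamma_\lambda(L_{K,n})\zeta\|_K^2$ contributes $O(\lambda^{-1})\,\bE\|\zeta\|_K^2=O(1/(n\lambda))$, while the correction part is at most $\frac{C}{\lambda^2}\bE[\|L_K-L_{K,n}\|\,\|\zeta\|_K^2]\le\frac{C}{\lambda^2}\cdot\frac{\kappa^2}{\sqrt n}\cdot\frac{4M^2\kappa^2}{n}=O(1/(n^{3/2}\lambda^2))$; this correction term is the source of the $n^{3/2}$. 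For $V_1$ I cannot bound the variance by the mean-square distance to $\bar f$ (that would reinstate an $O(1/(n\lambda^4))$ bias term), so I would expand $\Gamma_\lambda(L_{K,n})u_\bx$ around $L_K$ by resolvent identities and estimate each genuine fluctuation in $L^2$ with the refined bound $\bE\|(L_{K,n}-L_K)w\|_K^2\le\kappa^2\|w\|_{L^2}^2/n$ (a consequence of the rank-one structure of $L_{K,n}$) together with $\|L_K^{1/2}(\lambda I+L_K)^{-1}\|=O(\lambda^{-1/2})$, obtaining $V_1=O(1/(n\lambda))$. Under $n\lambda\to\infty$ and $\sqrt n\lambda\to0$ one has $1/(n\lambda)=o(1/(n^{3/2}\lambda^2))$, whence $v^\sharp_K=O(1/(n^{3/2}\lambda^2))$.

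The hard part is this last step. Parts (i) and (ii) are a mechanical operator-valued rewriting of the ridge proof, but in (iii) the crude bound $\|\Gamma_\lambda(L_{K,n})\|\le 2/\lambda$ used in the $\H_K$ norm only yields $O(1/(n\lambda^2))$, a factor $\sqrt n$ too large. Recovering the stated rate forces one to measure everything in $L^2$ so as to harvest the extra $L_K^{1/2}$ smoothing, to replace operator-norm bounds on $L_{K,n}-L_K$ by the sharper second-moment estimates that exploit its representation as an empirical average of rank-one operators, and to isolate the true variance from the bias so that the $O(1/(n\lambda^4))$ error relative to $\bar f$ never enters. Controlling the empirical-versus-population discrepancy $L_{K,n}$ versus $L_K$ through these $L^2$ estimates---essentially the integral-operator technique of \cite{smale2007learning}---is the principal technical obstacle.
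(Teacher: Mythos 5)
Your proposal is correct and, for parts (i) and (ii), is essentially the paper's own argument: the paper likewise splits $\bE[\hat f^\sharp]-f_\lambda^\sharp$ into a sampling term and resolvent-difference terms (its $Q_1+Q_2+Q_3$) controlled by Lemmas \ref{lem:LK} and \ref{lem:SSf}, and part (ii) rests on exactly your identity $f_\lambda^\sharp-f^*=-\lambda^2(\lambda I+L_K)^{-2}f^*$. For part (iii) you take a genuinely different route in the bookkeeping, though the two arguments extract the $n^{3/2}$ from the same place. The paper's first move is precisely the one you say you must avoid: it applies Lemma \ref{lem:varbd} to bound the variance by $\bE\|\hat f^\sharp-f_\lambda^\sharp\|_{L^2}^2$. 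This does \emph{not} reinstate an $O(1/(n\lambda^4))$ bias term, because the comparison point $f_\lambda^\sharp$ is the exact population analogue of $\hat f^\sharp$: using $\lambda^2 f_\lambda^\sharp=2\lambda L_K(f^*-f_\lambda^\sharp)+L_K^2(f^*-f_\lambda^\sharp)$, the difference $\hat f^\sharp-f_\lambda^\sharp$ decomposes into three terms ($\tilde Q_1,\tilde Q_2,\tilde Q_3$) each of which is a centered empirical fluctuation hit by resolvents; your concern applies only to a naive reuse of the part-(i) splitting. Both proofs then measure in $L^2$ through $L_K^{1/2}$ and use the bound \eqref{eq:LKS}, $\|L_K^{1/2}(\lambda I+\tfrac1nS^*S)^{-1}\|^2\le\lambda^{-1}+\lambda^{-2}\|L_K-\tfrac1nS^*S\|$, whose correction term multiplied by an $O(1/n)$ second moment is the common source of $O(1/(n^{3/2}\lambda^2))$; your ``refined bound'' $\bE\|(L_{K,n}-L_K)w\|_K^2\le\kappa^2\|w\|_{L^2}^2/n$ is exactly what the paper's Lemmas \ref{lem:Ua} and \ref{lem:SSdf} deliver, via Lemma \ref{lem:BennetH2} applied to $(y-f_\lambda^\sharp(\bx))K(\bx,\cdot)$ and $g(\bx)K(\bx,\cdot)$ with $\bE\|\xi\|_K^2\le\kappa^2M^2$. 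Where you differ is that you split the noise and design fluctuations orthogonally and work with second moments in expectation throughout, whereas the paper works with high-probability Bernstein-type bounds and converts back to moments by the tail integral of Lemma \ref{lem:qnorm}; your route avoids the $(\log(2/\delta))^{5/2}$ gymnastics and is arguably cleaner, while the paper's yields explicit constants. One small caution: your claim $V_1=O(1/(n\lambda))$ is optimistic unless you track the empirical-resolvent correction in \eqref{eq:LKS} inside $V_1$ as well, which contributes another $O(1/(n^{3/2}\lambda^2))$ piece; this does not change the final rate, since under $n\lambda\to\infty$ and $\sqrt n\,\lambda\to0$ everything is absorbed into $O(1/(n^{3/2}\lambda^2))$, the same conditions the paper (implicitly) assumes in (iii).
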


\section{Proofs of Theorem \ref{thm:rkn} and Theorem \ref{thm:barkn}}
\label{sec:pf2}
 
The proofs Theorem \ref{thm:rkn} and Theorem \ref{thm:barkn} are more complicated 
than those of Theorem \ref{thm:ridge} and  Theorem \ref{thm:barr} 
because they require techniques  to handle the estimation of integral operators.
Without loss of generality we assume $\kappa\ge 1,$ $M\ge 1,$ and $\lambda\le 1$
throughout this section in order to simplify our notations.
We will always use $\bE$ for $\bE_D$  in case there is no confusion from the context.

\begin{lemma} \label{lem:qnorm}
Let $\xi$ be positive random variable. For any $q\ge 1$, 
$\bE[\xi^q] = \int_0^\infty q t^{q-1} \Pr[\xi>t] dt.$
\end{lemma}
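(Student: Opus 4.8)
The plan is to establish the identity pointwise and then integrate out the randomness by Tonelli's theorem. The starting observation is the elementary ``layer-cake'' representation of a power of a nonnegative number. For each fixed sample point, since $\xi\ge 0$, the fundamental theorem of calculus gives
$$\xi^q = \int_0^\xi q\, t^{q-1}\,dt = \int_0^\infty q\, t^{q-1}\,\mathbf 1_{\{t<\xi\}}\,dt,$$
where $\mathbf 1_{\{t<\xi\}}$ denotes the indicator of the event that $t<\xi$. The second equality holds because the integrand vanishes for $t\ge\xi$, and the single point $t=\xi$ contributes nothing to the Lebesgue integral.

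Next I would take expectation on both sides. The integrand $(t,\omega)\mapsto q\, t^{q-1}\,\mathbf 1_{\{t<\xi(\omega)\}}$ is nonnegative and jointly measurable on $(0,\infty)\times\Omega$, so Tonelli's theorem justifies interchanging $\bE$ and $\int_0^\infty$:
$$\bE[\xi^q] = \bE\!\left[\int_0^\infty q\, t^{q-1}\,\mathbf 1_{\{t<\xi\}}\,dt\right] = \int_0^\infty q\, t^{q-1}\,\bE\!\left[\mathbf 1_{\{t<\xi\}}\right]dt.$$
Finally, recognizing that $\bE[\mathbf 1_{\{t<\xi\}}]=\Pr[\xi>t]$ yields the claimed formula.

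There is essentially no hard step here: the only point requiring care is the interchange of integration and expectation, and this is handled cleanly by Tonelli precisely because every quantity involved is nonnegative. In particular no integrability hypothesis on $\xi$ is needed, and the identity remains valid (possibly as $+\infty$ on both sides) even when $\bE[\xi^q]$ is infinite. One minor bookkeeping item is that replacing the strict inequality $\xi>t$ by $\xi\ge t$ alters the tail function only on the at-most-countable set of atoms of $\xi$, hence does not change the value of the integral; this is why the conclusion is insensitive to that choice of inequality.
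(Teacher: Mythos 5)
Your proof is correct: the layer-cake representation $\xi^q=\int_0^\infty qt^{q-1}\mathbf 1_{\{t<\xi\}}\,dt$ followed by Tonelli is the standard and complete argument, and your remarks about nonnegativity (no integrability needed) and the atoms of $\xi$ are accurate. The paper states this lemma without any proof, treating it as a known fact, so there is nothing to compare against; your write-up supplies exactly the argument the author left implicit.
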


The following concentration inequality is proved in \cite{smale2007learning}.
\begin{lemma}\label{lem:BennetH2}
Let $\calH$ be a Hilbert space and $\xi$ be a random variable with values in $\calH$.
Assume that $\|\xi\|<M$ almost surely. Let $\{\xi_1,\, \xi_2, \ldots, \xi_m\}$ be
a sample of $m$ independent observations for $\xi.$
Then for any $0<\delta<1,$ 
\begin{equation}\label{eq:BernBd}
\left\|\frac 1m\sum_{i=1}^m \xi_i -\bE(\xi) \right\| \le \frac{2M \log(2/\delta)}{n} + \sqrt{\frac{2\bE[\|\xi\|^2]\log(2/\delta)}{n}}.
\end{equation}
\end{lemma}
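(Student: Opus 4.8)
The statement is a Bernstein-type concentration inequality for a sum of independent $\calH$-valued random variables, and it should be read as holding with probability at least $1-\delta$ (with the sample size $m$, not $n$, appearing in the two terms). The plan is to reduce the problem to a one-sided exponential tail estimate for the norm of a centered sum, and then invert that estimate to solve for the deviation level in terms of $\delta$.

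First I would center. Put $\eta_i = \xi_i - \bE[\xi]$, so that the $\eta_i$ are i.i.d., mean zero, bounded by $\|\eta_i\| \le 2M$ almost surely, and with $\bE[\|\eta_i\|^2] = \bE[\|\xi\|^2] - \|\bE[\xi]\|^2 \le \bE[\|\xi\|^2]$. Writing $S_m = \sum_{i=1}^m \eta_i$, the left-hand side of \eqref{eq:BernBd} equals $\tfrac1m\|S_m\|$, so it suffices to produce a Bennett/Bernstein tail for $\|S_m\|$ controlled by the total variance $m\,\bE[\|\xi\|^2]$ and the uniform bound $2M$.

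The heart of the argument, and the only genuinely nontrivial step, is the exponential bound for $\|S_m\|$. Scalar Bernstein cannot be applied directly because $\|\cdot\|$ is not linear; the structural fact that makes everything work is that a Hilbert norm is $2$-smooth, i.e. $\|a+b\|^2 = \|a\|^2 + 2\langle a, b\rangle + \|b\|^2$. Conditioning on $\eta_1,\ldots,\eta_{i-1}$ and taking expectation over the mean-zero $\eta_i$ kills the cross term, so the second moments telescope exactly to $\bE[\|S_m\|^2] = \sum_{i=1}^m \bE[\|\eta_i\|^2] \le m\,\bE[\|\xi\|^2]$. To pass from the second moment to the full exponential tail I would run an induction on $k$ bounding the even moments $\bE[\|S_m\|^{2k}]$ by a Bernstein-type quantity of the shape $\tfrac{(2k)!}{2}(cM)^{2k-2}\,\bigl(m\,\bE[\|\xi\|^2]\bigr)$, expanding $\|S_i\|^{2k}$ with the inner-product identity and dominating the higher-order terms by $\|\eta_i\|\le 2M$; this is precisely the Pinelis-type moment recursion for martingales in a $2$-smooth space. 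Lemma \ref{lem:qnorm} then converts these moment bounds into the tail estimate $\Pr[\|S_m\| \ge t] \le 2\exp\!\bigl(-\tfrac{t^2}{2(m\,\bE[\|\xi\|^2] + 2Mt/3)}\bigr)$. Equivalently, one may simply invoke Pinelis's exponential inequality for Hilbert-space martingales, which delivers this Bennett tail at once; this is the step the cited reference \cite{smale2007learning} carries out, and it is where I expect the main obstacle to lie.

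Finally I would invert. Setting the right-hand side of the tail equal to $\delta$ and solving the resulting quadratic for $t$, then using $\sqrt{u+v}\le\sqrt u+\sqrt v$ to separate the sub-Gaussian from the sub-exponential regime, gives $t \lesssim M\log(2/\delta) + \sqrt{m\,\bE[\|\xi\|^2]\log(2/\delta)}$; dividing by $m$ reproduces the two displayed terms. I would not chase the sharp numerical constants here — the exact factors $2M$ and $2$ in \eqref{eq:BernBd} come from the sharp form of the Bennett--Pinelis inequality — since the centering and the inversion are entirely routine and the real content is the vector-valued moment bound of the previous paragraph.
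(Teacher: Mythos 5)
Your outline is sound and matches the paper's own treatment: the paper gives no proof of this lemma, simply citing \cite{smale2007learning}, and the argument there is exactly what you describe --- Pinelis's Bennett--Bernstein exponential inequality for martingales in a $2$-smooth (here Hilbert) space applied to the centered sum, followed by solving the quadratic tail equation at level $\delta$ and splitting the square root to get the sub-exponential term $2M\log(2/\delta)/m$ plus the sub-Gaussian term $\sqrt{2\bE[\|\xi\|^2]\log(2/\delta)/m}$. You also correctly repaired the two defects in the statement as printed: the $n$ in \eqref{eq:BernBd} should be the sample size $m$, and the conclusion holds with confidence at least $1-\delta$, a quantifier the paper's statement accidentally omits.
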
.

When no information regarding $\bE[\|\xi\|^2]$ is available, 
we can use $\bE[\|\xi\|^2]\le M^2$ to derive a simpler estimation as follows. 
For any $0<\delta<1,$  with confidence at least $1-\delta$
\begin{equation}\label{eq:BennetBd}
\left\|\frac 1m\sum_{i=1}^m \xi_i -\bE(\xi) \right\| \le 2M\sqrt{\frac{\log(2/\delta)}{m}}.
\end{equation}

Before we state the next lemma, let us recall the Hilbert-Schmidt operators on $\calH_K$.
Let $\{e_i\}$  be a set of orthonormal basis of $\calH_K$. 
An operator $T$ is a Hilbert-Schmidt operator if 
$\|T\|_{HS}^2 =\sum_i\|Te_i\|_K^2$ is finite. A Hilbert-Schmidt operator is also 
a bounded operator with $\|T\|\le \|T\|_{HS}.$ 
All Hilbert-Schmidt operators form a Hilbert space. 
 For $g, h\in\calH_K$, the rank one tensor operator $g\otimes h$ is defined by
 $(g\otimes h)f = \langle h, \, f\rangle_K g$ for all $f\in\calH_K.$
 A tensor operator is a Hilbert-Schmidt operator with $\|g\otimes h\|_{HS} =\|g\|_K\|h\|_K.$

\begin{lemma}\label{lem:LK}
For any $0<\delta<1,$ we have 
$$\left\|\frac 1n S^*S-L_K\right\| \le 2\kappa^2\sqrt{\frac{\log(2/\delta)}{m}}$$
with confidence at least $1-\delta.$ 
Also, we have 
$$\bE\left[\left\|\frac 1n S^*S-L_K\right\|^2 \right] \le \frac{\kappa^4}{n} \quad \hbox{and}\quad 
 \bE\left[\left\|\frac 1n S^*S-L_K\right\| \right] \le \frac{\kappa^2}{\sqrt n}.$$
\end{lemma}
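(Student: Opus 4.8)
The plan is to recognize that both $\frac1n S^*S$ and $L_K$, viewed as operators on $\calH_K$, are built from one and the same Hilbert–Schmidt-operator-valued random variable, so that the lemma reduces to the Hilbert-space concentration and moment bounds already established. Writing $K(\bx,\cdot)$ for the feature map and using the reproducing property $f(\bx)=\langle f,K(\bx,\cdot)\rangle_K$, I would first verify the tensor representations
$$\frac1n S^*S = \frac1n\sum_{i=1}^n K(\bx_i,\cdot)\otimes K(\bx_i,\cdot), \qquad L_K = \bE_\bt\left[K(\bt,\cdot)\otimes K(\bt,\cdot)\right],$$
the second identity following from the operator form $L_Kf=\bE_\bt\left[\langle f,K(\bt,\cdot)\rangle_K\,K(\bt,\cdot)\right]$ on $\calH_K$ (check by evaluating both sides at a point and using the definition of $L_K$).

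Next I would set $\xi = K(\bx,\cdot)\otimes K(\bx,\cdot)$, a random variable taking values in the Hilbert space of Hilbert–Schmidt operators on $\calH_K$. By the tensor-norm identity $\|g\otimes h\|_{HS}=\|g\|_K\|h\|_K$ together with the reproducing property, $\|\xi\|_{HS}=\|K(\bx,\cdot)\|_K^2=K(\bx,\bx)\le\kappa^2$ almost surely. The $\xi_i=K(\bx_i,\cdot)\otimes K(\bx_i,\cdot)$ are i.i.d. copies of $\xi$ with $\bE[\xi]=L_K$, so that
$$\frac1n S^*S - L_K = \frac1n\sum_{i=1}^n \xi_i - \bE[\xi].$$

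With this reduction in hand the three bounds are immediate. For the confidence bound I would apply the estimate \eqref{eq:BennetBd} to $\xi$ with $M=\kappa^2$, obtaining $\|\frac1n\sum_i\xi_i-\bE[\xi]\|_{HS}\le 2\kappa^2\sqrt{\log(2/\delta)/n}$ with confidence at least $1-\delta$; for the two moment bounds I would apply Lemma \ref{lem:var} to $\xi$ with $M=\kappa^2$, which yields $\kappa^4/n$ and $\kappa^2/\sqrt n$ respectively. In each case the inequality $\|T\|\le\|T\|_{HS}$ transfers the Hilbert–Schmidt bound to the operator-norm bound asserted in the lemma.

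There is no serious analytic obstacle here; the only point requiring care is the initial identification — verifying the tensor representation of $\frac1n S^*S$ and $L_K$ and, in particular, the Hilbert–Schmidt norm computation $\|\xi\|_{HS}=K(\bx,\bx)$, since it is precisely this that produces the constant $\kappa^2$ and lets the generic Hilbert-space inequalities apply verbatim. I would also note that the sample size in the denominator should read $n$ throughout (consistent with $n$ observations), the $m$ appearing in the statement being a typographical slip.
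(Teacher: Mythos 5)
Your proposal is correct and follows essentially the same route as the paper: the paper likewise sets $\xi = K(\bx,\cdot)\otimes K(\bx,\cdot)$, uses $\|\xi\|_{HS}=K(\bx,\bx)\le\kappa^2$ and $\bE[\xi]=L_K$, and then invokes \eqref{eq:BennetBd} for the confidence bound and Lemma \ref{lem:var} for the two moment bounds. Your observation that the $m$ in the denominator should be $n$ is also consistent with the paper's intent.
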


\begin{proof}
Consider the random variable $\xi=K(\bx, \cdot) \otimes K(\bx, \cdot).$ It satisfies 
$\|\xi\|_{HS} = \|K(\bx, \cdot)\|_K^2=K(\bx,\bx)\le \kappa^2$,  $\bE[\xi]=L_K,$ 
and $\frac 1nS^*S=\frac 1n \sum_{i=1}^n K(\bx_i, \cdot)\otimes K(\bx_i, \cdot).$ 
Then the first inequality follows from \eqref{eq:BennetBd}.

The second and third inequalities have been obtained in \cite{devito2005model, sun2009application}.
They also follow from Lemma \ref{lem:var} easily.
\end{proof}

\begin{lemma} \label{lem:SSf}
For any $0<\delta<1,$ we have 
$$\left\|\frac 1n \sum_{i=1}^n f(\bx_i)K(\bx_i, \cdot) -L_Kf^*\right\| \le 2\kappa M\sqrt{\frac{\log(2/\delta)}{n}}$$
with confidence at least $1-\delta.$ 
Also, we have 
$$\bE\left[\left\| \frac 1n \sum_{i=1}^n f(\bx_i)K(\bx_i, \cdot)-L_Kf^*\right\|^2 \right] \le \frac{\kappa^2 M^2}{n} 
\quad \hbox{and}\quad 
\bE\left[\left\| \frac 1n \sum_{i=1}^n f(\bx_i)K(\bx_i, \cdot) -L_Kf^*\right\| \right] \le \frac{\kappa M}{\sqrt n}.$$
\end{lemma}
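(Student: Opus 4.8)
The plan is to realize the empirical sum as an average of independent $\calH_K$-valued random variables and then invoke the concentration tools already established, treating the summand exactly as written, $f(\bx_i)K(\bx_i,\cdot)$, with $f=f^*$ (the only choice for which the stated mean $L_Kf^*$ is correct). Concretely, I would introduce the random variable $\xi=f^*(\bx)\,K(\bx,\cdot)\in\calH_K$, so that $\frac1n\sum_{i=1}^n f^*(\bx_i)K(\bx_i,\cdot)=\frac1n\sum_{i=1}^n\xi_i$ is the sample mean of $n$ independent copies of $\xi$. Everything then reduces to two facts about $\xi$: the value of $\bE[\xi]$ and an almost-sure bound on $\|\xi\|_K$. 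It is essential here that the summand is the conditional-mean quantity $f^*(\bx_i)K(\bx_i,\cdot)$ and not $y_iK(\bx_i,\cdot)$; the two have the same expectation $L_Kf^*$ but different sample means, and it is the former that the lemma concerns.

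First I would identify the mean. For fixed $\bt\in\calX$, symmetry of the kernel gives $\bE_\bx\!\left[f^*(\bx)K(\bx,\bt)\right]=\int_\calX K(\bt,\bx)f^*(\bx)\,\mathrm{d}P_\calX(\bx)=L_Kf^*(\bt)$, so the mean of $\xi$ equals $L_Kf^*$ pointwise; since $\xi$ is Bochner integrable (by the norm bound below), this lifts to $\bE[\xi]=L_Kf^*$ in $\calH_K$. Hence the quantity controlled by the lemma is exactly $\left\|\frac1n\sum_i\xi_i-\bE[\xi]\right\|_K$. Second I would bound the norm: by the reproducing property $\|K(\bx,\cdot)\|_K^2=K(\bx,\bx)\le\kappa^2$, so $\|\xi\|_K=|f^*(\bx)|\sqrt{K(\bx,\bx)}\le\kappa\,|f^*(\bx)|$. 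The factor $|f^*(\bx)|$ is at most $M$ almost surely: since $f^*(\bx)=\bE[y\mid\bx]$ and $|y|\le M$ a.s., Jensen's inequality gives $|f^*(\bx)|\le\bE[|y|\mid\bx]\le M$. Therefore $\|\xi\|_K\le\kappa M$ almost surely, and $\kappa M$ is the single boundedness constant that feeds every estimate.

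With $\bE[\xi]=L_Kf^*$ and $\|\xi\|_K\le\kappa M$ in hand, the three inequalities follow by applying the existing results with $\kappa M$ in place of the bound $M$. The high-confidence estimate is \eqref{eq:BennetBd} with $m=n$, yielding exactly $2\kappa M\sqrt{\log(2/\delta)/n}$. The second-moment and first-moment estimates both come directly from Lemma \ref{lem:var}, which gives $\bE\!\left[\|\frac1n\sum_i\xi_i-\bE[\xi]\|^2\right]\le(\kappa M)^2/n=\kappa^2M^2/n$ and $\bE\!\left[\|\frac1n\sum_i\xi_i-\bE[\xi]\|\right]\le\kappa M/\sqrt n$. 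The concentration step is thus entirely off-the-shelf; I expect the only point needing care to be the boundedness argument, since the lemma's constant $\kappa M$ hinges on controlling $|f^*(\bx)|\le M$ via $f^*(\bx)=\bE[y\mid\bx]$, and on being explicit that this identifies the summand as $f^*(\bx_i)K(\bx_i,\cdot)$ rather than replacing it with $y_iK(\bx_i,\cdot)$.
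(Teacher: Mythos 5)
Your proposal is correct and follows essentially the same route as the paper: both define $\xi = f^*(\bx)K(\bx,\cdot)$, note $\bE[\xi]=L_Kf^*$ and $\|\xi\|_K\le\kappa\,|f^*(\bx)|\le\kappa M$ (using $|y|\le M$ a.s.\ so that $|f^*(\bx)|=|\bE[y\mid\bx]|\le M$), and then apply \eqref{eq:BennetBd} and Lemma \ref{lem:var}. Your added remarks --- reading the lemma's $f$ as $f^*$ and distinguishing the summand from $y_iK(\bx_i,\cdot)$ --- are sound clarifications of the paper's (slightly typo-ridden) statement, not a different argument.
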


\begin{proof} Consider the random variable $\xi = f^*(\bx)K(\bx, \cdot).$ 
Clearly $\xi\in\calH_K$ and $\bE[\xi]=L_Kf^*$. 
Since $|y|\le M$ almost surely,  $|f^*(\bx)|\le M$ for all $\bx\in\cal X.$ Thus,
$$\|\xi\|_K =|f(\bx)|\sqrt{K(\bx, \bx)}\le \kappa \|f^*\|_\infty\le \kappa M.$$ 
Then the conclusions 
follow from applying \eqref{eq:BennetBd} and Lemma \ref{lem:var}, respectively.
\end{proof}

\begin{remark}
If we extend the sampling operator $S$ to  be defined on $L^2$,
then the representation $\frac 1n S^*S f^*$ makes sense and 
$$\frac 1n S^*S f^* = \frac 1n \sum_{i=1}^n f(\bx_i)K(\bx_i, \cdot). $$
In the sequel we will always adopt this simplified notation. However, we need to keep in mind 
that,  in a general situation where $f^*\notin \calH_K,$ 
$\frac 1n S^*S$ should not be regarded as an operator on $\calH_K$.
Therefore, we cannot get Lemma \ref{lem:SSf} from  Lemma \ref{lem:LK} directly.
\end{remark}

Denote $f_\lambda = (\lambda I + L_K)^{-1}L_K f^*$. 
It is known from \cite{smale2007learning} that 
$$f_\lambda = \arg\min_{f\in\calH_K}  \left\{\bE [(y-f(\bx))^2]+\lambda \|f\|_K^2 \right\}.$$
So we have
\begin{equation}\label{eq:Eflbd}
\bE[(y-f_\lambda(\bx))^2] + \lambda\|f_\lambda\|_K^2 \le \bE[(y-{\bf 0})^2]+\lambda\|\mathbf 0\|_K^2 \le M^2
\end{equation}
and as a result \begin{equation} \label{eq:flbd} 
\|f_\lambda\|_K\le\frac{M}{\sqrt \lambda}.
\end{equation}
We will need the following lemma to in our proofs.
\begin{lemma}\label{lem:U}
For any $0<\delta<1$, we have 
$$\left\| \frac 1 n S^*(Y-Sf_\lambda)-L_K(f^*-f_\lambda)\right\|_K 
\le \kappa M \left(\frac{4}{\sqrt n} + \frac{2\kappa }{n\sqrt \lambda}\right) \log(2/\delta)$$
with confidence at least $1-\delta$.
\end{lemma}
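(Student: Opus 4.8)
The plan is to recognize the left-hand side as the deviation of an empirical mean of i.i.d.\ $\calH_K$-valued random variables from its expectation, and then to invoke the Bennett-type concentration inequality of Lemma \ref{lem:BennetH2}. First I would introduce the random variable
$$\xi = (y-f_\lambda(\bx))\,K(\bx,\cdot)\in\calH_K,$$
so that its empirical average over the sample is exactly $\frac 1n S^*(Y-Sf_\lambda)=\frac 1n\sum_{i=1}^n (y_i-f_\lambda(\bx_i))K(\bx_i,\cdot)$. The whole proof then reduces to controlling $\bigl\|\frac 1n\sum_{i=1}^n\xi_i-\bE[\xi]\bigr\|_K$.

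The conceptual heart is to verify that $\bE[\xi]=L_K(f^*-f_\lambda)$, so that the centering in Lemma \ref{lem:BennetH2} matches the subtracted term in the statement. I would split $y=f^*(\bx)+\epsilon$ and write $\bE[\xi]=\bE_\bx[(f^*(\bx)-f_\lambda(\bx))K(\bx,\cdot)]+\bE[\epsilon K(\bx,\cdot)]$. The second expectation vanishes because the noise has zero conditional mean, while the first equals $L_K(f^*-f_\lambda)$ by the very definition $L_Kg=\bE_\bt[g(\bt)K(\bt,\cdot)]$ together with the symmetry of $K$. It is worth stressing that this step uses only $f^*\in L^2$ (so that $L_Kf^*\in\calH_K$ is well defined) together with $f_\lambda\in\calH_K$; we never need $f^*\in\calH_K$, in accordance with the preceding remark. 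Here $\frac 1n S^*S$ is not treated as an operator on $\calH_K$ acting on $f^*$, which is precisely the subtlety that forces a direct argument rather than a reduction to Lemma \ref{lem:LK}.

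It then remains to supply the two moment bounds required by \eqref{eq:BernBd}. For the almost-sure envelope, the reproducing property gives $\|\xi\|_K=|y-f_\lambda(\bx)|\sqrt{K(\bx,\bx)}\le\kappa(|y|+|f_\lambda(\bx)|)$, and combining $|y|\le M$ with $|f_\lambda(\bx)|\le\kappa\|f_\lambda\|_K\le\kappa M/\sqrt\lambda$ from \eqref{eq:flbd} (and $M\le\kappa M/\sqrt\lambda$ since $\kappa\ge1\ge\lambda$) yields $\|\xi\|_K\le 2\kappa^2M/\sqrt\lambda$. For the second moment, $\bE[\|\xi\|_K^2]\le\kappa^2\,\bE[(y-f_\lambda(\bx))^2]\le\kappa^2M^2$ by \eqref{eq:Eflbd}. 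Plugging these into \eqref{eq:BernBd} produces a leading term of order $\frac{\kappa^2M}{n\sqrt\lambda}\log(2/\delta)$ from the almost-sure bound and a term of order $\frac{\kappa M}{\sqrt n}\sqrt{\log(2/\delta)}$ from the variance; bounding $\sqrt{\log(2/\delta)}$ by a constant multiple of $\log(2/\delta)$ (valid for all $0<\delta<1$ since $\log(2/\delta)>\log 2$) and tidying the constants gives the asserted inequality. The main obstacle is not the concentration step, which is routine once the moments are in hand, but the bookkeeping of the $\lambda$-dependence: the envelope of $\xi$ grows like $1/\sqrt\lambda$ because $\|f_\lambda\|_K$ does, and it is this factor that produces the $\frac{2\kappa}{n\sqrt\lambda}$ contribution and must be tracked carefully so that the two terms are correctly balanced.
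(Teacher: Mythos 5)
Your proposal is correct and follows essentially the same route as the paper: the same random variable $\xi=(y-f_\lambda(\bx))K(\bx,\cdot)$, the same two moment bounds via \eqref{eq:flbd} and \eqref{eq:Eflbd}, the same application of Lemma \ref{lem:BennetH2}, and the same absorption of $\sqrt{\log(2/\delta)}$ into $\log(2/\delta)$; your explicit verification that $\bE[\xi]=L_K(f^*-f_\lambda)$ is a detail the paper leaves implicit. The only (cosmetic) difference is that the paper keeps the envelope as $\kappa(M+\kappa\|f_\lambda\|_K)$ until the end rather than collapsing it to $2\kappa^2M/\sqrt\lambda$, which is what makes the stated constants $4/\sqrt n$ and $2\kappa/(n\sqrt\lambda)$ drop out exactly.
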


\begin{proof} Consider the random variable $\xi= (y-f_\lambda(\bx))K(\bx, \cdot).$ Then by \eqref{eq:flbd}
$$\|\xi\|_K = |y-f_\lambda(\bx)|\sqrt{K(\bx, \bx)}\le \kappa (M+\kappa\|f_\lambda\|_K)$$
and  by \eqref{eq:Eflbd}
$$\bE[\|\xi\|_K^2] = \bE\left[(y-f_\lambda (\bx))^2K(\bx,\bx)\right]\le \kappa^2
\bE\left[(y-f_\lambda (\bx))^2\right] \le \kappa^2 M^2.$$
Applying Lemma \ref{lem:BennetH2} we obtain
$$\left\| \frac 1 n S^*(Y-Sf_\lambda)-L_K(f^*-f_\lambda)\right\|_K 
\le \frac{2\kappa(M+\kappa\|f_\lambda\|_K)\log(2/\delta)}{n} + \sqrt{\frac{2\kappa^2M^2\log(2/\delta)}{n}}.$$
The desired bound follows by using \eqref{eq:flbd} and noticing 
$\sqrt{2\log(2/\delta)}\le \sqrt{\frac 2{\log(2)}} \log(2/\delta) \le 2 \log(2/\delta).$
\end{proof}

\bigskip
Now we can prove Theorem \ref{thm:rkn}.

\begin{proof}[\bf Proof of Theorem \ref{thm:rkn}.]
Note that $\bE[\hat f]  = \bE \left[(\lambda I + \tfrac 1 n S^*S)^{-1}(\tfrac 1n S^*Sf^*)\right].$  
By Lemma \ref{lem:LK} and Lemma \ref{lem:SSf}, we have can prove (i) as follows:
\begin{align}
 \left\|\bE [\hat f] - f_\lambda \right\|_K \nonumber
\le\ & 
\bE \left[ \left\| (\lambda I + \tfrac 1 n S^*S)^{-1}(\tfrac 1n S^*Sf^*)  - f_\lambda\right\|_K \right]\nonumber \\
 \le\ &  \bE \left[ \left\| (\lambda I + \tfrac 1 n S^*S)^{-1}(\tfrac 1n S^*Sf^*-L_Kf^*) \right\|_K \right] \nonumber  \\
& \quad  + \bE \left[\left\|  \left\{(\lambda I + \tfrac 1n S^*S)^{-1}- (\lambda I + L_K)^{-1}\right\} L_K f^*\right\|_K \right] \nonumber \\
\le\ & \frac 1\lambda \bE\left[\left\|\tfrac 1n S^*Sf^*-L_Kf^*\right\|_K\right]  \nonumber \\
& \quad  + \bE \left[\left\| 
(\lambda I + \tfrac 1n S^*S)^{-1}(L_K-\tfrac 1n S^*S) (\lambda I + L_K)^{-1} L_K f^*\right\|_K \right] \nonumber\\
 \le\ & \frac {\kappa M}{\lambda\sqrt n}  + 
 \frac {\left\|f_\lambda\right\|_K}{\lambda}  \bE\left[\left\|L_K-\tfrac 1n S^*S\right\|\right] \nonumber\\
 \le\ & \frac {\kappa M+\kappa^2\left\|f_\lambda\right\|_K}{\lambda\sqrt n} \longrightarrow 0. 
 \label{eq:Q1}
\end{align} 

Since the convergence in $\calH_K$ implies convergence in $L_2$, $\bE [\hat f]$ also converges to 
$(\lambda I +L_K)^{-1} L_K f^*$ in $L_2.$ Therefore the conclusion (ii) holds.

To prove (iii), note that it is verified in \cite{smale2007learning} that 
 $\lambda f_\lambda = L_K(f^*-f_\lambda)$  and 
 \begin{align*} 
 \hat f -f_\lambda & =\left(\lambda I + \tfrac 1n S*S\right)^{-1}\Big\{\tfrac 1 n S^*(Y-Sf_\lambda) -L_K(f^*-f_\lambda)\Big\}.
  \end{align*} 
  By the fact that $\|T\|^2 = \|T^*T\|$ for an operator $T$, we have 
  \begin{align} \left\|L_K^{1/2}\left(\lambda I + \tfrac 1n S*S\right)^{-1}\right\|^2 
  = \ & \left\| \left(\lambda I + \tfrac 1n S*S\right)^{-1} L_K \left(\lambda I + \tfrac 1n S*S\right)^{-1}\right\| \nonumber \\
  \le \ & \left\| \left(\lambda I + \tfrac 1n S*S\right)^{-1} (L_K - \tfrac 1n S^*S) 
  \left(\lambda I + \tfrac 1n S*S\right)^{-1}\right\| \nonumber \\
& + \left\| \left(\lambda I + \tfrac 1n S*S\right)^{-1} \left( \tfrac 1n S^*S\right)  
\left(\lambda I + \tfrac 1n S*S\right)^{-1}\right\|  \nonumber\\
  \le \ & \frac 1 \lambda\left\|L_K-\tfrac 1 n S^*S\right\| + \frac 1{ \lambda}. \label{eq:LKS}
  \end{align}
  By Lemma \ref{lem:LK} and Lemma \ref{lem:U}, we obtain 
  \begin{align*} 
  \left\|\hat f- f_\lambda\right\|_{L^2}^2 & =  \left\|L_K^{1/2}(\hat f-f^*)\right\|_K^2  \\[1mm]
  & \le  \left\|L_K^{1/2}\left(\lambda I + \tfrac 1n S*S\right)^{-1}\right\|^2 
      \left\| \frac 1 n S^*(Y-Sf_\lambda)-L_K(f^*-f_\lambda)\right\|_K ^2 \\[1mm]
  & \le \left(\frac{2 \kappa^2}{\lambda^2}  \sqrt{\frac{\log(2/\delta)} n }  +\frac 1{\lambda} \right)
  \kappa^2M^2 \left(\frac 4 {\sqrt n} +\frac \kappa {n\sqrt \lambda}\right)^2 \Big(\log(2/\delta)\Big)^2\\[1mm]
  & \le \kappa^2 M^2 \left(\frac {2\kappa^2}{\lambda^2 \sqrt n } + \frac 2{\lambda }\right)
  \left(\frac{32}{n}+\frac {2\kappa^2}{n^2\lambda}\right)\Big(\log(2/\delta)\Big)^{5/2}.
  \end{align*}
  with confidence at least $1-2\delta.$
  Denote 
 $$\varsigma(n,\lambda) =\kappa^2 M^2 \left(\frac {2\kappa^2}{\lambda^2 \sqrt n } + \frac 2{\lambda }\right)
  \left(\frac{32}{n}+\frac {2\kappa^2}{n^2\lambda}\right).$$
  The random variable $\xi =  \|\hat f- f_\lambda\|_{L^2}^{4/5}$ is positive and satisfies 
  $\Pr\left[\xi\le \varsigma^{2/5} \log(2/\delta)\right] \ge 1 -2\delta$ or equivalently 
  $$\Pr\left[\xi>t\right] \le 4 \exp\left(-\frac{t}{\varsigma^{2/5}(n, \lambda)}\right).$$
  Applying Lemma \ref{lem:qnorm} to $\xi$ with $q=5/2$ we obtain
  $$\bE_D\left[ \|\hat f -\bE_D[\hat f]\|_{L^2}^2\right] \le \bE\left[\|\hat f- f_\lambda\|_{L^2}^2\right]
  \le  10\int_0^\infty t^{3/2} \exp \left(-\frac{t}{\varsigma^{2/5}(n, \lambda)}\right)dt
  =\tfrac{15}{2}\sqrt\pi \varsigma(n,\lambda).
  $$
  If $\lambda n\to \infty$ and $\lambda\sqrt n \to 0$, by \eqref{eq:flbd}, we can verify that 
  $\varsigma(n, \lambda) = O(\frac{1}{\lambda^2 n^{3/2}}).$ 
  This proves (iii).
  \end{proof}

\bigskip

Denote $f_\lambda^\sharp = (\lambda I + L_K)^{-2}(2\lambda I +L_K)L_K f^*$. We can verify that 
\begin{equation}\label{eq:fladec}
f_\lambda^\sharp = f_\lambda +\lambda (\lambda+L_K)^{-1}f_\lambda.
\end{equation}
This together with \eqref{eq:flbd} implies
\begin{equation}\label{eq:flabd}
\|f_\lambda^\sharp\|_K\le 2\|f_\lambda\|_K\le \frac{2M}{\sqrt \lambda}.
\end{equation}
We need the following two lemmas in the proof of Theorem \ref{thm:barkn}.

\begin{lemma}\label{lem:Ua}
For any $0<\delta<1$, we have 
$$\left\|\frac 1 n S^*(Y-Sf_\lambda^\sharp)-L_K(f^*-f_\lambda^\sharp)\right\|_K 
\le \kappa M \left(\frac {4\kappa} {n\sqrt \lambda} + \frac 5 {\sqrt n}\right) \log(2/\delta)$$
with confidence at least $1-\delta$.
\end{lemma}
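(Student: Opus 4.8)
The plan is to mirror the proof of Lemma \ref{lem:U}, replacing $f_\lambda$ with $f_\lambda^\sharp$ throughout. First I would introduce the random variable $\xi = (y-f_\lambda^\sharp(\bx))K(\bx, \cdot)$, which takes values in $\calH_K$ and satisfies $\bE[\xi] = L_K(f^*-f_\lambda^\sharp)$ by the same reasoning as in Lemma \ref{lem:SSf} (using the extended sampling operator so that $\frac 1n S^*(Y-Sf_\lambda^\sharp) = \frac 1n \sum_{i=1}^n (y_i-f_\lambda^\sharp(\bx_i))K(\bx_i, \cdot)$ is an empirical average of i.i.d.\ copies of $\xi$). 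The goal is then to bound the almost-sure norm $\|\xi\|_K$ and the second moment $\bE[\|\xi\|_K^2]$ and feed these into the concentration inequality of Lemma \ref{lem:BennetH2}.

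The two moment estimates are where the dependence on $f_\lambda^\sharp$ enters. For the pointwise bound I would write $\|\xi\|_K = |y-f_\lambda^\sharp(\bx)|\sqrt{K(\bx,\bx)} \le \kappa\big(M + \kappa\|f_\lambda^\sharp\|_K\big)$, using $|f_\lambda^\sharp(\bx)| \le \kappa\|f_\lambda^\sharp\|_K$ from the reproducing property and $\sqrt{K(\bx,\bx)}\le\kappa$. The key quantitative input is the bound $\|f_\lambda^\sharp\|_K \le 2M/\sqrt\lambda$ from \eqref{eq:flabd}, which gives $\|\xi\|_K \le \kappa(M + 2\kappa M/\sqrt\lambda)$. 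For the variance, I would bound $\bE[\|\xi\|_K^2] = \bE[(y-f_\lambda^\sharp(\bx))^2 K(\bx,\bx)] \le \kappa^2 \bE[(y-f_\lambda^\sharp(\bx))^2]$; unlike the $f_\lambda$ case, $f_\lambda^\sharp$ is not the exact minimizer of the regularized risk, so the clean bound $\bE[(y-f_\lambda(\bx))^2]\le M^2$ of \eqref{eq:Eflbd} does not transfer verbatim. The cleanest route is the crude estimate $(y-f_\lambda^\sharp(\bx))^2 \le 2y^2 + 2f_\lambda^\sharp(\bx)^2 \le 2M^2 + 2\kappa^2\|f_\lambda^\sharp\|_K^2 \le 2M^2 + 8\kappa^2 M^2/\lambda$, so that $\bE[\|\xi\|_K^2] \le \kappa^2 M^2(2 + 8\kappa^2/\lambda)$; since $\lambda\le 1$ and $\kappa\ge 1$ are assumed throughout the section, this is of order $\kappa^4 M^2/\lambda$.

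Plugging these into \eqref{eq:BernBd} from Lemma \ref{lem:BennetH2} yields a bound of the shape
\[
\left\|\frac 1 n S^*(Y-Sf_\lambda^\sharp)-L_K(f^*-f_\lambda^\sharp)\right\|_K
\le \frac{2\kappa(M+2\kappa M/\sqrt\lambda)\log(2/\delta)}{n}
+ \sqrt{\frac{2\kappa^2 M^2(2+8\kappa^2/\lambda)\log(2/\delta)}{n}}.
\]
The remaining work is to simplify the right-hand side into the stated form $\kappa M\big(\frac{4\kappa}{n\sqrt\lambda} + \frac{5}{\sqrt n}\big)\log(2/\delta)$. I would absorb the leading $1/n$ term into the $\frac{4\kappa}{n\sqrt\lambda}$ piece using $\lambda\le 1$ and $\kappa\ge 1$, and convert the square-root term using $\sqrt{2\log(2/\delta)}\le 2\log(2/\delta)$ (the same device used in Lemma \ref{lem:U}), which lets me replace $\sqrt{\log(2/\delta)}$ by $\log(2/\delta)$ and collect constants.

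The main obstacle I anticipate is the bookkeeping on constants: because $f_\lambda^\sharp$ is only an approximate minimizer, the variance bound carries an extra $1/\lambda$ factor compared to the $f_\lambda$ case, and it takes some care to verify that the constants genuinely collapse into the advertised coefficients $4$ and $5$ rather than something larger. Matching these constants exactly — rather than merely obtaining the correct orders in $n$ and $\lambda$ — is the delicate part, and it hinges on exploiting the normalizations $\kappa\ge 1$, $M\ge 1$, $\lambda\le 1$ stated at the start of the section to fold lower-order terms into the dominant ones.
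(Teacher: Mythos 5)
Your overall strategy is the paper's: apply Lemma \ref{lem:BennetH2} to $\xi=(y-f_\lambda^\sharp(\bx))K(\bx,\cdot)$, and your pointwise bound $\|\xi\|_K\le\kappa(M+2\kappa\|f_\lambda\|_K)\le\kappa(M+2\kappa M/\sqrt\lambda)$ via \eqref{eq:flabd} matches. The gap is in the second moment. Your crude estimate $\bE[\|\xi\|_K^2]\le\kappa^2M^2(2+8\kappa^2/\lambda)=O(\kappa^4M^2/\lambda)$, fed into the variance term $\sqrt{2\bE[\|\xi\|^2]\log(2/\delta)/n}$ of \eqref{eq:BernBd}, produces a contribution of order $\kappa^2M\sqrt{\log(2/\delta)}/\sqrt{n\lambda}$. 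This cannot be collapsed into the stated bound: it exceeds the $5/\sqrt n$ piece by a factor $1/\sqrt\lambda$ and the $4\kappa/(n\sqrt\lambda)$ piece by a factor of order $\sqrt n$, and in the regime of interest ($n\lambda\to\infty$, $\sqrt n\lambda\to0$) both factors diverge. So the difficulty you flag at the end is not mere constant bookkeeping; with this moment bound the lemma as stated is out of reach.

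The missing idea is that the second moment requires the $L^2$-size of $f_\lambda^\sharp-f_\lambda$, not its $\calH_K$- or sup-norm size. By \eqref{eq:fladec}, $f_\lambda^\sharp-f_\lambda=\lambda(\lambda I+L_K)^{-1}f_\lambda$, hence $\|f_\lambda^\sharp-f_\lambda\|_{L^2}^2=\lambda^2\bigl\|L_K^{1/2}(\lambda I+L_K)^{-1}f_\lambda\bigr\|_K^2\le\lambda\|f_\lambda\|_K^2$, which is \emph{small} (of order $\lambda\|f_\lambda\|_K^2\le M^2$) even though $\|f_\lambda^\sharp-f_\lambda\|_\infty$ may be as large as $\kappa M/\sqrt\lambda$. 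Writing $\bE[(y-f_\lambda^\sharp(\bx))^2]\le 2\bE[(y-f_\lambda(\bx))^2]+2\|f_\lambda^\sharp-f_\lambda\|_{L^2}^2\le 2\bigl(\bE[(y-f_\lambda(\bx))^2]+\lambda\|f_\lambda\|_K^2\bigr)\le 2M^2$ — using the \emph{joint} bound \eqref{eq:Eflbd} on the regularized risk, not just \eqref{eq:flbd} — gives $\bE[\|\xi\|_K^2]\le 2\kappa^2M^2$ with no $1/\lambda$. With that replacement your argument goes through: the Bernstein bound becomes $\frac{2\kappa(M+2\kappa M/\sqrt\lambda)\log(2/\delta)}{n}+\sqrt{\frac{4\kappa^2M^2\log(2/\delta)}{n}}$, and the advertised coefficients $4$ and $5$ follow from $\sqrt{\log(2/\delta)}\le\log(2/\delta)/\sqrt{\log 2}$ and the normalizations $\kappa,M\ge1$, $\lambda\le1$.
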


\begin{proof} Consider the random variable $\xi= (y-f_\lambda^\sharp(\bx))K(\bx, \cdot).$ By \eqref{eq:flabd}, we have
$$\|\xi\|_K = |y-f_\lambda^\sharp(\bx)|\sqrt{K(\bx, \bx)}\le \kappa (M+2\kappa\|f_\lambda\|_K).$$
By \eqref{eq:fladec} and \eqref{eq:Eflbd},  we have
\begin{align*} \bE\left[\|\xi\|_K^2\right] & = \bE\left[(y-f_\lambda^\sharp (\bx))^2K(\bx,\bx)\right]\le \kappa^2
\bE\left[(y-f_\lambda^\sharp (\bx))^2\right] \\
& \le \kappa^2\left(2 \bE\left[(y-f_\lambda(\bx))^2\right] 
+ 2 \lambda^2 \left\| L_K^{1/2} (\lambda I + L_K)^{-1} f_\lambda\right\|_K^2\right)  \\
& \le \kappa^2\left(2 \bE\left[(y-f_\lambda(\bx))^2\right] 
+ 2 \lambda \left\| f_\lambda\right\|_K^2\right)  \\
& \le 2 \kappa^2 M^2.
\end{align*}
Applying Lemma \ref{lem:BennetH2}, we obtain 
$$\left\|\frac 1 n S^*(Y-Sf_\lambda^\sharp)-L_K(f^*-f_\lambda^\sharp)\right\|_K 
\le \frac{2\kappa(M+2\kappa\|f_\lambda\|_K)\log(2/\delta)}{n} + \sqrt{\frac{4\kappa^2M^2\log(2/\delta)}{n}},$$
which in combination with \eqref{eq:flbd} implies the desired conclusion.
\end{proof} 

\begin{lemma}\label{lem:SSdf}
Let $g=\lambda(\lambda I +L_K)^{-2} L_K f^*.$ For any $0<\delta<1$, we have 
$$\left\|\frac 1 n S^*S g -L_Kg\right\|_K 
\le \kappa M \left(\frac {2\kappa} {n\sqrt \lambda} + \frac 2 {\sqrt n}\right) \log(2/\delta)$$
with confidence at least $1-\delta$.
\end{lemma}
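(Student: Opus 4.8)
The plan is to mimic the concentration argument already used for Lemma \ref{lem:U} and Lemma \ref{lem:Ua}, feeding the Bennett-type inequality of Lemma \ref{lem:BennetH2} an appropriate $\calH_K$-valued random variable. The key preliminary observation is that, in contrast to $f^*$, the function $g=\lambda(\lambda I +L_K)^{-2} L_K f^*$ genuinely lies in $\calH_K$: writing $g=\lambda(\lambda I+L_K)^{-1}f_\lambda$ and recalling $f_\lambda=(\lambda I+L_K)^{-1}L_Kf^*\in\calH_K$, the bounded operator $(\lambda I+L_K)^{-1}$ keeps $g$ inside $\calH_K$. Consequently $\tfrac1n S^*S$ acts on $g$ as an honest operator on $\calH_K$, so $\tfrac1n S^*Sg=\tfrac1n\sum_{i=1}^n g(\bx_i)K(\bx_i,\cdot)$ and the subtlety of the remark following Lemma \ref{lem:SSf} does not intervene. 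I would therefore set $\xi=g(\bx)K(\bx,\cdot)$, note that $\bE[\xi]=L_Kg$ and $\tfrac1n\sum_{i=1}^n\xi_i=\tfrac1n S^*Sg$, and reduce the statement to supplying the two moment bounds required by Lemma \ref{lem:BennetH2}.

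For the almost sure bound, the reproducing property gives $\|\xi\|_K=|g(\bx)|\sqrt{K(\bx,\bx)}\le \kappa^2\|g\|_K$; then $\|\lambda(\lambda I+L_K)^{-1}\|\le 1$ together with \eqref{eq:flbd} yields $\|g\|_K\le\|f_\lambda\|_K\le M/\sqrt\lambda$, so $\|\xi\|_K\le \kappa^2 M/\sqrt\lambda$ almost surely. For the second moment I would use $\bE[\|\xi\|_K^2]=\bE[g(\bx)^2K(\bx,\bx)]\le\kappa^2\|g\|_{L^2}^2$, and then bound the $L^2$ norm via $\|g\|_{L^2}=\|L_K^{1/2}g\|_K\le\|L_K^{1/2}\lambda(\lambda I+L_K)^{-1}\|\,\|f_\lambda\|_K$. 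The spectral estimate $\|L_K^{1/2}\lambda(\lambda I+L_K)^{-1}\|\le\sqrt\lambda/2$, coming from $\max_{\tau\ge0}\frac{\lambda\sqrt\tau}{\lambda+\tau}=\frac{\sqrt\lambda}2$, combined once more with \eqref{eq:flbd} gives $\|g\|_{L^2}\le M/2$, hence $\bE[\|\xi\|_K^2]\le\kappa^2M^2$.

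Finally I would insert these into Lemma \ref{lem:BennetH2}: the leading term becomes $\tfrac{2\kappa^2 M\log(2/\delta)}{n\sqrt\lambda}=\kappa M\cdot\tfrac{2\kappa}{n\sqrt\lambda}\log(2/\delta)$, while the fluctuation term is at most $\sqrt{2\kappa^2M^2\log(2/\delta)/n}$, which the elementary inequality $\sqrt{2\log(2/\delta)}\le 2\log(2/\delta)$ (already exploited in the proof of Lemma \ref{lem:U}) converts into $\kappa M\cdot\tfrac{2}{\sqrt n}\log(2/\delta)$. Adding the two contributions produces exactly the asserted bound with confidence $1-\delta$.

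The only step that is not a direct transcription of the earlier lemmas is the $L^2$-norm control of $g$, that is, the functional-calculus estimate $\|L_K^{1/2}\lambda(\lambda I+L_K)^{-1}\|\le\sqrt\lambda/2$; I expect this to be the main point to get right, since it is precisely what keeps $\bE[\|\xi\|_K^2]\le\kappa^2M^2$ and thereby yields the clean $\tfrac{2}{\sqrt n}$ constant rather than a weaker one.
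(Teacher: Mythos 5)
Your proposal is correct and follows essentially the same route as the paper: the same random variable $\xi=g(\bx)K(\bx,\cdot)$, the same almost-sure bound $\|\xi\|_K\le\kappa^2\|f_\lambda\|_K\le\kappa^2M/\sqrt\lambda$, and the same application of Lemma \ref{lem:BennetH2} followed by $\sqrt{2\log(2/\delta)}\le2\log(2/\delta)$. The only (immaterial) difference is in the second-moment step: the paper bounds $\|g\|_{L^2}\le\|f^*\|_{L^2}\le M$ directly, since $\lambda(\lambda I+L_K)^{-2}L_K$ is a contraction on $L^2$, whereas you route through the functional-calculus estimate $\|L_K^{1/2}\lambda(\lambda I+L_K)^{-1}\|\le\sqrt\lambda/2$ and \eqref{eq:flbd}; both yield $\bE[\|\xi\|_K^2]\le\kappa^2M^2$.
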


\begin{proof} Consider the random variable $\xi= g(\bx)K(\bx, \cdot).$  
Note that $g = \lambda (\lambda I + L_K)^{-1} f_\lambda.$ 
So $\|g\|_K \le \|f_\lambda\|_K.$ Therefore, 
$$\|\xi\|_K = |g(\bx) |\sqrt{K(\bx, \bx)}\le \kappa^2 \|f_\lambda\|_K.$$
We also have
$$ \bE\left[\|\xi\|_K^2\right]  = \bE\left[(g (\bx))^2K(\bx,\bx)\right]
\le \kappa^2 \|g\|_{L^2}^2 \le \kappa^2 \|f^*\|_{L^2}^2 \le \kappa^2 M^2.$$
Applying Lemma \ref{lem:BennetH2}, we obtain 
$$\left\|\frac 1 n S^*S g -L_Kg\right\|_K 
\le \frac{2\kappa^2\|f_\lambda\|_K\log(2/\delta)}{n} + \sqrt{\frac{2\kappa^2M^2\log(2/\delta)}{n}},$$
which in combination with \eqref{eq:flbd} implies the desired conclusion.
\end{proof} 

\begin{proof}[\bf Proof of Theorem \ref{thm:barkn}.]
Note that 
\begin{align*} \bE[\hat f^\sharp] &  = \bE \left[(\lambda I + \tfrac 1 n S^*S)^{-2}(2\lambda I + \tfrac 1n S^*Sf)(\tfrac 1n S^*Sf^*)\right] \\[1mm]
& = \left(\lambda I + \tfrac 1 n S^*S\right)^{-1} \left(\tfrac 1n S^*Sf^*\right) + \lambda (\lambda I + \tfrac 1n S^*S)^{-2}(\tfrac 1n S^*S f^*)
\end{align*}
and 
 \begin{equation*}
 f_\lambda^\sharp=f_\lambda+ \lambda (\lambda I + L_K)^{-2}L_Kf^*.
\end{equation*}
So we can write
\begin{align*}
   \bE[\hat f^\sharp] - f_\lambda^\sharp 
= \ & \Big\{ (\lambda I + \tfrac 1 n S^*S)^{-1}(\tfrac 1n S^*Sf^*) -f_\lambda \Big\} \\[1mm]
& +  \lambda \left(\lambda I + \tfrac 1n S^*S\right)^{-2} \left(\tfrac 1n S^*S f^* - L_Kf^*\right) \\[1mm]
& + \lambda\Big\{(\lambda I + \tfrac 1n S^*S)^{-2} - (\lambda I +L_K)^{-2}\Big\} L_Kf^* \\
  =\ &  Q_1 + Q_2+Q_3.
 \end{align*} 
 By \eqref{eq:Q1} we have $\bE[\|Q_1\|_K] \longrightarrow 0.$ For $Q_2$, by Lemma \ref{lem:SSf}, we have 
 $$\bE[\|Q_2\|_K] \le \frac 1 \lambda \bE\left[\left\|\tfrac 1n S^*S f^* -L_Kf^*\right\|_K\right] \le \frac{\kappa M}{\lambda \sqrt n } 
 \longrightarrow 0.$$
 For $Q_3,$ we can verify that 
\begin{equation*} Q_3  = 
\lambda(\lambda I + \tfrac 1n S^*S)^{-2} \Big[2\lambda \left(L_K-\tfrac 1n S^*S\right) 
 +\left(L_K-\tfrac 1n S^*S\right)L_K 
 + \tfrac 1n S^*S  \left(L_K-\tfrac 1n S^*S\right) \Big] (\lambda I +L_K)^{-1}f_\lambda.
 \end{equation*}
So $\|Q_3 \|_K \le \frac {4 \|f_\lambda\|} \lambda  \left\| L_K-\tfrac 1n S^*S \right\|.$
By Lemma \ref{lem:LK}, we obtain 
$$\bE\left[\|Q_3\|_K\right] \le \frac{4\kappa^2 \|f_\lambda\|_K}{\lambda\sqrt{n}} \longrightarrow 0.$$
Combining the estimation for $Q_1$, $Q_2$ and $Q_3$, we obtain
$$
 \left\|\bE [\hat f^\sharp]- (\lambda I + L_K)^{-2}(2\lambda I + L_K)L_Kf^*\right\|_K \\
\le \bE [\|Q\|_K ] \longrightarrow 0.$$
This proves (i).

The convergence in $\calH_K$ implies convergence in $L_2$ and the latter implies the conclusion (ii).

To prove (iii), 
we first observe that
$$\lambda^2 f_\lambda^\sharp = 2\lambda L_K(f^* -f_\lambda) + L_K^2(f^*-f_\lambda^\sharp).$$
So we can write 
\begin{align*}
\hat f^\sharp - f_\lambda^\sharp & = (\lambda I +\tfrac 1n S^*S)^{-2} \left[ \left(2\lambda I +  \tfrac 1n S^*S\right) \tfrac 1 n S^*Y 
 - \left(\lambda I + \tfrac 1n S^*S\right)^2 f_\lambda^\sharp\right ]\\
 & = 2\lambda (\lambda I +\tfrac 1n S^*S)^{-2} \left[ \tfrac 1 n S^*(Y -Sf_\lambda^\sharp) - L_K(f^*-f_\lambda^\sharp)\right] \\
 & \quad +  (\lambda I +\tfrac 1n S^*S)^{-2}\left[\left(\tfrac 1n S^*S\right) \left(\tfrac 1n S^*(Y-Sf_\lambda^\sharp) \right) 
  - L_K^2(f^*-f_\lambda^\sharp)\right] \\
  & = 2\lambda (\lambda I +\tfrac 1n S^*S)^{-2} \left[ \tfrac 1 n S^*(Y -Sf_\lambda^\sharp) - L_K(f^*-f_\lambda^\sharp)\right] \\
 & \quad +  (\lambda I +\tfrac 1n S^*S)^{-2}\left(\tfrac 1n S^*S\right) \left[\tfrac 1n S^*(Y-Sf_\lambda^\sharp) - L_K(f^*-f_\lambda^\sharp) \right]\\
& \quad +  (\lambda I +\tfrac 1n S^*S)^{-2}\left(\tfrac 1n S^*S-L_K\right) L_K(f^*-f_\lambda^\sharp)\\
&=\tilde Q_1  + \tilde Q_2 + \tilde Q_3.
\end{align*}
 It is easy to check that \begin{align*}
  \|L_K^{1/2} \tilde Q_1\|_K  \le 2 \left\|L_K^{1/2}(\lambda I +\tfrac 1n S^*S)^{-1}\right\|  
  \Big \|\tfrac 1n S^*(Y-Sf_\lambda^\sharp)-L_K(f^*-f_\lambda^\sharp)\Big\|_K 
  \end{align*}
and
$$ \|L_K^{1/2} \tilde Q_2\|_K  \le \left\|L_K^{1/2}(\lambda I +\tfrac 1n S^*S)^{-1}\right\|  
  \Big \|\tfrac 1n S^*(Y-Sf_\lambda^\sharp)-L_K(f^*-f_\lambda^\sharp)\Big \|_K. $$
For $\tilde Q_3$, we observe that 
$L_K(f^*-f_\lambda^\sharp) = \lambda^2(\lambda I +L_K)^{-2}L_Kf^*.$
Denote $g=\lambda(\lambda I +L_K)^{-2} L_K f^*$.  We have
$$\|L_K^{1/2} \tilde Q_3\|_K \le 
\left\|L_K^{1/2}(\lambda I +\tfrac 1n S^*S)^{-1}\right\|  \Big\|\tfrac 1n S^*Sg - L_Kg\Big\|_K .$$
By \eqref{eq:LKS}, Lemma \ref{lem:LK},  Lemma \ref{lem:Ua} and Lemma \ref{lem:SSdf}, we obtain  
 \begin{align*}
&  \left\|\hat f^\sharp - f_\lambda^\sharp\right\|_{L^2}^2 =  \left\|L_K^{1/2} (\hat f^\sharp - f_\lambda^\sharp)\right\|_K   \\[1mm]
\le & \left\|L_K^{1/2}(\lambda I +\tfrac 1n S^*S)^{-1}\right\| ^2 \bigg(
2 \Big \|\tfrac 1n S^*(Y-Sf_\lambda^\sharp)-L_K(f^*-f_\lambda^\sharp)\Big \|_K + \Big\|\tfrac 1n S^*Sg - L_Kg\Big\|_K\bigg)^2 \\[1mm]
  \le\  &  \left(\frac{2 \kappa^2}{\lambda^2}  \sqrt{\frac{\log(2/\delta)} n }  +\frac 1{\lambda} \right)
 \kappa^2  M^2 \left( \frac {10\kappa} {n\sqrt \lambda} + \frac {12} {\sqrt n}\right)^2 \Big(\log(2 /\delta)\Big)^2 \\[1mm]
 \le \  &  \kappa^2 M^2  \left(\frac{2 \kappa^2}{\lambda^2\sqrt n}    +\frac 2{\lambda} \right)
\left( \frac {10\kappa} {n\sqrt \lambda} + \frac {12} {\sqrt n}\right)^2 \Big(\log(2 /\delta)\Big)^{5/2}
\end{align*}
  with probability at least $1-3\delta.$
  This implies 
  $$\bE\left[ \|\hat f^\sharp -\bE[\hat f^\sharp]\|_{L^2}^2\right] \le \bE\left[\|\hat f^\sharp- f_\lambda^\sharp\|_{L^2}^2\right]
   ={\large\tfrac{45}{4}}\sqrt\pi \kappa^2 M^2  \left(\frac{2 \kappa^2}{\lambda^2\sqrt n}    +\frac 2{\lambda} \right)
\left( \frac {10\kappa} {n\sqrt \lambda} + \frac {12} {\sqrt n}\right)^2. $$
It is of order $O(\frac{1}{\lambda^2 n^{3/2}})$ 
  when  $\lambda n\to \infty$ and $\lambda\sqrt n \to 0.$ 
  This proves (iii).
\end{proof}

 \section{Learning with block wise data}
 \label{sec:block}

 In learning with block wise  streaming data,
 let $D_t = \{(\bx_{it}, y_{it})\}_{i=1}^n$ be the data block
 collected at time $t.$ Consider the simple incremental algorithm
 which learns a function $\hat f_t$ from $D_t$ by a base algorithm. Then the target function
 for prediction uses the average of all the learnt functions available upon time $t,$
 that is,  $$\bar f_t (\bx)= \frac 1 t \sum_{i=1}^t \hat f_i(\bx).$$
 We see the target function is updated incrementally:
 $$\bar f_{t} = \frac {t-1} t \bar f_{t-1} + \frac 1 t \hat f_t.$$

Let $b$ and $v$ denotes
 the bias and variance of $\hat f_i$. The mean squared error of $\bar f_t$
 is $$\hbox{mse}(\bar f_t) = b^2 + \frac {v}{t}.$$ When $t$ becomes large, we
 see the variance term shrinks but the bias term does not.
 So the performance of the simple incremental learning method is dominated by the bias.
 Hence, a base algorithm with small bias is preferred.
 This intuition supports the use of bias corrected algorithms.

 In divide and conquer algorithm, data blocks do not arrive in time.
 Instead, they are artificially generated. But from a computational perspective,
 its idea is the same as the simple incremental learning.
 So bias correction is also preferred.

 \section{Simulation study}
 \label{sec:sim}

 In this section, we illustrate the use of bias corrected algorithms by a variety of simulations.
 We will first study their performance on a single data set. Then
 we verify their effectiveness in learning with block wise data
using both simulated data and real world data.

\subsection{Learning with a single data set}

Let $\bx\in\RR^{20}$ and all the 20 explanatory variables
are independent and normally distributed. Let the $i$th variable $x_i$ has variance $2^{-i}$. So
the $i$-th variable is also the $i$th principal component.
We consider two linear models where
$$ \begin{array}{lll}
\hbox{ Model 1:} & \bw_1=[1, 1, -1, -1, 0, \ldots, 0], & b_1=0;\\
\hbox{ Model 2:} & \bw_2=[0, \ldots, 0, 1, 1, -1, -1], & b_2=0.
\end{array}$$
Note that the first model depends on the first four principal components and
the second one depends on the last four principal components.
For both models, the noise level is set such that
the signal to noise ratio is 10.
We use the sample size $n=100.$

We first compare the bias, variance, and mean squared error of ridge regression (RR)
and bias corrected ridge regression (BCRR).
They are calculated by averaging the bias,
variance, and mean squared error
after repeating the experiment 1000 times.
The results are shown in Figure \ref{fig:bias1} and Figure \ref{fig:bias2}. They show that,
within a reasonable range of $\lambda$, the BCRR has smaller bias and larger variance.
Their minimum mean squared errors are comparable. BCRR can achieve the minimum mean squared error
with a larger regularization parameter. Since the linear system could be more stable with larger $\lambda$,
BCRR may be superior in very high dimensional situation where the covariance matrix is near singular.

\begin{figure}[htbp]
\centerline{\includegraphics[width=0.32\textwidth]{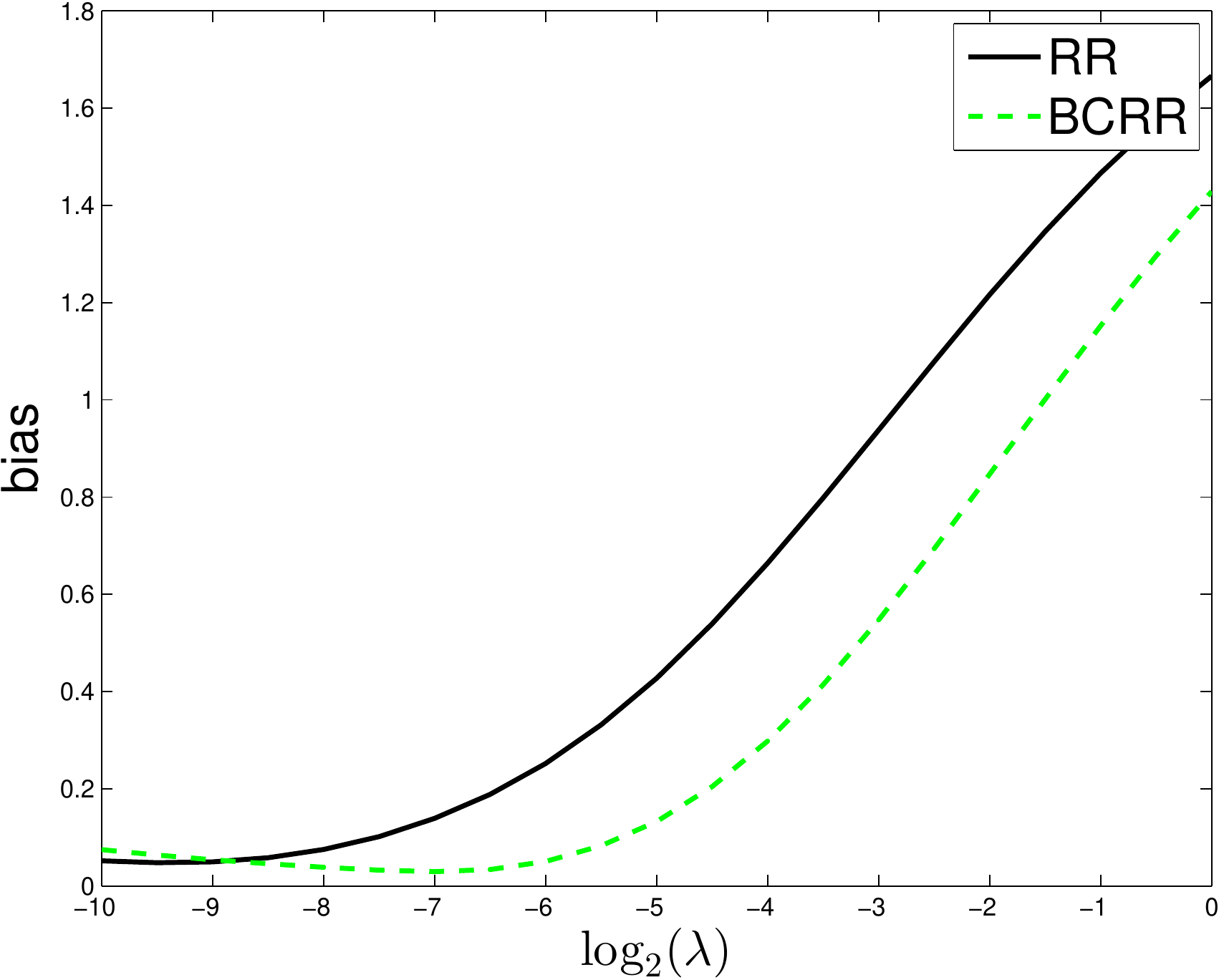}
\quad \includegraphics[width=0.32\textwidth]{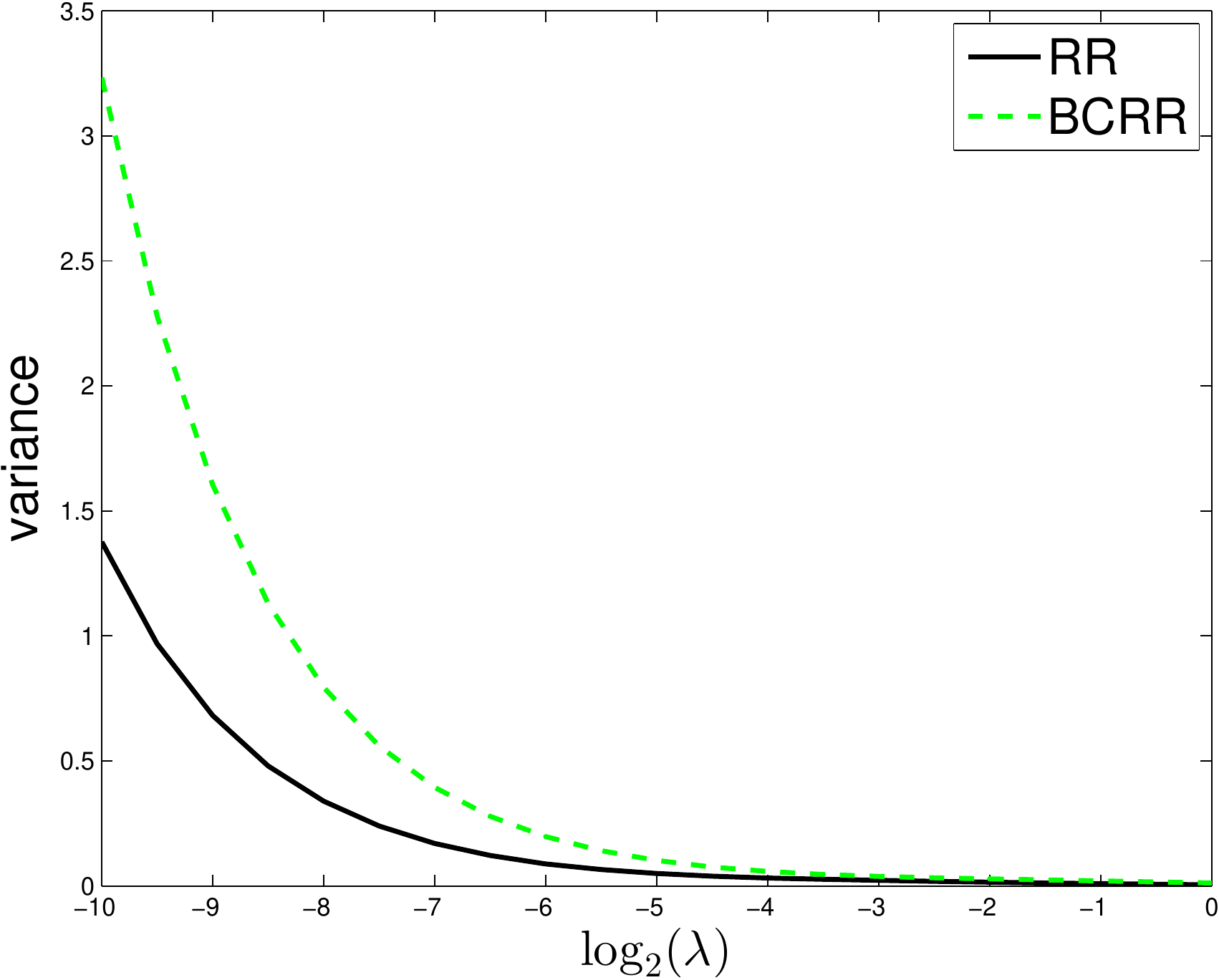}
\quad \includegraphics[width=0.32\textwidth]{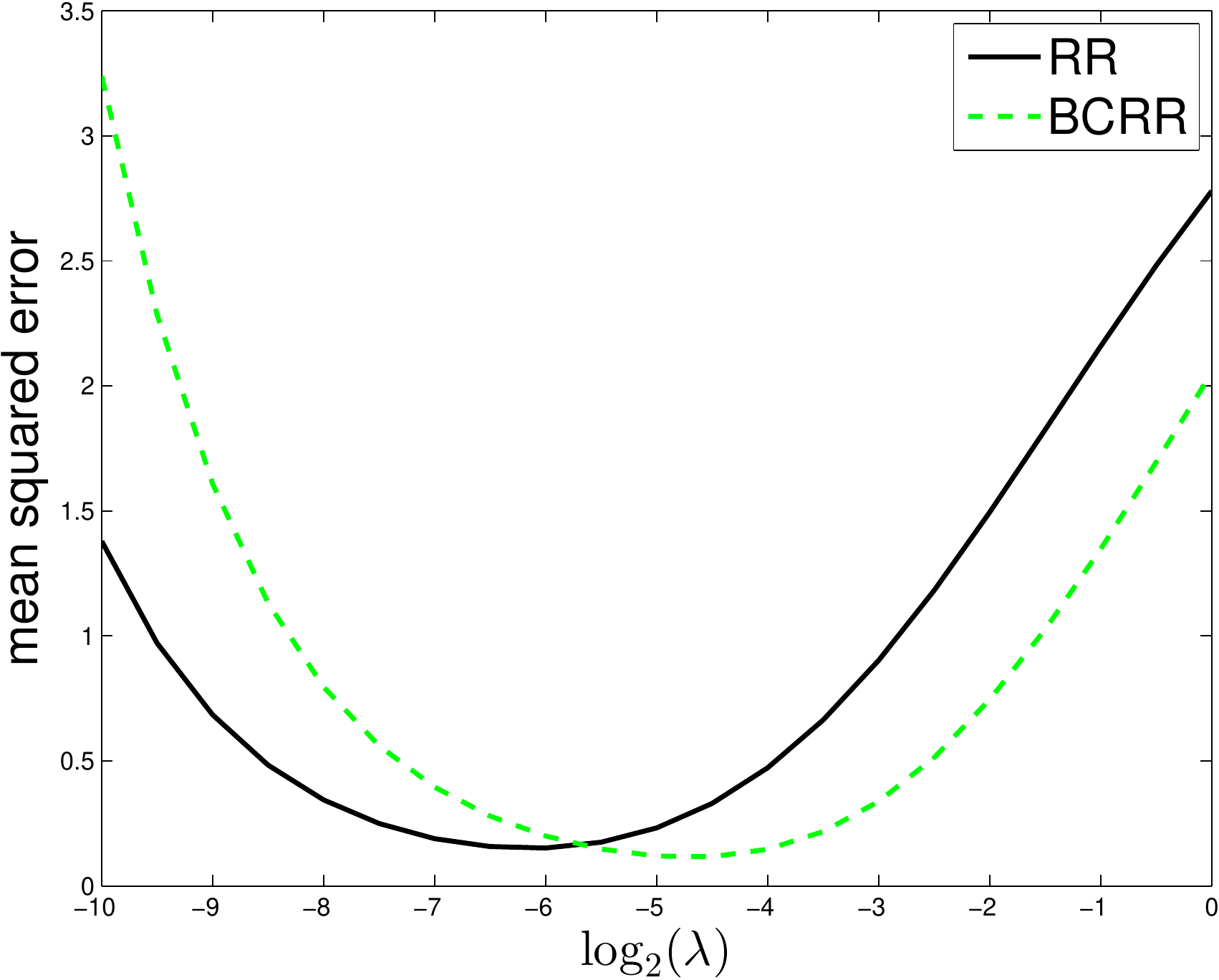}}
\caption{Model 1: Bias, variance, and mean squared error of
RR and BCRR.
\label{fig:bias1}}
\end{figure}

\begin{figure}[htbp]
\centerline{\includegraphics[width=0.32\textwidth]{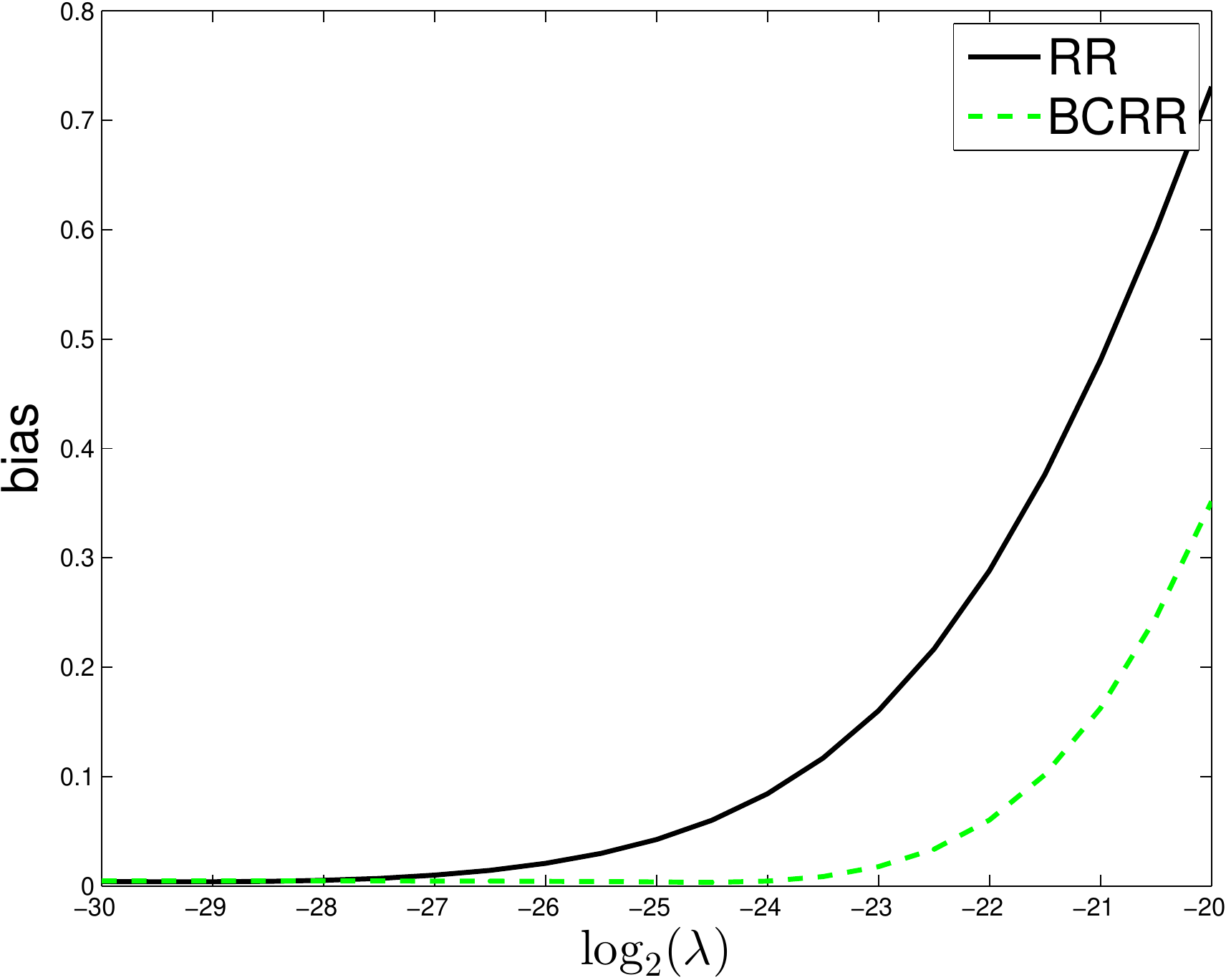}
\quad \includegraphics[width=0.32\textwidth]{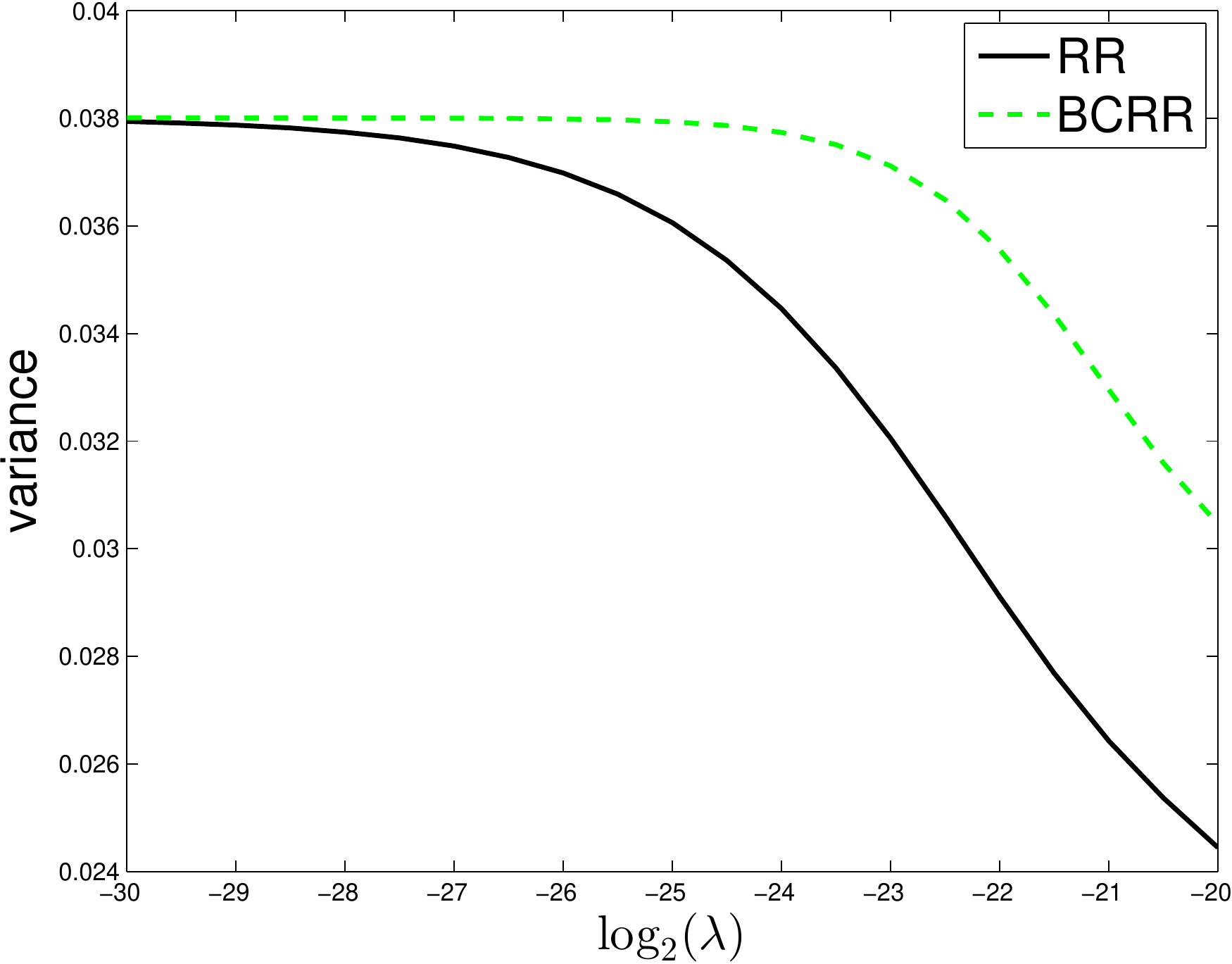}
\quad \includegraphics[width=0.32\textwidth]{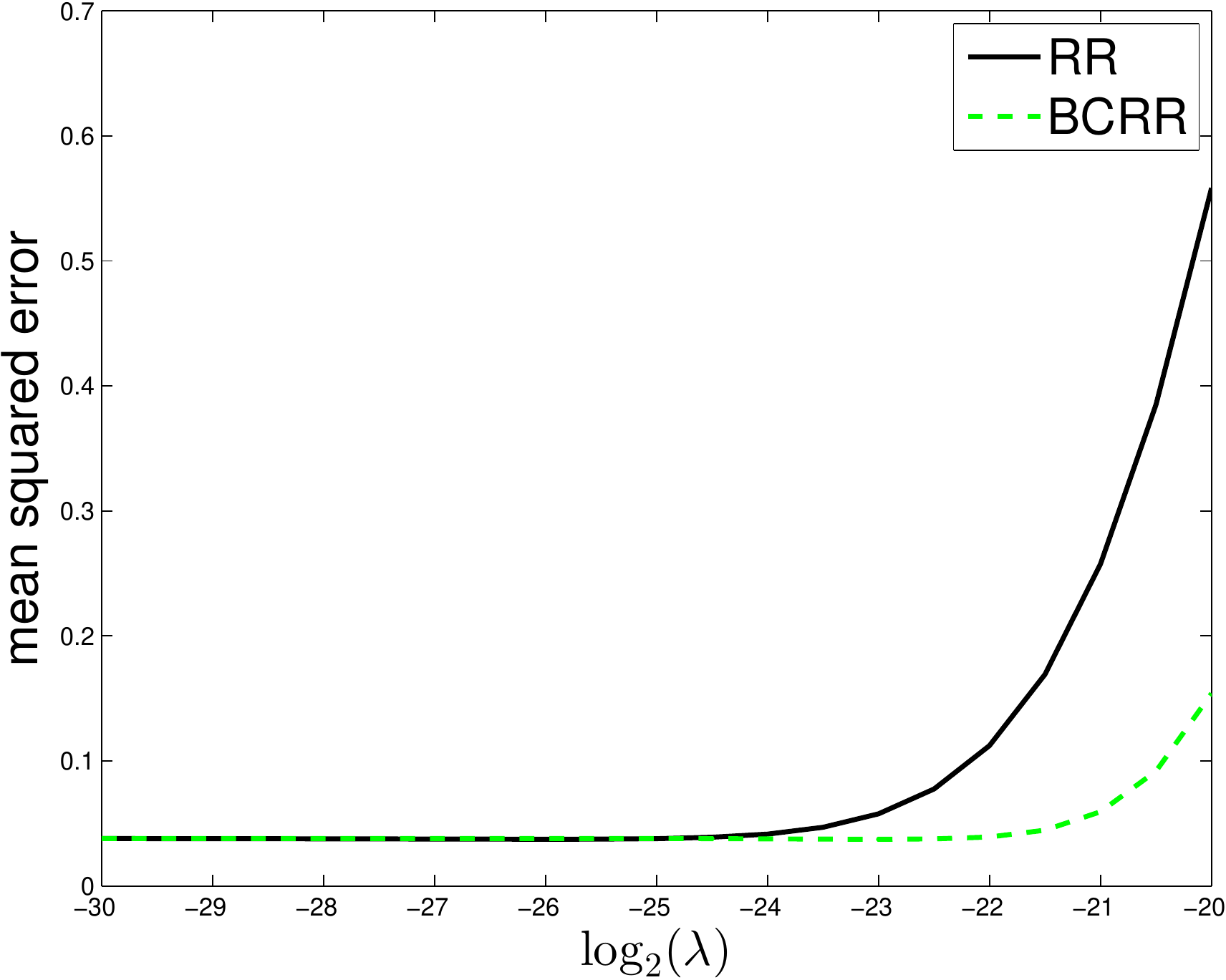}}
\caption{Model 2: Bias, variance, and mean squared error
of RR and BCRR.
\label{fig:bias2}}
\end{figure}

\subsection{Incremental learning with block wise streaming data}
\label{sec:sim1}

Now we compare the performance of RR and BARR
when they serve as base algorithms
 in incremental learning with block wise streaming data.
Again we use the two linear models above.
The experiment setting is also the same as before except that
20 data sets are generated in time to simulate the block streaming data.
When each data block comes,
10-fold cross validation is used to select the regularization parameter $\lambda$ for ridge regression.
Then both RR and BCRR use the same $\lambda$ to estimate the base model for the current data block.
The average of all available base models is then used for prediction.
As the prediction accuracy is the main concern, we use the mean squared prediction error to measure the learning performance.
After repeating the experiment 1000 times,
the mean squared prediction error is plotted in Figure \ref{fig:onlinemse}.
For Model 1, when $t$ is small, the performance of RR and BCRR are similar. As time goes on and more  data blocks come in,
the variance drops and the bias becomes the dominating term impacting the learning performance.
BCRR significantly outperforms RR.
For Model 2,  the model depends on the last four principal components.
With the optimal choice of $\lambda$,
$\frac \lambda{\lambda+\sigma_i}$ are close to 1 for $i=17, 18, 19, 20.$
We see bias reduction still helps but the improvement is not significant.

\begin{figure}[htbp]
\centerline{\includegraphics[width=0.45\textwidth]{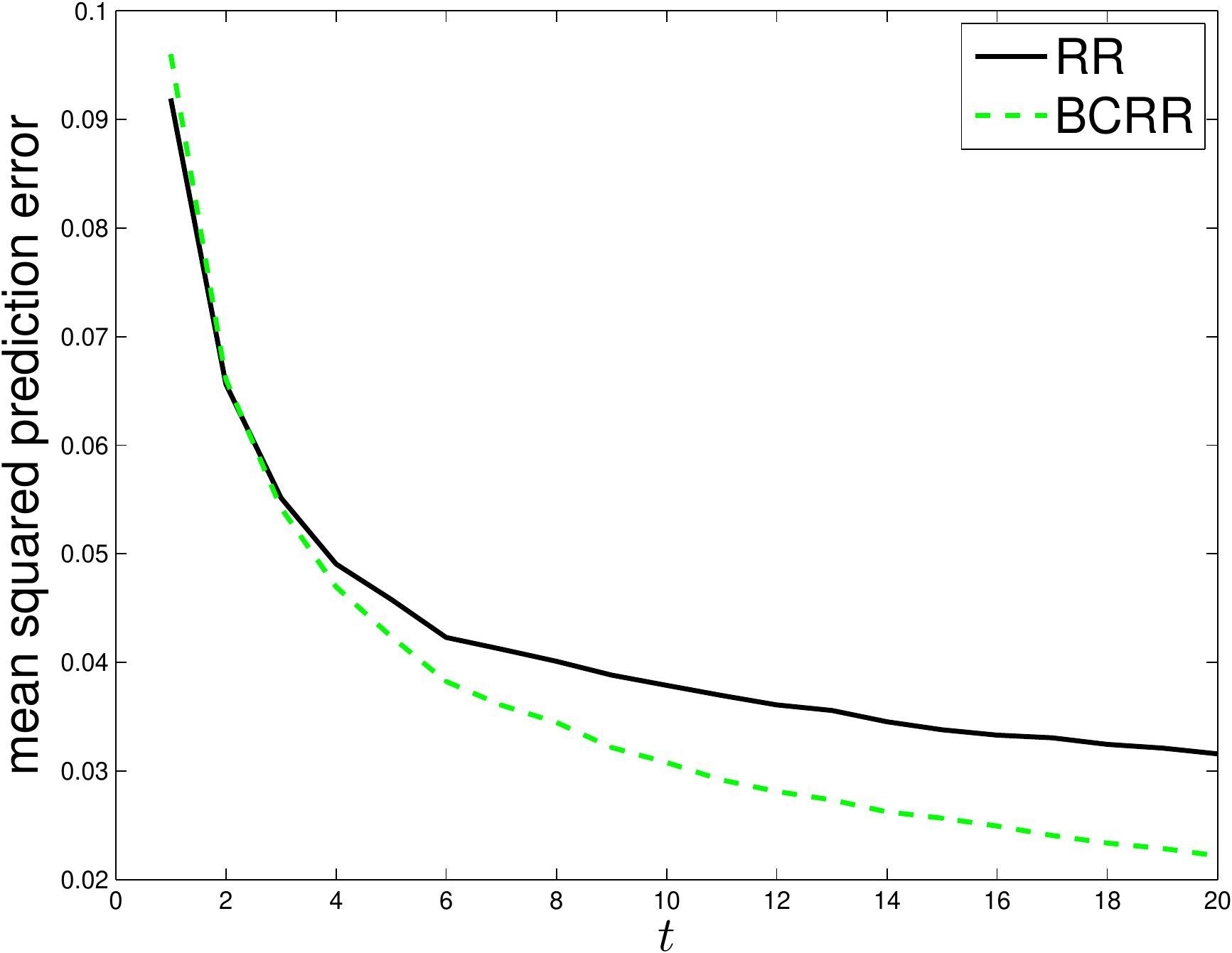}
\qquad\includegraphics[width=0.45\textwidth]{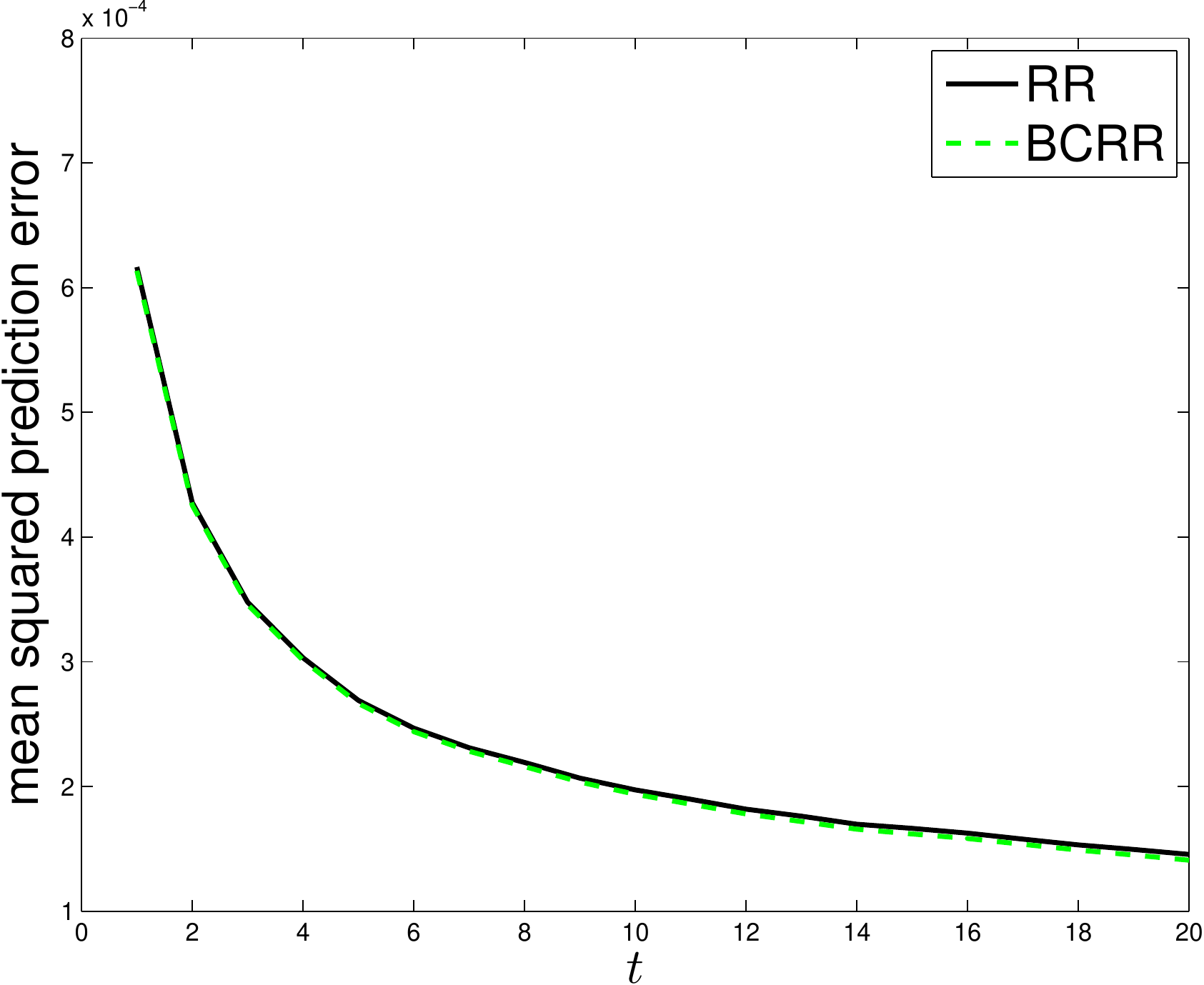}}
\centerline{\hfill (a) Model 1\hfill\hfill (b) Model 2\hfill}
\caption{Performance of incremental learning with  RR and BARR as the base algorithms. 
\label{fig:onlinemse}}
\end{figure}

\subsection{Real data}
\label{sec:real}

To further validate the effectiveness of BCRR,
two real-world data sets, the spam data and magic data,
 from UCI machine learning repository are
employed for empirical study in this research.

We following the procedure in \cite{he2011incremental}.
Each data set is  initially randomly sliced
into 20 chucks with identical size. At each run, one chuck is
randomly selected to be the testing data, and the remaining
19 chucks are assumed to come in sequentially.
Simulation results for each data sets are averaged across 20 runs.
As both problems are classification problems,
both the mean squared prediction error and classification accuracy are used
to measure the learning performance.

For spam data, the results are shown in Figure \ref{fig:spam}. For magic data,
the results are shown in Figure \ref{fig:magic}.
For both data sets, BCRR is more effective than ridge regression.
Note small mean squared prediction error usually implies small classification error.
However, such a relationship is not exact. That is why the classification errors fluctuate
for magic data, although the mean squared error decreases stably.

\begin{figure}[ht]
\centerline{\includegraphics[width=0.45\textwidth]{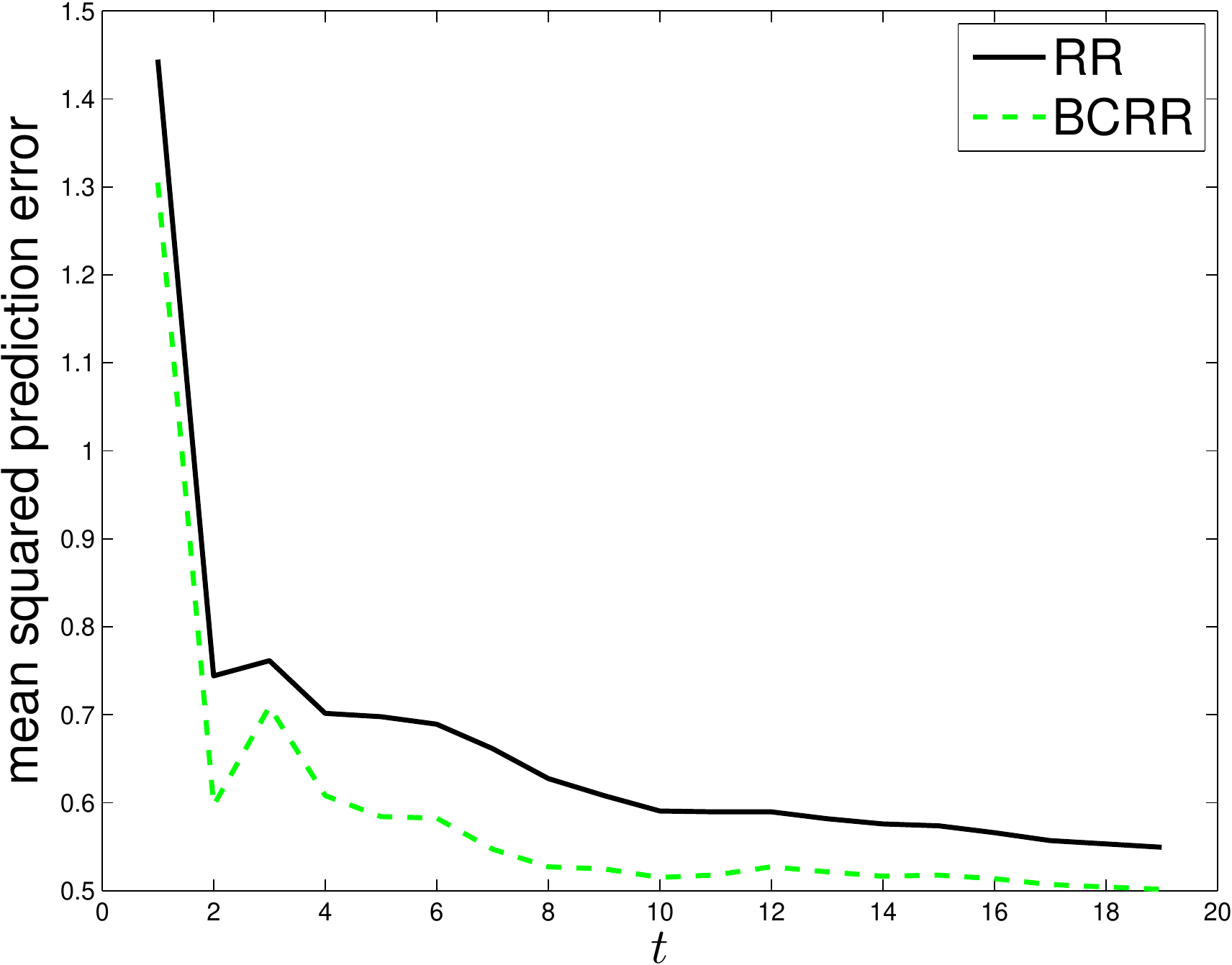}
\qquad \includegraphics[width=0.45\textwidth]{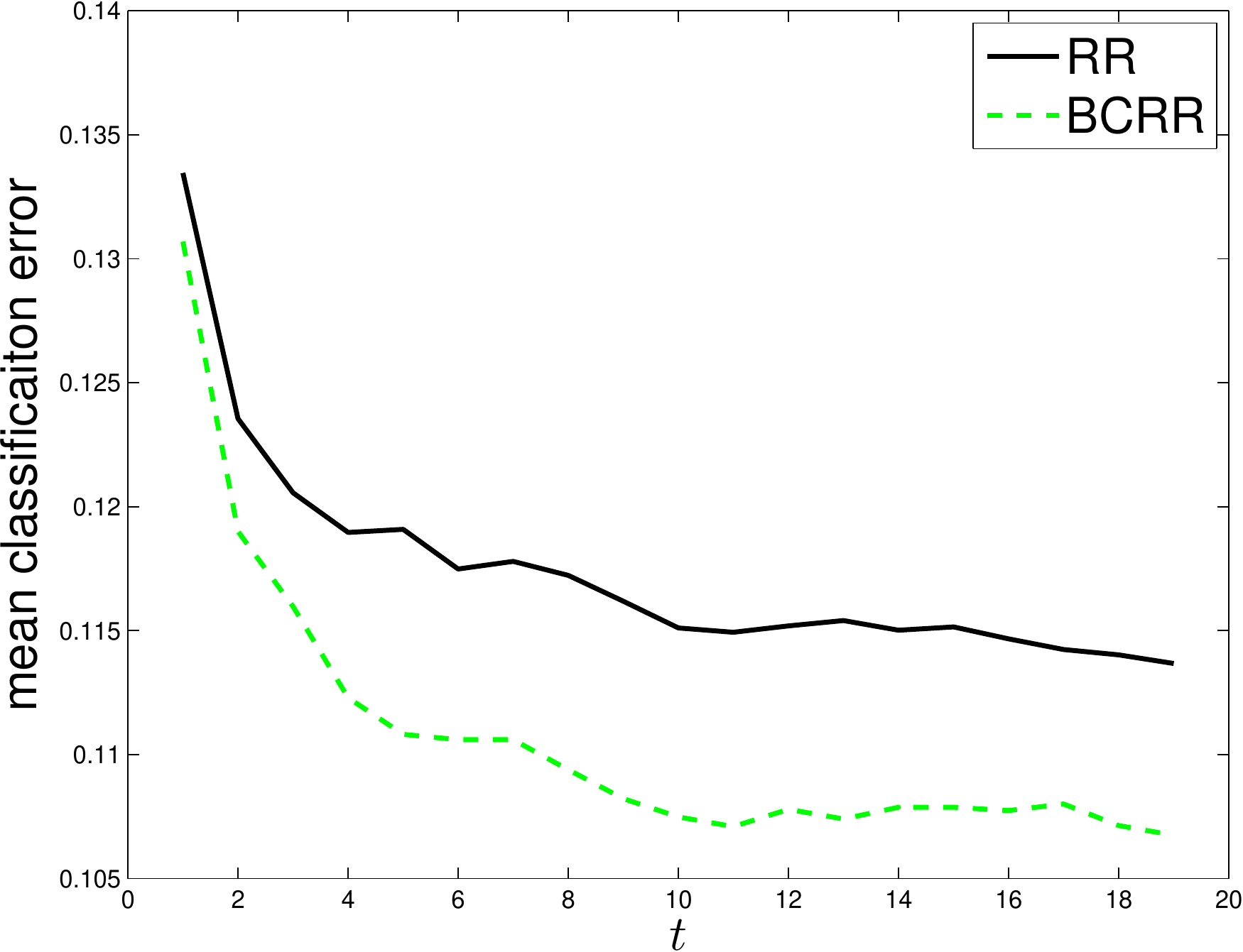}}
\caption{Spam data: mean squared prediction error and mean classification error on  test set.
\label{fig:spam}}
\end{figure}
\begin{figure}[htbp]
\centerline{\includegraphics[width=0.45\textwidth]{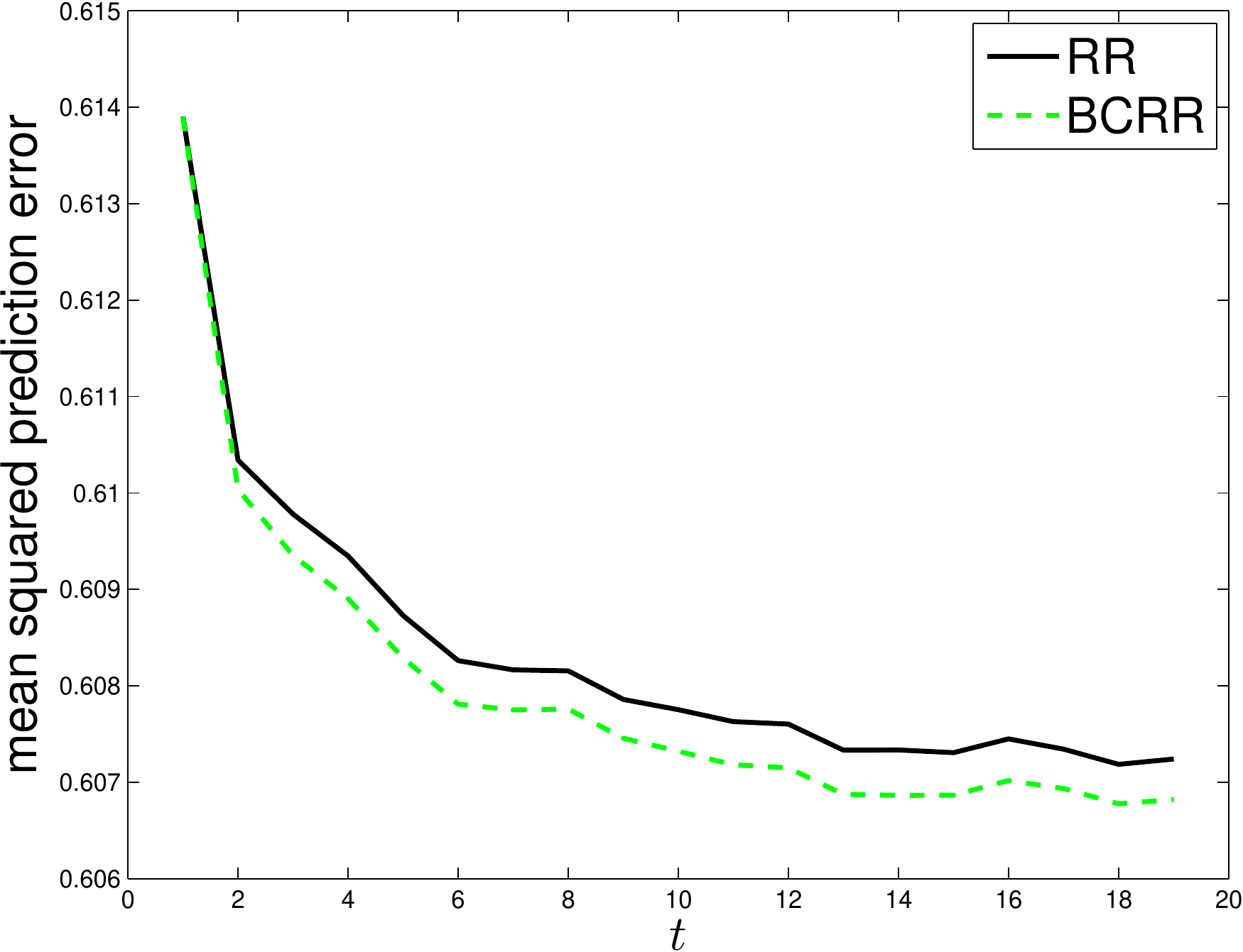}
\qquad \includegraphics[width=0.45\textwidth]{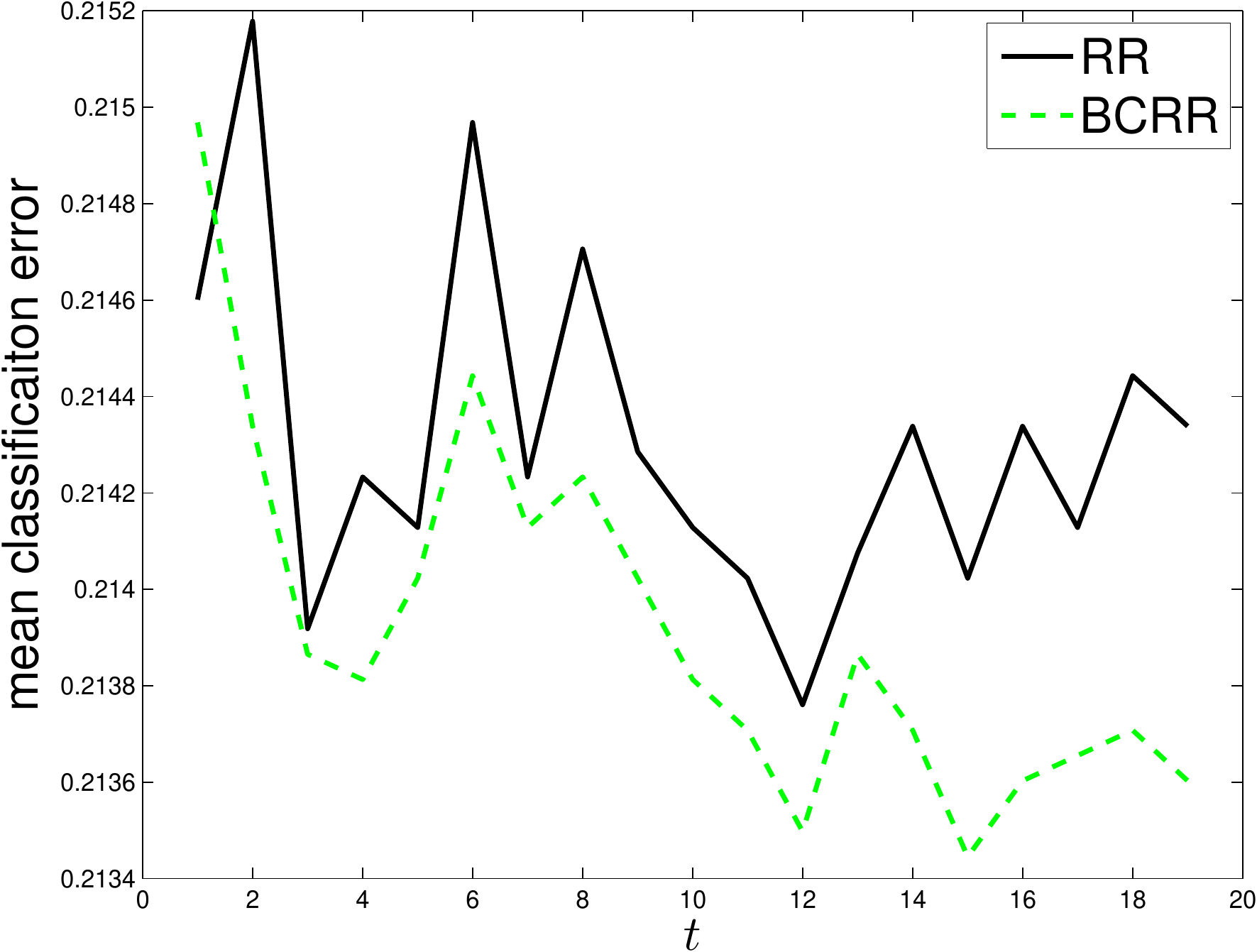}}
\caption{Magic data: the mean squared  error and mean classification error on test set.
\label{fig:magic}}
\end{figure}

\subsection{Kernel models}

Kernel methods are effective to handle the data
that contain strong nonlinear structure.
We applied the regularization kernel network (RKN) and
the bias corrected regularization kernel network (BCRKN)
to the handwritten digits recognition and compare their performance.

The MNIST handwritten digits data \cite{LeCun} 
is believed to have strong nonlinear structures
and has been a benchmark to
test the performance of nonlinear algorithms.
It contains 60,000 images of handwritten digits {0, 1, 2, $\ldots$ , 9}
as training data and 10,000 images as test data.
Each image consists of $p = 28 \times  28 = 784$
 gray-scale pixel intensities.
 The digits 2, 3, 5, 8 are considered to be
 most difficult to recognize and
 nonlinear models are able to help.

In our analysis, we consider the classification of  digit 3 versus digit 8.
Note our purpose is to compare the
performance of RKN and BCRKN and verify the
effectiveness of bias correction, not to
find the best classifier. So we select the
Gaussian kernel, set bandwidth parameter
as the median of the pairwise distance  between the images
and do not optimize it in the learning process.
The training data is sliced into 50
chucks with identical size to mimic the data stream.
The mean squared error and classification error
on the test data is reported in Figure \ref{fig:dig38k}.
It is clear that bias correction helps to improve learning performance.
\begin{figure}[htbp]
\centerline{\includegraphics[width=0.45\textwidth]{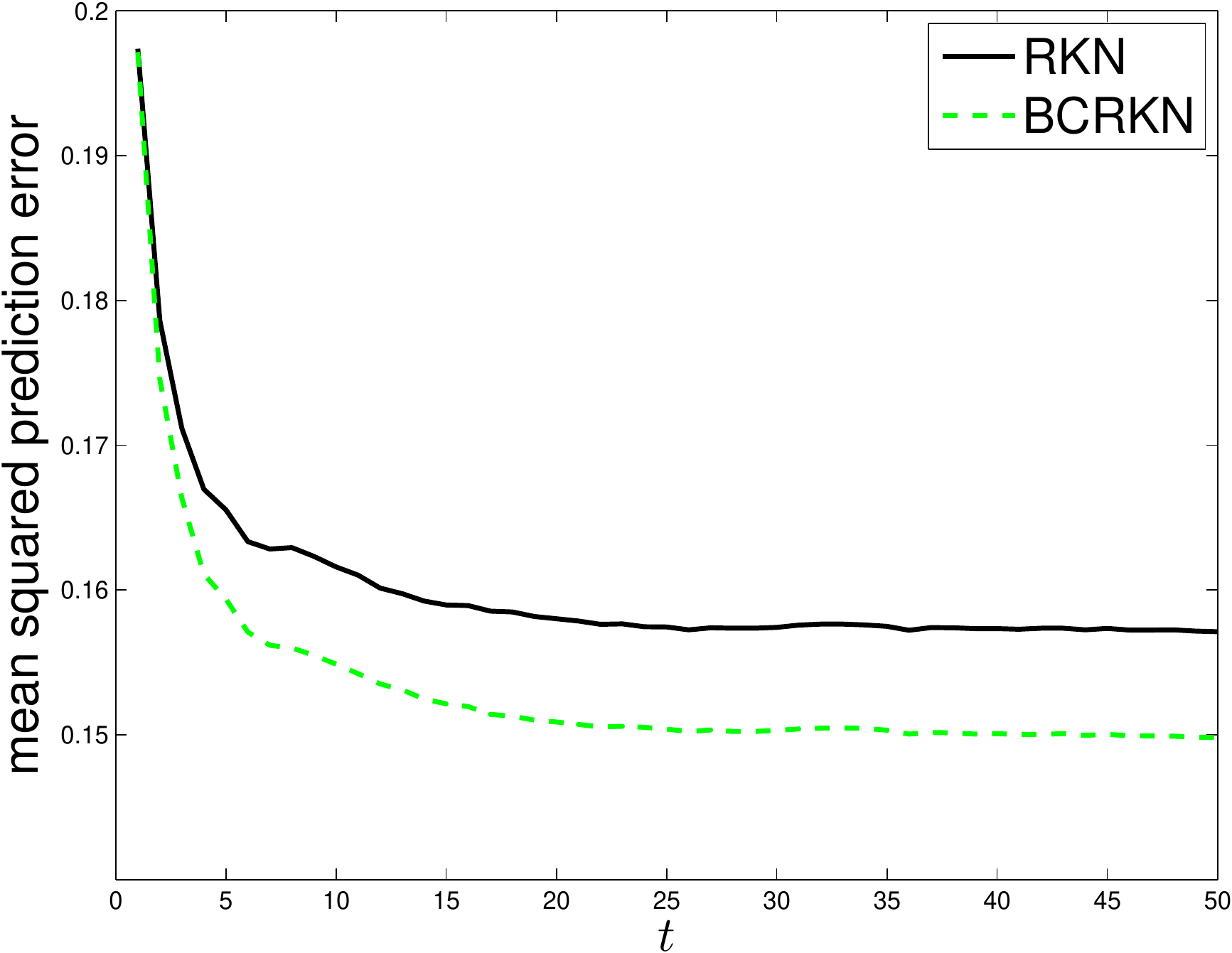}
\qquad \includegraphics[width=0.45\textwidth]{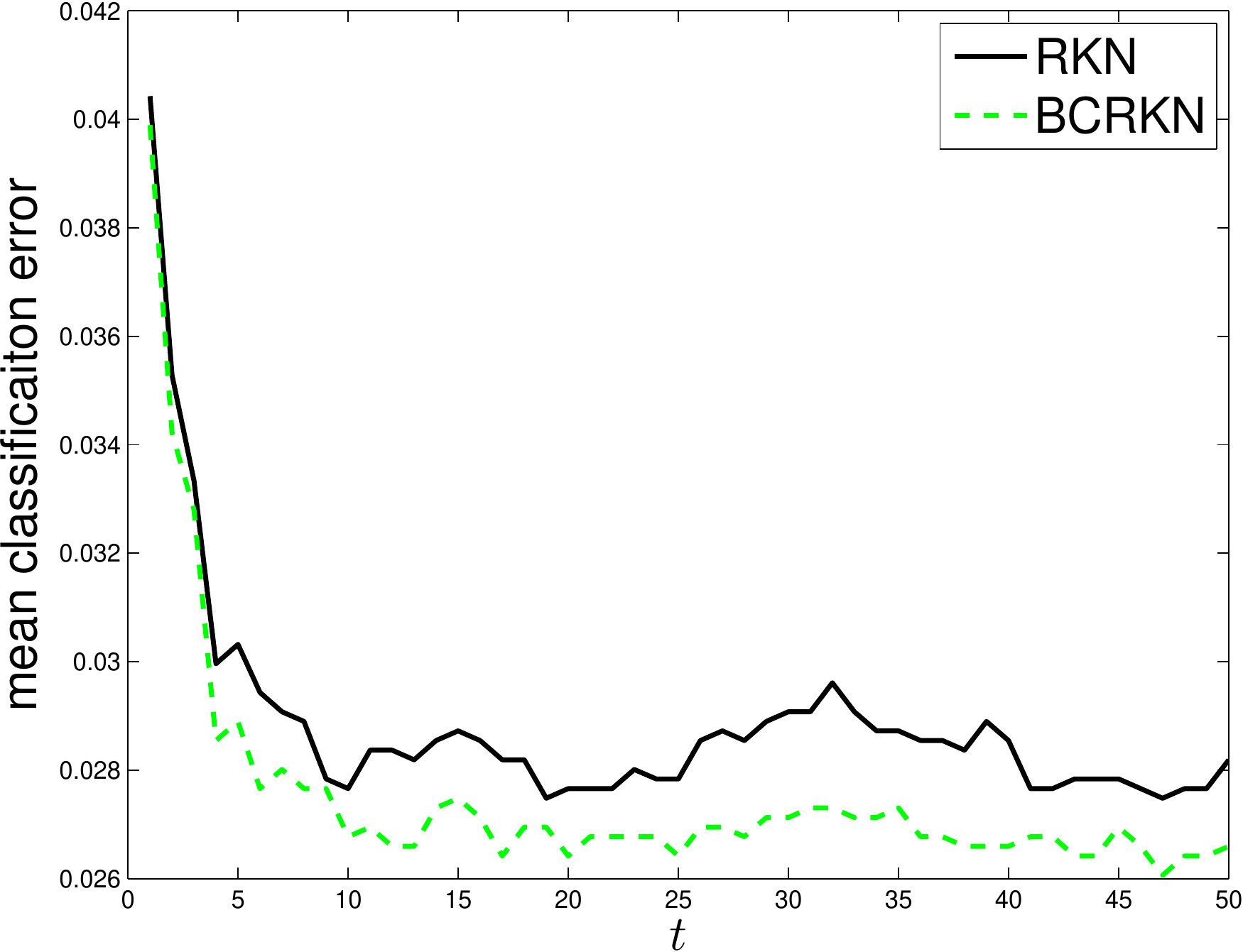}}
\caption{Classification of digit 3 versus digit 8 in the MNIST data.
\label{fig:dig38k}}
\end{figure}

 \section{Conclusions and discussions}
 \label{sec:conclusion}

 We proposed two new regularized regression algorithms
 that are derived by correcting the bias of 
 ridge regression and regularization kernel network.
 The bias corrected algorithms are shown to 
 have smaller bias, while, in general, they
 have slightly larger variance. 
 When applied to a single data set,
 the bias corrected algorithms have comparable performance 
 to the uncorrected algorithms.
 But the smaller bias favors their use 
 in learning with block wise data, such as
 incremental learning with block streaming data 
 or the divide and conquer algorithm.

Bias correction is found less effective
when the true model depends \emph{only} on
the principal components with very small eigenvalues.
Fortunately, this is not common in real applications.
When the  true model does depend heavily on the
principal components with small eigenvalues,
the bias correction performs similar to
the uncorrected algorithm, not worse.
Furthermore, the bias corrected algorithms may be more
computationally stable because it allows using
larger regularization parameter to achieve the same
learning performance.
Therefore, the use of bias corrected algorithms
in practice is safe and preferable.

It is natural to consider the possibility and necessity 
 of higher order bias correction. We remark that 
 defining higher order bias corrected estimators is possible.
 But it is unnecessary because higher order bias correction is ineffective.
 To illustrate this, consider the ridge regression. 
 We know from Section \ref{sec:barr}
 that the asymptotic bias of $\hat\bw^\sharp$ is
 $-\lambda^2(\lambda I + \Sigma)^{-2} \bw.$
 We can subtract an estimate of this asymptotic bias
 to obtain a second order bias corrected ridge regression
 estimator
$$\hat \bw ^\sharp_{2} = \hat \bw^\sharp
+ \lambda^2(\lambda I + \hat\Sigma)^{-2}\hat\bw.$$
This estimator will have an asymptotic bias
$-\lambda^3(\lambda I + \Sigma)^{-3} \bw,$
which can be used to define the
third order bias corrected estimator.
Actually, for any $k>2$, we can define
bias corrected estimators of order $k$ iteratively, 
$$\hat \bw^\sharp_k = \hat\bw^\sharp_{k-1} +
\lambda^k(\lambda I +\hat\Sigma)^{-k} \hat \bw,$$
which has an asymptotic bias
$-\lambda^{k+1}(\lambda I +\Sigma)^{-(k+1)} \bw.$
We can also prove that the asymptotic bias
decreases as $k$ increases.
However, simulations show higher order
bias correction is ineffective.
As an illustration, we applied the
bias corrected estimators of order 1, 2, and 3
to the streaming data generated for
Model 1 in Section \ref{sec:sim1}.
We see from Figure \ref{fig:model1h} that
the performance of higher order bias correction
is even worse.
\begin{figure}[htbp]
\centerline{\includegraphics[width=0.6\textwidth]{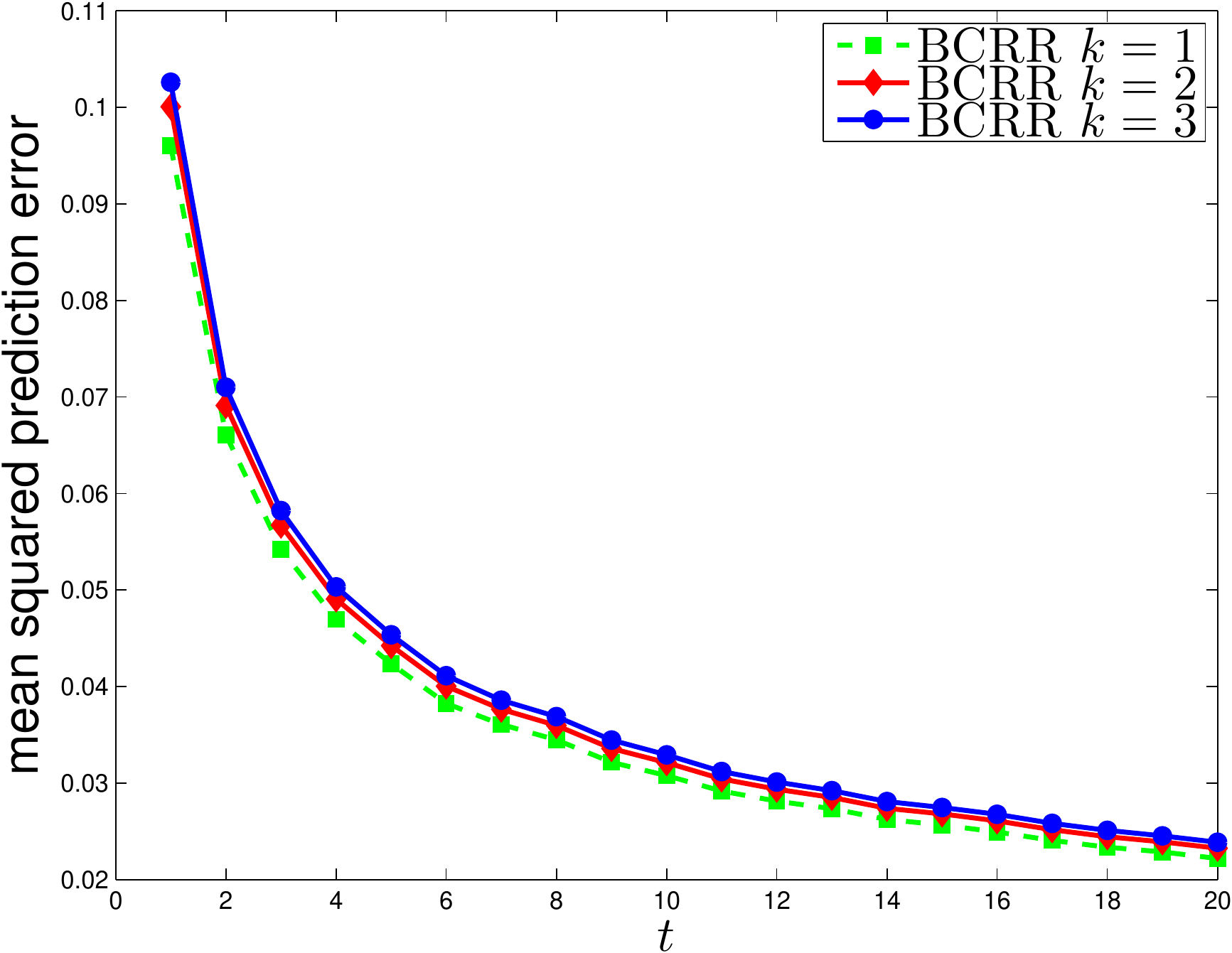}}
\caption{Model 1: Performance of BCRR of order 1, 2, and 3.
\label{fig:model1h}}
\end{figure}

\bibliographystyle{abbrv}
\bibliography{barr}

\begin{thebibliography}{10}

\bibitem{abramson1982bandwidth}
I.~S. Abramson.
\newblock On bandwidth variation in kernel estimates - a square root law.
\newblock {\em The Annals of Statistics}, 10(4):1217--1223, 1982.

\bibitem{agrawal2014dynamic}
S.~Agrawal, Z.~Wang, and Y.~Ye.
\newblock A dynamic near-optimal algorithm for online linear programming.
\newblock {\em Operations Research}, 62(4):876--890, 2014.

\bibitem{Aron}
N.~Aronszajn.
\newblock Theory of reproducing kernels.
\newblock {\em Trans. Amer. Math. Soc.}, 68:337--404, 1950.

\bibitem{bauer2007regularization}
F.~Bauer, S.~Pereverzev, and L.~Rosasco.
\newblock On regularization algorithms in learning theory.
\newblock {\em Journal of complexity}, 23(1):52--72, 2007.

\bibitem{bousquet2002stability}
O.~Bousquet and A.~Elisseeff.
\newblock Stability and generalization.
\newblock {\em Journal of Machine Learning Research}, 2:499--526, 2002.

\bibitem{breiman1977variable}
L.~Breiman, W.~Meisel, and E.~Purcell.
\newblock Variable kernel estimates of multivariate densities.
\newblock {\em Technometrics}, 19(2):135--144, 1977.

\bibitem{buhlmann2013statistical}
P.~B{\"u}hlmann.
\newblock Statistical significance in high-dimensional linear models.
\newblock {\em Bernoulli}, 19(4):1212--1242, 2013.

\bibitem{caponnetto2007optimal}
A.~Caponnetto and E.~De~Vito.
\newblock Optimal rates for the regularized least-squares algorithm.
\newblock {\em Foundations of Computational Mathematics}, 7(3):331--368, 2007.

\bibitem{chavent2014sliced}
M.~Chavent, S.~Girard, V.~Kuentz-Simonet, B.~Liquet, T.~M.~N. Nguyen, and
  J.~Saracco.
\newblock A sliced inverse regression approach for data stream.
\newblock {\em Computational Statistics}, 29(5):1129--1152, 2014.

\bibitem{chung2011likelihood}
Y.~Chung and B.~G. Lindsay.
\newblock A likelihood-tuned density estimator via a nonparametric mixture
  model.
\newblock In {\em Nonparametric Statistics and Mixture Models: A Festschrift in
  Honor of Thomas P. Hettmansperger}, pages 69--89. World Scientific
  Publishing, 2011.

\bibitem{cribari2002nearly}
F.~Cribari-Neto and K.~L. Vasconcellos.
\newblock Nearly unbiased maximum likelihood estimation for the beta
  distribution.
\newblock {\em Journal of Statistical Computation and Simulation},
  72(2):107--118, 2002.

\bibitem{devito2005model}
E.~De~Vito, A.~Caponnetto, and L.~Rosasco.
\newblock Model selection for regularized least-squares algorithm in learning
  theory.
\newblock {\em Foundations of Computational Mathematics}, 5(1):59--85, 2005.

\bibitem{EPP}
T.~Evgeniou, M.~Pontil, and T.~Poggio.
\newblock Regularization networks and support vector machines.
\newblock {\em Adv. Comput. Math.}, 13:1--50, 2000.

\bibitem{firth1993bias}
D.~Firth.
\newblock Bias reduction of maximum likelihood estimates.
\newblock {\em Biometrika}, 80(1):27--38, 1993.

\bibitem{hall1990bias}
P.~Hall.
\newblock On the bias of variable bandwidth curve estimators.
\newblock {\em Biometrika}, 77(3):529--535, 1990.

\bibitem{he2011incremental}
H.~He, S.~Chen, K.~Li, and X.~Xu.
\newblock Incremental learning from stream data.
\newblock {\em Neural Networks, IEEE Transactions on}, 22(12):1901--1914, 2011.

\bibitem{hoerl1970ridge2}
A.~E. Hoerl and R.~W. Kennard.
\newblock Ridge regression: applications to nonorthogonal problems.
\newblock {\em Technometrics}, 12(1):69--82, 1970.

\bibitem{hoerl1970ridge1}
A.~E. Hoerl and R.~W. Kennard.
\newblock Ridge regression: Biased estimation for nonorthogonal problems.
\newblock {\em Technometrics}, 12(1):55--67, 1970.

\bibitem{javanmard2014confidence}
A.~Javanmard and A.~Montanari.
\newblock Confidence intervals and hypothesis testing for high-dimensional
  regression.
\newblock {\em Journal of Machine Learning Research}, 15(1):2869--2909, 2014.

\bibitem{jones1995simple}
M.~Jones, O.~Linton, and J.~Nielsen.
\newblock A simple bias reduction method for density estimation.
\newblock {\em Biometrika}, 82(2):327--338, 1995.

\bibitem{LeCun}
Y.~LeCun.
\newblock The {MINIST} database of handwritten digits.
\newblock \url{http://yann.lecun.com/exdb/mnist/}, accessed in October 2015.

\bibitem{mclachlan1980note}
G.~McLachlan.
\newblock A note on bias correction in maximum likelihood estimation with
  logistic discrimination.
\newblock {\em Technometrics}, 22(4):621--627, 1980.

\bibitem{obenchain1977classical}
R.~L. Obenchain.
\newblock Classical {F}-tests and confidence regions for ridge regression.
\newblock {\em Technometrics}, 19(4):429--439, 1977.

\bibitem{park1997simple}
B.~Park, W.~Kim, D.~Ruppert, M.~Jones, D.~Signorini, and R.~Kohn.
\newblock Simple transformation techniques for improved non-parametric
  regression.
\newblock {\em Scandinavian journal of statistics}, pages 145--163, 1997.

\bibitem{quenouille1956notes}
M.~H. Quenouille.
\newblock Notes on bias in estimation.
\newblock {\em Biometrika}, 43(3/4):353--360, 1956.

\bibitem{schaefer1983bias}
R.~L. Schaefer.
\newblock Bias correction in maximum likelihood logistic regression.
\newblock {\em Statistics in Medicine}, 2(1):71--78, 1983.

\bibitem{smale2007learning}
S.~Smale and D.~X. Zhou.
\newblock Learning theory estimates via integral operators and their
  approximations.
\newblock {\em Constructive Approximation}, 26:153--172, 2007.

\bibitem{steinwart2009optimal}
I.~Steinwart, D.~R. Hush, and C.~Scovel.
\newblock Optimal rates for regularized least squares regression.
\newblock In {\em COLT}, 2009.

\bibitem{sun2009application}
H.~Sun and Q.~Wu.
\newblock Application of integral operator for regularized least-square
  regression.
\newblock {\em Mathematical and Computer Modelling}, 49(1):276--285, 2009.

\bibitem{sun2009note}
H.~Sun and Q.~Wu.
\newblock A note on application of integral operator in learning theory.
\newblock {\em Applied and Computational Harmonic Analysis}, 26(3):416--421,
  2009.

\bibitem{Tib}
R.~Tibshirani.
\newblock Regression shrinkage and selection via the lasso.
\newblock {\em J. Roy. Statist. Soc. Ser. B}, 58(1):267--288, 1996.

\bibitem{wahba1990spline}
G.~Wahba.
\newblock {\em Spline models for observational data}.
\newblock SIAM, 1990.

\bibitem{wang2014close}
Z.~Wang, S.~Deng, and Y.~Ye.
\newblock Close the gaps: A learning-while-doing algorithm for single-product
  revenue management problems.
\newblock {\em Operations Research}, 62(2):318--331, 2014.

\bibitem{wu2006learning}
Q.~Wu, Y.~Ying, and D.-X. Zhou.
\newblock Learning rates of least-square regularized regression.
\newblock {\em Foundations of Computational Mathematics}, 6(2):171--192, 2006.

\bibitem{yao2012bias}
W.~Yao.
\newblock A bias corrected nonparametric regression estimator.
\newblock {\em Statistics \& Probability Letters}, 82(2):274--282, 2012.

\bibitem{zhang2014confidence}
C.-H. Zhang and S.~S. Zhang.
\newblock Confidence intervals for low dimensional parameters in high
  dimensional linear models.
\newblock {\em Journal of the Royal Statistical Society: Series B (Statistical
  Methodology)}, 76(1):217--242, 2014.

\bibitem{zhang2003leave}
T.~Zhang.
\newblock Leave-one-out bounds for kernel methods.
\newblock {\em Neural Computation}, 15(6):1397--1437, 2003.

\bibitem{zhang2013divide}
Y.~Zhang, J.~Duchi, and M.~Wainwright.
\newblock Divide and conquer kernel ridge regression.
\newblock In {\em Conference on Learning Theory}, pages 592--617, 2013.

\bibitem{ZH}
H.~Zou and T.~Hastie.
\newblock Regularization and variable selection via the elastic net.
\newblock {\em J. R. Stat. Soc. Ser. B Stat. Methodol.}, 67(2):301--320, 2005.

\end{thebibliography}

\end{document}